\title{DFedADMM: Dual Constraints Controlled Model Inconsistency\\ for Decentralized Federated Learning}
\author {    
     Qinglun Li \textsuperscript{\rm 1},
     Li Shen \textsuperscript{\rm 2}\thanks{Corresponding authors.}, 
     Guanghao Li \textsuperscript{\rm 1}, 
     Quanjun Yin \textsuperscript{\rm 1*}, 
     Dacheng Tao \textsuperscript{\rm 3}\\
}
\begin{document}
	
   \maketitle
	
\begin{abstract}
To address the communication burden issues associated with federated learning (FL), decentralized federated learning (DFL) discards the central server and establishes a decentralized communication network, where each client communicates only with neighboring clients. However, existing DFL methods still suffer from two major challenges: local inconsistency and local heterogeneous overfitting, which have not been fundamentally addressed by existing DFL methods. To tackle these issues, we propose novel DFL algorithms, DFedADMM and its enhanced version DFedADMM-SAM, to enhance the performance of DFL. The DFedADMM algorithm employs primal-dual optimization (ADMM) by utilizing dual variables to control the model inconsistency raised from the decentralized heterogeneous data distributions. The DFedADMM-SAM algorithm further improves on DFedADMM by employing a Sharpness-Aware Minimization (SAM) optimizer, which uses gradient perturbations to generate locally flat models and searches for models with uniformly low loss values to mitigate local heterogeneous overfitting. Theoretically, we derive convergence rates of $\small \mathcal{O}\Big(\frac{1}{\sqrt{KT}}+\frac{1}{KT(1-\psi)^2}\Big)$ and $\small \mathcal{O}\Big(\frac{1}{\sqrt{KT}}+\frac{1}{KT(1-\psi)^2}+ \frac{1}{T^{3/2}K^{1/2}}\Big)$ in the non-convex setting for DFedADMM and DFedADMM-SAM, respectively, where $1 - \psi$ represents the spectral gap of the gossip matrix. Empirically, extensive experiments on MNIST, CIFAR10 and CIFAR100 datesets demonstrate that our algorithms exhibit superior performance in terms of both generalization and convergence speed compared to existing state-of-the-art (SOTA) optimizers in DFL.
	\end{abstract}

           \section{Introduction}\label{intro}

    Decentralized Federated Learning (DFL) is a novel distributed learning framework by adopting the principles of data sharing minimization and decentralized model aggregation without relying on a centralized server \cite{beltran2022decentralized,Sun2022Decentralized}. Specifically, DFL reduces the communication burden on servers and eliminates the risk of a single point of failure that can occur in centralized FL due to the absence of a central server \cite{gabrielli2023survey, beltran2022decentralized}. Furthermore, in the decentralized communication approach, each client communicates with its neighbors, which further minimizes the risk of privacy leakage \cite{wang2021field, li2020federated}.
	
     Nonetheless, we acknowledge that the deterioration of DFL performance is attributed to local inconsistency and the occurrence of local heterogeneous overfitting. Particularly in conventional local SGD-based DFL approaches, such as DFedAvg, the incorporation of persistent biases through local updates results in eventual divergence towards local solution discrepancies. These locally overfitted and inconsistent solutions can consequently downgrade the acquired global model parameters to an average of the locally discordant solutions. Furthermore, the phenomenon of local heterogeneous overfitting can induce pronounced client drifts, where the client gravitates toward its distinct local optimum due to heterogeneous data and local updates. This phenomenon gradually exerts an impact on global convergence, diminishing both the pace of convergence and the overall generalization capacity of the model in DFL.
	
     To address the local inconsistency during the training process and avoid client drift caused by local over-fitting, we propose the DFedADMM algorithm. Specifically, the DFedADMM algorithm utilizes alternating updates of primal and dual variables at each client to search for the saddle points of an augmented Lagrangian function. The quadratic term of the augmented Lagrangian function acts as a penalty term, ensuring that clients do not deviate too far from their initial points during each round of communication optimization. This strengthens the consistency during the training process. 
     In addition, the dual variables can capture the biases of each client during the local optimization process and make corrections after client optimization is completed. This further enhances local consistency.
     Furthermore, to address local over-fitting, we introduce gradient perturbation from sharpness-aware minimization (SAM) \cite{foret2020sharpness} to generate local flat models, yielding the enhanced version of DFedADMM, i.e., DFedADMM-SAM. This has two advantages: Firstly, gradient perturbation effectively prevents local over-fitting. Secondly, it has been suggested \cite{zhong2022improving} that generating local flat models at each client may result in a relatively flat landscape of the aggregated global model, leading to improved generalization capability.
	
     Theoretically, the convergence speed of the proposed algorithm, DFedADMM and DFedADMM-SAM, in non-convex and smooth environments has been demonstrated to be $\small\mathcal{O}\left(\frac{1}{\sqrt{KT}}+\frac{1}{KT(1-\psi)^2}\right)$ and $\small\mathcal{O}\Big(\frac{1}{\sqrt{KT}}+\frac{1}{KT(1-\psi)^2}+ \frac{1}{T^{3/2}K^{1/2}}\Big)$, respectively. We also show that better connectivity in the communication topology leads to tighter upper bounds. Empirically, we conduct extensive experiments on the MNIST, CIFAR10 and CIFAR100 datasets. The results show that our methods outperform existing baseline methods such as DPSGD, DFedAvg, DFedAvgM and DFedSAM in terms of both the generalization performance and convergence speed. In addition, our method achieves comparable or even better performance compared to some existing centralized methods such as FedAvg and FedSAM.
	
     In summary, our main contributions are four-fold:	
\begin{itemize}
	\item We propose the DFedADMM algorithm, which  enhances consistency during training by utilizing the dual variable to control the local inconsistency raised from the local heterogeneous data distribution. 
        \item  We propose the DFedADMM-SAM which further unitizes SAM to alleviate the local overfitting issue by introducing a gradient perturbation based on DFedADMM and generating flat planes for better aggregation.
	\item We derive theoretical analysis for DFedADMM and DFedADMM-SAM, which demonstrate $\mathcal{O}\Big(\frac{1}{\sqrt{KT}}+\frac{1}{KT(1-\psi)^2}\Big)$ and $\mathcal{O}\Big(\frac{1}{\sqrt{KT}}+\frac{1}{KT(1-\psi)^2}+ \frac{1}{T^{3/2}K^{1/2}}\Big)$ convergence rate of our algorithms in non-convex and decentralized environments, respectively. Theory suggests that better connectivity in the communication topology leads to tighter upper bounds.
	\item We conduct extensive experimental evaluations, which show that our methods outperform existing SOTA baseline methods in terms of generalization performance and convergence speed. Moreover, our method achieves comparable or even better performance compared to some centralized FL methods.
\end{itemize}	

        \section{Related Work}
We briefly review three lines of work that are most related to this paper, i.e., decentralized federated learning (DFL), ADMM, and sharpness-aware minimization (SAM).

\subsubsection{Decentralized Federated Learning (DFL).} DFL is the preferred learning paradigm in situations where edge devices lack trust in the central server's ability to protect their privacy \cite{yang2019federated,lalitha2018fully, lalitha2019peer} adopt a Bayesian-like approach to introduce a belief system over the model parameter space of the clients within a fully DFL framework. \cite{Sun2022Decentralized} apply momentum SGD and quantization methods for multiple local iterations to reduce communication costs, thus improving the generality of the model and providing convergence results. \cite{Rong2022DisPFL} develop a decentralized sparse training technique to further reduce communication and computational costs. \cite{shi2023improving} introduce gradient perturbations to generate local flat Models and combine them with Multiple Gossip Steps to enhance the consistency between local clients. 
More related work on DFL can be referred to the survey papers \cite{gabrielli2023survey,yuan2023decentralized,beltran2022decentralized} and their references therein.

\subsubsection{ADMM Algorithms.} Alternating Direction Method of Multipliers (ADMM) is a popular primal-dual algorithm that has achieved significant success in distributed optimization \cite{bertsekas2014constrained,gabay1976dual,boyd2011distributed}. In the field of centralized FL, FedPD \cite{fedpd} is the first to apply the primal-dual algorithm to the FL domain. It decomposes the problem into a series of local subproblems alternately updating the primal and dual variables. Under the assumption of local accuracy, it achieves a convergence rate of $\mathcal{O}(1/T)$. In addition, \cite{acar2021federated} propose a similar method called FedDyn algorithm \cite{zhang2021connection}, which employs dual variables to enhance both convergence speed and generalization performance.
Recently, several ADMM-based FL methods \cite{tran2021feddr,wang2022fedadmm,gong2022fedadmm,sun2023fedspeed,zhou2023federated,wang2023beyond} have been proposed and achieved a new SOTA performance for centralized FL.

\subsubsection{Sharpness-Aware Minimization (SAM).} 
SAM \cite{foret2020sharpness} is a powerful optimizer utilized for training deep learning models. It leverages the flat geometry of the loss landscape to enhance the generalization ability of the model. SAM and its variants have been successfully applied to a wide range of machine learning tasks \cite{zhong2022improving,zhao2022penalizing,kwon2021asam,du2021efficient} as effective optimizers. Recently, \cite{shi2023improving} applied SAM to the field of decentralized federated learning (DFL) and achieved state-of-the-art (SOTA) performance. Similarly, \cite{sun2023fedspeed,Qu2022Generalized} utilized SAM in the context of centralized federated learning (CFL) and also achieved SOTA performance. Additionally, \cite{shi2023towards} utilized SAM in the context of personalized federated learning (PFL) and achieved SOTA performance.

The most relevant work to ours is the DFedSAM  \cite{shi2023improving}. It utilizes SAM to generate locally flat landscapes, enabling the generation of potentially flat models during communication with neighboring nodes. This approach alleviates the issue of local heterogeneous overfitting, while it controls the local inconsistency via the multiple gossip steps with heavy communication costs. 
In this work, we extend the SAM optimizer and ADMM algorithm to the DFL setting to further solve the local inconsistency issue with lightweight communication cost. 
        	
\section{Methodology}\label{methodology}
	
In this part, we introduce the preliminaries and the proposed FedADMM and FedADMM-SAM algorithms  in detail.

\subsection{Problem Setup}
 Let $m$ be the total number of clients.  $T$ represents the number of communication rounds. $(\cdot)_{i,k}^{t}$ indicates variable $(\cdot)$ at the $k$-th iteration of the $t$-th round in the $i$-th client. $\mathbf{x}$ denotes the model parameters. $\mathbf{g}$ represents the stochastic gradient.  $\hat{\mathbf{g}}$ is the dual variable. The inner product of two vectors is denoted by $\langle\cdot,\cdot\rangle$, and $\Vert \cdot \Vert$ represents the Euclidean norm of a vector. As shown in Figure \ref{fig:topo}, in the decentralized network topology, the communication network between clients is represented by an undirected connected graph $\mathcal{G} = (\mathcal{N}, \mathcal{V}, {\bf W})$, where $\mathcal{N} = \{1, 2, \ldots, m\}$ denotes the set of clients, and $\mathcal{V} \subseteq \mathcal{N} \times \mathcal{N}$ represents the set of communication channels connecting pairs of distinct clients.  In the decentralized setting, there is no central server, and all clients communicate solely with their neighboring clients via the communication channels specified by $\mathcal{V}$.

   In this work, we focus on the following DFL, formulated as a finite sum of non-convex stochastic optimization:
    \begin{equation}\label{finite_sum}
		\small \min_{{\bf x}\in \mathbb{R}^d} f({\bf x}):=\frac{1}{m}\sum_{i=1}^m f_i({\bf x}),~~f_i({\bf x})=\mathbb{E}_{\xi\sim \mathcal{D}_i} F_i({\bf x};\xi),
	\end{equation}
   where $\mathcal{D}_i$ represents the data distribution in the $i$-th client in $\mathcal{N}$, which exhibits heterogeneity across clients. Each client's local objective function $F_i({\bf x};\xi)$ is associated with the training data samples $\xi$. 
Additionally, we denote $f^{*}$ as the minimum value of $f$, where $f(x) \geq f(x^*) = f^*$ for all $x \in \mathbb{R}^{d}$.
    \begin{figure*}[ht]
    \centering
    \subfigure[Ring]{
        \includegraphics[width=0.15\textwidth]{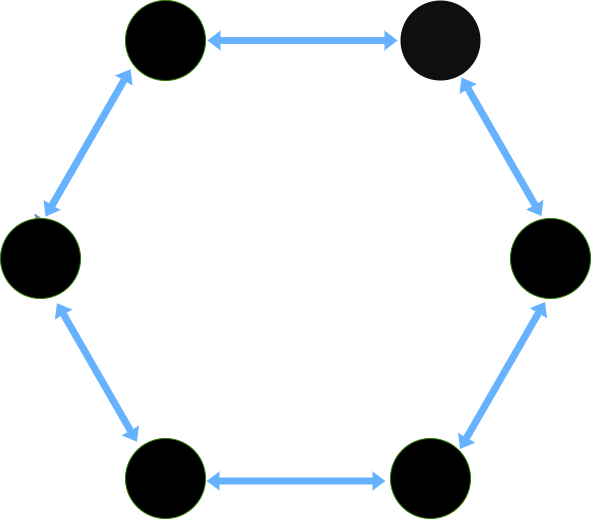}
        \label{ring}
    }\hfill
    \subfigure[Exponential]{
        \includegraphics[width=0.15\textwidth]{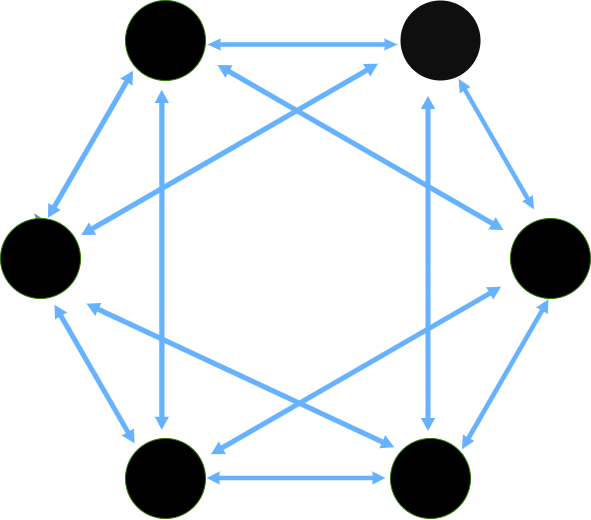}
        \label{exp}
    }\hfill
    \subfigure[Grid]{
        \includegraphics[width=0.14\textwidth]{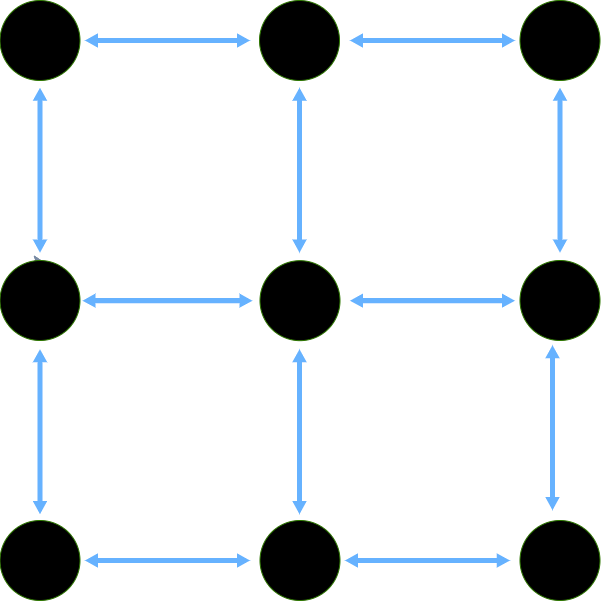}
        \label{grid}
    }\hfill
    \subfigure[Fully-connected]{
        \includegraphics[width=0.15\textwidth]{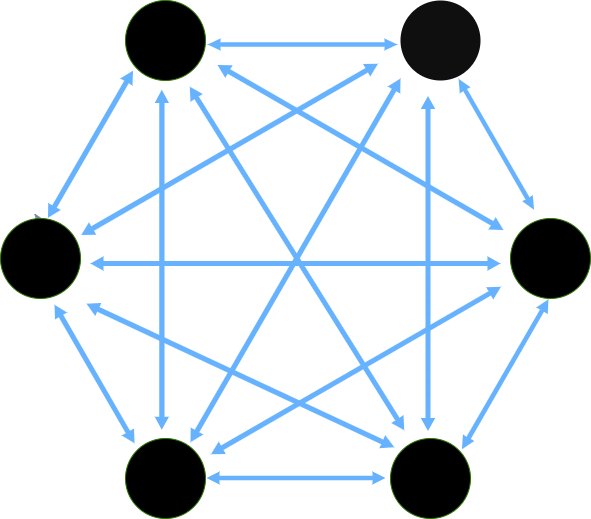}
        \label{full}
        }\hfill
    \subfigure[Random]{
        \includegraphics[width=0.15\textwidth]{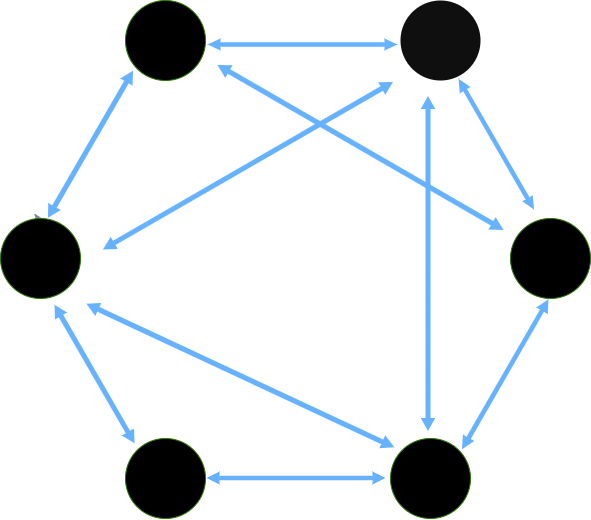}
        \label{random}
    }
    \vspace{-0.3cm}
    \caption{\small Five common decentralized communication topologies}
    \label{fig:topo}
      \vspace{-0.3cm}
    \end{figure*}

	
   \subsection{DFedADMM Algorithm}
   In this section, we extend the ADMM algorithm to the DFL setting. To facilitate the derivation of the DFedADMM algorithm, we first briefly review the FedPD \cite{fedpd} algorithm, which is a ADMM-based  centralized FL algorithm. For more detailed derivation and additional information, please refer to paper \cite{fedpd}. 

  \subsubsection{Prelimary on FedPD.}
  Firstly, the centralized optimizer of FL is based on the following global consensus formulation:
	\begin{equation}\label{finite_sum_conv}
		\min\limits_{\mathbf{x}_0,\mathbf{x}_i}\frac{1}{m}\sum\limits_{i=1}^mf_i(\mathbf{x}_i),\quad\text{s.t.}\mathbf{x}_i=\mathbf{x}_0,\forall i\in[m].
	\end{equation}
   The augmented Lagrangian (AL) function of (\ref{finite_sum_conv}) as:
	\begin{gather*}
		\mathcal{L}(\mathbf{x}_{0:m},\hat{\mathbf{g}}) \triangleq \left.\frac1m\sum_{i=1}^m\mathcal{L}_i(\mathbf{x}_0,\mathbf{x}_i,\hat{\mathbf{g}}_i)\right. \\		\mathcal{L}_i(\mathbf{x}_i,\mathbf{x}_0,\hat{\mathbf{g}}_i) 
		\triangleq \left.f_i(\mathbf{x}_i) - \langle \hat{\mathbf{g}}_i,\mathbf{x}_i-\mathbf{x}_0\rangle+\frac1{2\lambda}\left\|\mathbf{x}_i-\mathbf{x}_0\right\|^2.\right.
	\end{gather*}
     The main idea of the FedPD algorithm is to solve the optimization problem of the local AL $\mathcal{L}_i$ function using ADMM. The updates for the primal and dual variables of each client in FedPD are as follows:
     \begin{align}	\!\!\mathbf{x}_{i}^{t+1}\label{x_update}\!=\!\arg\min_{\mathbf{x}}f_i(\mathbf{x})-&\left\langle\hat{\mathbf{g}}_i^t,\mathbf{x}-\mathbf{x}_{0}^t\right\rangle+\frac{1}{2\lambda}\left\|\mathbf{x}-\mathbf{x}_{0}^t\right\|^2\!\!  \\	
\hat{\mathbf{g}}_{i}^{t+1}\label{g_update} =\hat{\mathbf{g}}_i^t - &\frac{1}{\lambda}(\mathbf{x}_i^{t+1}-\mathbf{x}_{0}^t) 
	\end{align}
In the case where all clients are fully participating, the updates for the server in the FedPD algorithm are as follows:
	\begin{align}
        \label{average}
		\mathbf{x}_0^{t+1}=\dfrac{1}{m}\sum_{i=1}^m(\mathbf{x}_i^{t+1}-\lambda\hat{\mathbf{g}}_i^{t+1})
	\end{align}
 Equation (\ref{average}) indicates that the server is responsible for communication, receiving the model parameters from participating client devices, and performing the aggregation. This can lead to a communication burden, and if the server crashes, the entire system's updates will be interrupted. Below, we try to reallocate communication to each client in a peer-to-peer manner by discarding the central sever to address these issues.
 
  \subsubsection{Derive the DFedADMM.} To mitigate the risks of single point of failure and communication burden associated with centralization, we  make three modifications to the above update scheme to accommodate a decentralized topology :
\begin{itemize}[leftmargin=*]
\item Change in solving method for each client: For each client, the precise solution of Equation (\ref{x_update}) is computationally expensive. To better align with practical scenarios, we can modify Equation (\ref{x_update}) to perform $ K $ rounds of local stochastic gradient descent, as described in line 13 of Algorithm \ref{Combined-DFedADMM}.
The specific update formula for $\mathbf{x}_{i,k}^t$ is as follows:
	\begin{equation}\label{x_update_d}
		\mathbf{x}_{i,k+1}^{t}=\mathbf{x}_{i,k}^{t}-\eta_{l}\left(\mathbf{g}_{i,k}^{t} - \hat{\mathbf{g}}_{i}^{t-1} + \frac{1}{\lambda}\left(\mathbf{x}_{i,k}^{t}-\mathbf{x}_i^{t}\right)\right)
	\end{equation}
	where $\mathbf{g}_{i,k}^{t}$ is the random gradient, $\hat{\mathbf{g}}_{i}^{t-1}$ is the dual variable, $\lambda$ is the penalty parameter, and $\mathbf{x}_i^{t}$ is the model parameter of the $i$-th client after communication with neighbors in the $t$-th round. After performing $K$ local updates, we obtain the following equation: 
	\begin{equation}\label{local_update}
		\mathbf{x}_{i,K}^{t}-\mathbf{x}_{i}^{t} =-\lambda\gamma\sum_{k=0}^{K-1}\frac{\gamma_{k}}{\gamma}\mathbf{g}_{i,k}^{t} + \gamma\lambda\hat{\mathbf{g}}_{i}^{t-1}
	\end{equation}
	where $\sum_{k=0}^{K-1}\gamma_{k}=\sum_{k=0}^{K-1}{\eta_{l}}/{\lambda}\bigl(1-{\eta_{l}}/{\lambda}\bigr)^{K-1-k}=\gamma=1-(1-{\eta_{l}}/{\lambda})^{K}$. Due to the space limitation, proof details of above equation is placed in the \textbf{Appendix}.
	
	\item Change in initial values for each client: Without the coordination and communication from a central server to transmit initial model values, the initial values for each client's updates, denoted as $\mathbf{x}_0^t$, need to be modified to be specific to each client, denoted as $\mathbf{x}_i^t$. The parameter $\mathbf{x}_i^t$ will be stored locally on client $i$.
	
    \item Change in communication scheme: In the absence of server aggregation, the previous server communication followed by aggregation is replaced with individual peer-to-peer communication via the communication topology matrix $\bf W$. After communication with their respective neighbors, each client performs a local aggregation operation. The specific communication expression is as follows:
	\begin{equation}\label{comm_update}
		\mathbf{x}_i^{t+1}= \sum_{l \in \mathcal{N}\left( i \right)} w_{il} \mathbf{z}^{t}_{i}
	\end{equation}
	where $\mathbf{x}_i^{t+1}$ represents the updated model parameter of the $i$-th client after communication with its neighbors, $\mathcal{N}\left( i \right)$ represents the set of neighbors of the $i$-th client, $w_{il}$ represents the weight or communication coefficient between the $i$-th client and its neighbor $l$, and the update method for $\mathbf{z}^{t}_{i}$ can be found in line 17 of Algorithm \ref{Combined-DFedADMM}.
\end{itemize}
The complete algorithm is summarized in Algorithm \ref{Combined-DFedADMM}.

    \begin{algorithm}[t]
    	\small
    	\caption{DFedADMM and DFedADMM-SAM}
    	\label{Combined-DFedADMM}
     	\renewcommand{\algorithmicrequire}{\textbf{Input:}}
		\renewcommand{\algorithmicensure}{\textbf{Output:}}
    	\begin{algorithmic}[1]
    		\REQUIRE model parameters $\mathbf{x}_i^{0}$, total communication rounds $T$, local gradient controller $\hat{\mathbf{g}}_{i}^{-1}=0$, penalized weight $\lambda$.
    		\ENSURE model average parameters $\bar{\mathbf{x}}^{t}$.
    		\FOR{$t = 0, 1, 2, \cdots, T-1$}
    		\FOR{client $i$ in parallel}
    		\STATE set $\mathbf{x}_{i,0}^{t}=\mathbf{x}_i^{t}$
    		\FOR{$k = 0, 1, 2, \cdots, K-1$}
    		\STATE sample minibatch $\varepsilon_{i,k}^{t}$ and compute unbiased stochastic gradient: $\mathbf{g}_{i,k}^{t}=\nabla f_{i}(\mathbf{x}_{i,k}^{t};\varepsilon_{i,k}^{t})$
    		
    		\IF{\textcolor{blue}{DFedADMM}}
    		\STATE update gradient descent step: $\mathbf{x}_{i,k+1}^{t}\gets \mathbf{x}_{i,k}^{t} - \eta_{l}\bigl(\mathbf{g}_{i,k}^{t} - \hat{\mathbf{g}}_{i}^{t-1} + \frac{1}{\lambda}(\mathbf{x}_{i,k}^{t}-\mathbf{x}_i^{t})\bigr)$
    		\ENDIF
    		
    		\IF{\textcolor{blue}{DFedADMM-SAM}}
    		\STATE compute unbiased stochastic gradient: $\mathbf{g}_{i,k,1}^{t}=\nabla f_{i}(\mathbf{x}_{i,k}^{t};\varepsilon_{i,k}^{t})$
    		\STATE update extra step: $\Breve{\mathbf{x}}_{i,k}^{t}=\mathbf{x}_{i,k}^{t}+\rho\frac{\mathbf{g}_{i,k,1}^{t}}{\Vert\mathbf{g}_{i,k,1}\Vert}$
    		\STATE compute unbiased stochastic gradient: $\mathbf{g}_{i,k}^{t}=\nabla f_{i}(\Breve{\mathbf{x}}_{i,k}^{t};\varepsilon_{i,k}^{t})$
    		\STATE update gradient descent step: $\mathbf{x}_{i,k+1}^{t}\gets \mathbf{x}_{i,k}^{t} - \eta_{l}\bigl(\mathbf{g}_{i,k}^{t} - \hat{\mathbf{g}}_{i}^{t-1} + \frac{1}{\lambda}(\mathbf{x}_{i,k}^{t}-\mathbf{x}_i^{t})\bigr)$
    		\ENDIF
    		
    		\ENDFOR
    		\STATE $\hat{\mathbf{g}}_{i}^{t}=\hat{\mathbf{g}}_{i}^{t-1}-\frac{1}{\lambda} (\mathbf{x}_{i,K}^{t}-\mathbf{x}_i^{t})$
    		\STATE $\mathbf{z}_{i}^{t}=\mathbf{x}_{i,K}^{t}-\lambda\hat{\mathbf{g}}_{i}^{t-1}$ 
    		\STATE Receive neighbors' models $\mathbf{z}_{l}^{t}$ with adjacency matrix ${\bf W}$
    		\STATE $\mathbf{x}_i^{t+1}=\sum_{l \in \mathcal{N}\left( i \right)} w_{il} \mathbf{z}^{t}_{i}$
    		\ENDFOR
    		\ENDFOR
    	\end{algorithmic}
    \end{algorithm}
	
\subsection{DFedADMM-SAM Algorithm}
Based on the SAM optimizer, we propose the DFedADMM-SAM algorithm, which builds upon the DFedADMM to further alleviate the issue of local overfitting. DFedADMM-SAM algorithm can be described as follows. Each client maintains a local model $\mathbf{x}$ in its local memory and repeats the following steps (taking client $i$ as an example):
    \begin{itemize}
        \item Client $i$ samples a minibatch $\varepsilon_{i,k}^{t}$ from the data distribution $\mathcal{D}_i$ and computes the unbiased stochastic gradient: $\mathbf{g}_{i,k,1}^{t}=\nabla f_{i}(\mathbf{x}_{i,k}^{t};\varepsilon_{i,k}^{t})$.
	
	\item Client $i$ performs a gradient ascent step at $\mathbf{x}_{i,k}^{t}$: $\Breve{\mathbf{x}}_{i,k}^{t}=\mathbf{x}_{i,k}^{t}+\rho\frac{\mathbf{g}_{i,k,1}^{t}}{\Vert\mathbf{g}_{i,k,1}\Vert}$, where $\rho$ is the step size.
	
	\item Client $i$ uses minibatch $\varepsilon_{i,k}^{t}$ at $\Breve{\mathbf{x}}_{i,k}^{t}$ to compute the unbiased gradient $\mathbf{g}_{i,k}^{t}$.
    \end{itemize}

  The updates for $\hat{\mathbf{g}}_{i}^{t}$, $\mathbf{z}_{i}^{t}$, and $\mathbf{x}_i^{t+1}$ in the subsequent steps are the same as DFedADMM algorithm. The complete algorithm scheme can be found in Algorithm \ref{Combined-DFedADMM}.

        \newtheorem{theorem}{Theorem}
\newtheorem{proposition}{Proposition}
\newtheorem{lemma}{Lemma}
\newtheorem{corollary}{Corollary}
\newtheorem{definition}{Definition}
\newtheorem{assumption}{Assumption}
\newtheorem{remark}{Remark}
\newtheorem{proof}{Proof}
\section{Convergence Analysis}\label{analysis}

In this section, we present the theoretical analysis for FedADMM and FedADMM-SAM algorithms. Due to space constraints,  detailed proofs are placed in the \textbf{Appendix}.  Below, we provide some standard definitions and assumptions.
	
	\begin{definition}\label{mixing matrix}
		(Gossip/Mixing matrix). [Definition 1, \cite{Sun2022Decentralized}] The gossip matrix ${ \bf W} = [w_{i,j}] \in [0,1]^{m\times m}$  is assumed to have the following properties:
		(i) \textbf{(Graph)} If $i\neq j$ and $(i,j) \notin {\cal V}$, then $w_{i,j} =0$, otherwise, $w_{i,j} >0$;
		(ii) \textbf{(Symmetry)} ${\bf W} = {\bf W}^{\top}$;
		(iii) \textbf{(Null space property)} $\mathrm{null} \{{\bf I}-{\bf W}\} = \mathrm{span}\{\bf 1\}$;
		(iv) \textbf{(Spectral property)} ${\bf I} \succeq {\bf W} \succ -{\bf I}$. 
		Under these properties, the eigenvalues of $\bf{\bf W}$ satisfy $1=|\psi_1({\bf W)})|> |\psi_2({\bf W)})| \ge \dots \ge |\psi_m({\bf W)})|$. Furthermore, we define $\psi:=\max\{|\psi_2({\bf W)}|,|\psi_m({\bf W)})|\}$ and $1-\psi \in (0,1]$ as the spectral gap of $\bf W$.
	\end{definition}
	
	\begin{assumption}\label{smoothness}
		(\textbf{L-Smoothness}) \textit{The non-convex function $f_{i}$ satisfies the smoothness property for all $i\in[m]$, i.e., $\Vert\nabla f_{i}(\mathbf{x})-\nabla f_{i}(\mathbf{y})\Vert\leq L\Vert\mathbf{x}-\mathbf{y}\Vert$, for all $\mathbf{x},\mathbf{y}\in\mathbb{R}^{d}$.}
	\end{assumption}
	\begin{assumption}\label{bounded_stochastic_gradient}
    (\textbf{Bounded Stochastic Gradient}) The stochastic gradient $\mathbf{g}_{i,k}^{t}=\nabla f_{i}(\mathbf{x}_{i,k}^{t}, \varepsilon_{i,k}^{t})$ with the randomly sampled data $\varepsilon_{i,k}^{t}$ on the local client $i$ is unbiased and with bounded variance, i.e., $\mathbb{E}[\mathbf{g}_{i,k}^{t}]=\nabla f_{i}(\mathbf{x}_{i,k}^{t})$ and $\mathbb{E}\Vert \mathbf{g}_{i,k}^{t} - \nabla f_{i}(\mathbf{x}_{i,k}^{t})\Vert^{2} \leq \sigma_{l}^{2}$, for all $\mathbf{x}_{i,k}^{t}\in\mathbb{R}^{d}$.
     \end{assumption}
	
	\begin{assumption}\label{bounded_heterogeneity}
   (\textbf{Bounded Heterogeneity}) 
  The dissimilarity of the dataset among the local clients is bounded by the local and global gradients, i.e., $\mathbb{E}\Vert\nabla f_{i}(\mathbf{x})-\nabla f(\mathbf{x})\Vert^{2}\leq\sigma_{g}^{2}$, for all $\mathbf{x}\in\mathbb{R}^{d}$.
  This paper also assumes global variance is  bounded, i.e., $\frac{1}{m} \sum_{i=1}^m \|\nabla f_i({\bf x}) - \nabla f({\bf x})\|^2 \leq \sigma_{g}^2$.
\end{assumption}
	
	\begin{assumption}\label{Bounded gradient}
		(\textbf{Bounded gradient}) we have $ \|\nabla f_i(\mathbf{x})\|\leq B. $ for any $ i \in \{1,2,\cdots,m\} $.
	\end{assumption}
	
Note that the above-mentioned assumptions are mild and commonly used in characterizing the convergence rate of DFL \cite{Sun2022Decentralized,shi2023make}.
	
Compared with decentralized parallel SGD methods such as D-PSGD \cite{lian2017can}, the technical difficulty stems from the realization that $\mathbf{x}_i^{t,K}-\mathbf{x}_i^t$ cannot be regarded as an unbiased estimate of the gradient $\nabla f_i(\mathbf{x}_i^t)$ following numerous local iterations \cite{Sun2022Decentralized,shi2023improving}. Moreover, in our algorithm (Algorithm \ref{Combined-DFedADMM}, line 13), the introduction of dual variables adds additional challenges to the analysis of $\mathbf{x}_i^{t,K}-\mathbf{x}_i^t$.
To overcome this problem, we define two auxiliary sequences: $\overline{\mathbf{x}^{t}}=\frac{1}{m}\sum_{i\in[m]}\mathbf{x}_{i}^{t}$ and $\mathbf{w}^{t}=\overline{\mathbf{x}^{t}}+\frac{1-\gamma}{\gamma}(\overline{\mathbf{x}^{t}}-\overline{\mathbf{x}^{t-1}})$. By doing so, we construct a mapping from $\mathbf{x}^{t}$ to $\mathbf{w}^{t}$, and the entire update process is simplified to an SGD-type method with gradients $\mathbf{g}$. The detailed proof can be found in the \textbf{Appendix}.

	\begin{theorem}\label{convergence1}
		Under the Assumptions \ref{smoothness}-\ref{Bounded gradient}, when the local learning rate $\eta_{l}$ satisfies $\eta_{l}\leq\min\{\frac{1}{96KL},2\lambda\}$, and the local interval $K$ satisfies $K \geq \lambda/\eta_{l}$, let $\kappa=\frac{1}{2} - 1152L^2\eta_{l}^2K^2 $ be a positive constant selected with the proper values of $\eta_{l}$ and $\rho$, then the auxiliary sequence $\mathbf{w}^{t}$, generated by executing Algorithm \ref{Combined-DFedADMM} of DFedADMM-SAM, satisfies:
		\begin{equation*}
			\small
			\begin{split}
				&\frac{1}{T}\sum_{t=0}^{T-1}\mathbb{E}_{t}\Vert\nabla f(\mathbf{w}^{t})\Vert^{2}	\\
				& \leq \frac{f(\mathbf{w}^{0}) - f^*}{\lambda \kappa T} + \frac{96 L^2\eta_l^2K^2}{\kappa}(3\sigma_g^2 + 2\sigma_l^2 + B^2)  \\
				& + \frac{3\lambda ^2  L^2(1 + 384L^2\eta_{l}^2K^2)}{\kappa} (\sigma_{l}^2 + B^2)+ \frac{L\lambda}{2\kappa}\left( \sigma_l^2 + B^2\right) \\
				& + \frac{96 \eta_{l}^2K^2L^2 \left(1+192L^2\eta_{l}^2 K^2\right)\left(3\sigma_g^2 +2\sigma_{l}^2 +4B^2\right)}{\kappa (1-\psi)^2}\\
				& + \frac{6 \lambda^2L^2 \left(1+192L^2\eta_{l}^2 K^2\right)\left(\sigma_{l}^2 + B^2\right)}{\kappa (1-\psi)^2} +\frac{6\rho^2\lambda L^2}{\kappa} (\sigma_{l}^2 + B^2)\\
			\end{split}    
		\end{equation*}
		where $f$ is a non-convex objective function $f^{*}$ is the optimal of $f$, T is the number of communication rounds. More specifically, by setting $\rho=0$, we obtain the convergence theorem for Algorithm \ref{Combined-DFedADMM} of DFedADMM.
	\end{theorem}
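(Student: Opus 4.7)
My plan is to reduce the decentralized primal--dual iteration to a perturbed SGD recursion on the virtual sequence $\mathbf{w}^{t}=\overline{\mathbf{x}^{t}}+\frac{1-\gamma}{\gamma}(\overline{\mathbf{x}^{t}}-\overline{\mathbf{x}^{t-1}})$ and then apply a standard descent-lemma argument. The first step is to unroll the $K$ inner gradient steps in line~13 of Algorithm~\ref{Combined-DFedADMM} to obtain the closed form of $\mathbf{x}_{i,K}^{t}-\mathbf{x}_{i}^{t}$ already recorded in \eqref{local_update}, substitute this into $\mathbf{z}_{i}^{t}=\mathbf{x}_{i,K}^{t}-\lambda\hat{\mathbf{g}}_{i}^{t-1}$, and then average the consensus step $\mathbf{x}_{i}^{t+1}=\sum_{l}w_{il}\mathbf{z}_{l}^{t}$ using double stochasticity of $\mathbf{W}$. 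After telescoping the dual update $\hat{\mathbf{g}}_{i}^{t}=\hat{\mathbf{g}}_{i}^{t-1}-\frac{1}{\lambda}(\mathbf{x}_{i,K}^{t}-\mathbf{x}_{i}^{t})$ across rounds, the combination with the momentum-like correction in $\mathbf{w}^{t}$ should cancel the persistent bias from $\hat{\mathbf{g}}_{i}^{t-1}$ and yield the clean recursion
\[
\mathbf{w}^{t+1}-\mathbf{w}^{t}=-\lambda\cdot\frac{1}{m}\sum_{i=1}^{m}\sum_{k=0}^{K-1}\frac{\gamma_{k}}{\gamma}\mathbf{g}_{i,k}^{t},
\]
which is the core lemma on which everything else rests.

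Granted this reduction, I would invoke $L$-smoothness on $f(\mathbf{w}^{t+1})-f(\mathbf{w}^{t})$ and split the resulting inner-product term $\langle\nabla f(\mathbf{w}^{t}),\mathbf{w}^{t+1}-\mathbf{w}^{t}\rangle$ via the identity $2\langle a,b\rangle=\|a\|^{2}+\|b\|^{2}-\|a-b\|^{2}$ so that a negative $\|\nabla f(\mathbf{w}^{t})\|^{2}$ term appears with coefficient $\lambda/2$. The residual cross term then equals $\lambda\,\mathbb{E}\|\nabla f(\mathbf{w}^{t})-\frac{1}{mK}\sum_{i,k}\mathbf{g}_{i,k}^{t}\|^{2}$ (up to constants once the weights $\gamma_{k}/\gamma$ are absorbed), which I would decompose through repeated applications of Young's and Jensen's inequalities into (i)~the stochastic noise $\sigma_{l}^{2}$, (ii)~the SAM perturbation $\|\nabla f_{i}(\breve{\mathbf{x}}_{i,k}^{t})-\nabla f_{i}(\mathbf{x}_{i,k}^{t})\|^{2}\le L^{2}\rho^{2}$ using $\|\breve{\mathbf{x}}-\mathbf{x}\|\le\rho$, (iii)~the local drift $\|\mathbf{x}_{i,k}^{t}-\mathbf{x}_{i}^{t}\|^{2}$, (iv)~the consensus error $\|\mathbf{x}_{i}^{t}-\overline{\mathbf{x}^{t}}\|^{2}$, and (v)~the gap $\|\overline{\mathbf{x}^{t}}-\mathbf{w}^{t}\|^{2}$ together with the global heterogeneity $\sigma_{g}^{2}$ and the gradient bound $B^{2}$.

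The two auxiliary lemmas I would establish separately are bounds on (iii) and (iv). The local drift lemma follows by expanding the local recursion, applying $\|\mathbf{g}_{i,k}^{t}\|^{2}\le 3(\sigma_{l}^{2}+\sigma_{g}^{2}+\|\nabla f(\mathbf{x}_{i,k}^{t})\|^{2})$ and the contractivity of $(1-\eta_{l}/\lambda)$, producing a bound proportional to $\eta_{l}^{2}K^{2}(\sigma_{g}^{2}+\sigma_{l}^{2}+B^{2})+\lambda^{2}(\sigma_{l}^{2}+B^{2})$; here the condition $K\geq\lambda/\eta_{l}$ controls the geometric factor $\gamma$. The consensus error lemma is the standard DFL argument that unrolls $\mathbf{x}_{i}^{t+1}=\sum_{l}w_{il}\mathbf{z}_{l}^{t}$ to express the deviation from the average in terms of powers of $\mathbf{W}-\frac{1}{m}\mathbf{1}\mathbf{1}^{\top}$, whose operator norm is $\psi$; summing the geometric series supplies the $(1-\psi)^{-2}$ factors attached to the $\sigma_{g}^{2}$, $\sigma_{l}^{2}$, $B^{2}$ and $\rho^{2}$-bearing terms.

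Once these pieces are in hand, I would sum the descent inequality from $t=0$ to $T-1$, rearrange to isolate $\frac{1}{T}\sum_{t}\mathbb{E}\|\nabla f(\mathbf{w}^{t})\|^{2}$ on the left, divide by $\lambda\kappa$ where $\kappa=\frac{1}{2}-1152L^{2}\eta_{l}^{2}K^{2}$ is positive by the assumed step-size condition $\eta_{l}\leq 1/(96KL)$, and match the six terms against the six terms listed in the statement; setting $\rho=0$ removes the SAM contribution and recovers the DFedADMM bound. The most delicate step will be the first one: verifying that the specific form of $\mathbf{w}^{t}$ (with the coefficient $(1-\gamma)/\gamma$) exactly annihilates the dual-variable residue so that the virtual recursion is unbiased in the ADMM sense, since any slack here propagates through every subsequent Young's-inequality split and corrupts the $(1-\psi)^{-2}$ and $\lambda^{2}$ dependencies of the final bound.
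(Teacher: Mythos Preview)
Your proposal is correct and follows essentially the same route as the paper: derive the clean recursion $\mathbf{w}^{t+1}-\mathbf{w}^{t}=-\lambda\,\frac{1}{m}\sum_{i,k}\frac{\gamma_k}{\gamma}\mathbf{g}_{i,k}^{t}$, apply $L$-smoothness, split the inner product to extract $-\tfrac{\lambda}{2}\|\nabla f(\mathbf{w}^{t})\|^{2}$, and bound the residual through the same five pieces (noise, SAM perturbation, local drift, consensus error via $\|\mathbf{W}^{t}-\mathbf{P}\|\le\psi^{t}$, and the gap $\|\overline{\mathbf{x}^{t}}-\mathbf{w}^{t}\|^{2}$), together with an auxiliary bound $\frac{1}{m}\sum_i\|\hat{\mathbf{g}}_i^{t}\|^2\le\sigma_l^2+B^2$ obtained from the dual recursion.

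One small discrepancy worth flagging: the exact constant $\kappa=\tfrac12-1152L^{2}\eta_{l}^{2}K^{2}$ in the statement does not arise from your described drift bound (which absorbs $\|\nabla f(\mathbf{x}_{i,k}^{t})\|^{2}$ directly into $B^{2}$ via Assumption~\ref{Bounded gradient}). The paper instead keeps a $\frac{1}{m}\sum_i\|\nabla f(\mathbf{x}_i^{t})\|^{2}$ term in the local-drift estimate, then back-substitutes it using $\frac{1}{m}\sum_i\|\nabla f(\mathbf{x}_i^{t})\|^{2}\le 4L^{2}\cdot(\text{consensus error})+4L^{2}\|\overline{\mathbf{x}^{t}}-\mathbf{w}^{t}\|^{2}+4\|\nabla f(\mathbf{w}^{t})\|^{2}$; the $4\|\nabla f(\mathbf{w}^{t})\|^{2}$ piece, multiplied by the coefficient $288\lambda L^{2}\eta_l^{2}K^{2}$ coming from the drift term, is what gets moved to the left and produces the stated $\kappa$. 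Your route would yield $\kappa=\tfrac12$ (a slightly tighter bound), so nothing breaks, but you would not reproduce the theorem's constants verbatim without inserting that back-substitution step.
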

	More detailed proof can be found in the \textbf{Appendix}. With Theorem \ref{convergence1}, we can state the following convergence rates for DFedADMM and DFedADMM-SAM algorithms:
	\begin{corollary}\label{coro_DFedADMM}
		Let the local adaptive learning rate satisfy $\eta_l=\mathcal{O}(\frac{1}{LK\sqrt{T}})$ , the penalty parameter   satisfy $\lambda=\mathcal{O}(\frac{1}{L\sqrt{KT}})$, and the perturbation parameter $\rho = \mathcal{O}(\frac{1}{\sqrt{T}})$. Then, the convergence rate for DFedADMM-SAM satisfies:
		\small
		\begin{align*}
			&\frac{1}{T}\sum_{t=0}^{T-1}\mathbb{E}_{t}\Vert\nabla f(\mathbf{w}^{t})\Vert^{2}  = \mathcal{O} \Big( \frac{(f({\bf w}^{0})-f^{*}) +K^{-1/2}(B^2+\sigma_l^2)}{\sqrt{T}}\\
			&+\frac{\left(B^2 +\sigma_{l}^2 +\sigma_{g}^2 \right)}{T(1-\psi)^2} + \frac{L(B^2+\sigma_l^2)}{T^{3/2}K^{1/2}}  +\frac{(B^2+\sigma_l^2)}{TK(1-\psi)^2}\Big).   
		\end{align*}
		More specifically, by setting $\rho=0$, we obtain the convergence rate for DFedADMM:
		\begin{align*}
			&\frac{1}{T}\sum_{t=0}^{T-1}\mathbb{E}_{t}\Vert\nabla f(\mathbf{w}^{t})\Vert^{2}  = \mathcal{O} \Big( \frac{(f({\bf w}^{0})-f^{*}) +K^{-1/2}(B^2+\sigma_l^2)}{\sqrt{T}}\\
			&+\frac{\left(B^2 +\sigma_{l}^2 +\sigma_{g}^2 \right)}{T(1-\psi)^2} +\frac{(B^2+\sigma_l^2)}{TK(1-\psi)^2}\Big).
		\end{align*}
	\end{corollary}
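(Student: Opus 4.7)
The corollary is a direct computational consequence of Theorem~\ref{convergence1}: one substitutes the three specified scalings $\eta_l = \mathcal{O}(1/(LK\sqrt{T}))$, $\lambda = \mathcal{O}(1/(L\sqrt{KT}))$, and $\rho = \mathcal{O}(1/\sqrt{T})$ into the bound and simplifies each of the (up to) seven summands. The plan is: first, verify the preconditions of Theorem~\ref{convergence1}; second, substitute and simplify each term; third, collect dominated terms; finally, set $\rho=0$ to obtain the DFedADMM rate as a special case.

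The verification step is quick. With $\eta_l=\mathcal{O}(1/(LK\sqrt{T}))$ we have $L^2\eta_l^2 K^2 = \mathcal{O}(1/T)$, so for $T$ large enough $\kappa = \tfrac{1}{2} - 1152\,L^2\eta_l^2 K^2 \ge 1/4$, and every $1/\kappa$ prefactor is absorbed by the $\mathcal{O}(\cdot)$. The condition $\eta_l\le 1/(96KL)$ holds once $\sqrt{T}\ge 96$ (up to the hidden constant); $\eta_l\le 2\lambda$ reduces to $1\le 2\sqrt{K}$, automatic for $K\ge 1$; and $K\ge \lambda/\eta_l = \sqrt{K}$ is likewise automatic. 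Hence Theorem~\ref{convergence1} applies and its bound may be manipulated directly.

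For the substitution step I would go term by term. The optimality-gap term $\frac{f(\mathbf{w}^0)-f^*}{\lambda\kappa T}$ becomes $\mathcal{O}(L\sqrt{K}(f(\mathbf{w}^0)-f^*)/\sqrt{T})$, which is the first half of the first summand. The linear penalty term $\frac{L\lambda}{2\kappa}(\sigma_l^2+B^2)$ becomes $\mathcal{O}(K^{-1/2}(B^2+\sigma_l^2)/\sqrt{T})$, which is the second half. The consensus noise term $\frac{96\,\eta_l^2 K^2 L^2(1+192L^2\eta_l^2 K^2)(3\sigma_g^2+2\sigma_l^2+4B^2)}{\kappa(1-\psi)^2}$ collapses to $\mathcal{O}((B^2+\sigma_l^2+\sigma_g^2)/(T(1-\psi)^2))$, yielding the second summand; its non-consensus counterpart $\frac{96\,L^2\eta_l^2 K^2}{\kappa}(3\sigma_g^2+2\sigma_l^2+B^2)$ has the same order without the $(1-\psi)^{-2}$ factor and is absorbed. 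The penalty-squared consensus term $\frac{6\lambda^2 L^2(1+\cdots)(\sigma_l^2+B^2)}{\kappa(1-\psi)^2}$ gives $\mathcal{O}((B^2+\sigma_l^2)/(KT(1-\psi)^2))$, the fourth summand, and dominates its non-consensus sibling $\frac{3\lambda^2 L^2(1+\cdots)(\sigma_l^2+B^2)}{\kappa}$. Finally, the SAM-specific term $\frac{6\rho^2\lambda L^2}{\kappa}(\sigma_l^2+B^2)$ becomes $\mathcal{O}(L(B^2+\sigma_l^2)/(T^{3/2}K^{1/2}))$, the third summand. Combining yields the DFedADMM-SAM rate; setting $\rho=0$ eliminates only the SAM term and recovers the DFedADMM rate verbatim.

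The main ``obstacle'' is purely bookkeeping: one must carry hidden powers of $L$ and $K$ consistently through each substitution (in particular the identity $\lambda/\eta_l = \sqrt{K}$ is what couples the optimality-gap term to the first $\mathcal{O}(1/\sqrt{T})$ summand), and then correctly identify which non-$(1-\psi)^{-2}$ terms are dominated by their $(1-\psi)^{-2}$ counterparts in the stated bound. No further analytical machinery beyond Theorem~\ref{convergence1} is required.
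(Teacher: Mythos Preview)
Your proposal is correct and follows exactly the approach the paper takes: the corollary is obtained by direct substitution of the stated scalings for $\eta_l$, $\lambda$, and $\rho$ into the bound of Theorem~\ref{convergence1}, followed by routine simplification and absorption of the non-$(1-\psi)^{-2}$ terms into their $(1-\psi)^{-2}$ counterparts. Your term-by-term bookkeeping (including the precondition checks and the observation $\lambda/\eta_l=\sqrt{K}$) is in fact more explicit than what the paper writes out, which simply asserts the corollary as an immediate consequence of the theorem.
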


	\begin{remark}
		From Corollary \ref{coro_DFedADMM}, we observe that the convergence speed of the DFedADMM and DFedADMM-SAM algorithms improves with an increase in the number of local iterations, K. This finding is consistent with the results obtained by \cite{Sun2022Decentralized}. When K is large enough, For DFedADMM, the term $\mathcal{O}\left(\frac{1}{\sqrt{T}}+\frac{B^2 +\sigma_{l}^2 +\sigma_{g}^2 }{T(1-\psi)^2}\right)$ will dominate the convergence bound. Similarly, for DFedADMM-SAM, the term $\mathcal{O}\Big(\frac{1}{\sqrt{T}}+\frac{B^2 +\sigma_{l}^2 +\sigma_{g}^2 }{T(1-\psi)^2}+ \frac{1}{T^{3/2}}\Big)$ will dominate the convergence bound. To achieve an error of $\epsilon > 0$, Both require $ \mathcal{O}\left(\frac{1}{\epsilon^2}\right) $ communication rounds, which is the same as SGD and DSGD. When $\psi$ is smaller, the convergence bounds of both algorithms will be tighter. This means that when the communication topology is better connected, tighter convergence bounds can be achieved.
	\end{remark}
	In summary, both the DFedADMM algorithm and the DFedADMM-SAM algorithm achieve a convergence rate of $ 1/\sqrt{T} $. The difference between them lies in the additional term $\frac{L(B^2+\sigma_l^2)}{T^{3/2}K^{1/2}}$, which arises from the additional SGD step for smoothness using the SAM local optimizer. This term can be considered negligible compared to the other terms.

\section{Experiments}
 In this section, we evaluate the efficacy of our algorithms compared to six baselines from CFL and DFL settings. In addition, we conduct several experiments to verify the impact of the communication network topology. 

\subsection{Experiment Setup}
\subsubsection{Dataset and Data Partition.} The effectiveness of the proposed DFedADMM and DFedADMM-SAM methods is evaluated on the MNIST, CIFAR-10, and CIFAR-100 datasets \cite{krizhevsky2009learning} in both IID and non-IID settings. To simulate non-IID data distribution among federated clients, the Dirichlet Partition \cite{hsu2019measuring} approach is utilized. This approach partitions the local data of each client by sampling label ratios from the Dirichlet distribution Dir($\alpha$), where parameters $\alpha=0.3$ and $\alpha=0.6$ are used for this purpose.

\subsubsection{Baselines.} The baselines used for comparison include several state-of-the-art (SOTA) methods in both the CFL and DFL settings. In the centralized setting, the baselines consist of FedAvg \cite{mcmahan2017communication} and FedSAM \cite{Qu2022Generalized}. For the decentralized setting, the baselines include D-PSGD \cite{lian2017can}, DFedAvg, DFedAvgM \cite{Sun2022Decentralized}, and DFedSAM \cite{shi2023improving}. These baselines are selected to provide comprehensive comparisons across different settings and methodologies.

\subsubsection{Implementation Details.} The total number of clients is set to 100, with 10\% of the clients participating in the communication. For decentralized methods, all clients perform the local iteration step, while for centralized methods, only the participating clients perform the local update. The local learning rate is initialized to 0.1 with a decay rate of 0.998 per communication round for all experiments. For SAM-based algorithms, such as DFedSAM and DFedADMM-SAM, we set the gradient perturbation weight $\rho = 0.1$. For the CIFAR-100 dataset, we adopt ResNet-18 \cite{simonyan2014very} as the backbone architecture for each client. Please refer to the \textbf{Appendix} for the specific backbone architectures used for MNIST and CIFAR-10. The number of communication rounds is set to 500 for all experiments comparing baselines on CIFAR-10/100, and 300 rounds for MNIST. For investigating the topology-aware performance, we employ a data partitioning scheme on the MNIST dataset using Dirichlet $\alpha=0.3$. Additionally, all ablation studies are conducted on the CIFAR-10 dataset with 300 communication rounds.

\subsubsection{Communication Configurations.} 
To ensure a fair comparison between decentralized and centralized approaches, we have implemented a dynamic and time-varying connection topology for the decentralized methods. This guarantees that the number of connections in each round does not exceed that of the central server. By controlling the number of client-to-client communications, we can match the communication volume of the centralized methods. It is worth noting that, based on previous research, the communication complexity is measured in terms of the frequency of local communications. For detailed experimental information, please refer to \textbf{Appendix} due to space constraints.

\subsection{Performance Evaluation}\label{exper-evaluation}
\subsubsection{Comparative performance analysis with baseline methods.} We assess the performance of DFedADMM and DFedADMM-SAM with $\rho=0.1$ on MNIST and CIFAR-10/100 datasets. We compare these methods with all baselines from CFL and DFL in both experimental settings. From the results in Figure \ref{fig:Compared_baselines} (found in the \textbf{Appendix}) and Table \ref{ta:all_baselines}, it is evident that our proposed algorithm, DFedADMM-SAM, achieves superior accuracy and convergence speed on the test sets compared to all baseline methods on the CIFAR-10/100 datasets. It achieves state-of-the-art (SOTA) performance. Even in the MNIST IID setting, our algorithm performs comparably to the CFL method. In terms of convergence speed, our proposed algorithm achieves convergence in fewer communication rounds. Specifically, the speed of the DFedAvgM method is comparable to our proposed algorithm in the early stages, but its accuracy is lower than our algorithm in the middle and later stages. Specifically, it reaches convergence in approximately $2/5$ of the specified communication rounds, and during this period, our algorithm demonstrates a noticeable performance gap compared to other baseline methods. This demonstrates the fast convergence speed of our algorithm (For further detailed discussion on the convergence speed, please refer to the \textbf{Appendix}), which also verifies that the introduced dual variable can control the model inconsistency well for DFL.

\noindent
\subsubsection{Impact of non-IID levels.} Our algorithm demonstrates certain robustness to heterogeneous data. In Table \ref{ta:all_baselines}, we can observe that our algorithm exhibits robustness to different data partition scenarios, including IID, Dirichlet 0.6, and Dirichlet 0.3. The presence of heterogeneous data makes training the global/consensus model more challenging. However, our algorithm achieves the highest accuracy across all data partition scenarios. Moreover, as the data heterogeneity increases (with $\alpha$ decreasing from 1 to 0.3, where 1 represents the IID scenario), our algorithm maintains minimal accuracy loss compared to baseline methods. For example, On the CIFAR-10 dataset, the difference in accuracy obtained by our algorithm between IID and Dirichlet 0.3 is 1.92\%, outperforming FedAvg (3.05\%), FedSAM (2.99\%), DFedSAM (2.41\%), and DFedAvgM (3.04\%). This result highlights the effectiveness of our algorithm in mitigating the impact of heterogeneous data on the global/consensus model through the introduction of dual constraints control.
\begin{table*}[ht]  
    \footnotesize  
    \centering 
    \vspace{-0.3cm}
    \renewcommand{\arraystretch}{0.95}  
    \begin{tabular}{cccc!{\vrule width \lightrulewidth}ccc!{\vrule width \lightrulewidth}ccc}  
        \toprule  
        \multirow{2}{*}{Algorithm}& \multicolumn{3}{c!{\vrule width \lightrulewidth}}{MNIST} & \multicolumn{3}{c!{\vrule width \lightrulewidth}}{CIFAR-10} & \multicolumn{3}{c}{CIFAR-100} \\  
        \cmidrule{2-10}  
        & \multicolumn{1}{c}{Dir 0.3} & \multicolumn{1}{c}{Dirt 0.6} & \multicolumn{1}{c!{\vrule width \lightrulewidth}}{IID} &
        \multicolumn{1}{c}{Dir 0.3} & \multicolumn{1}{c}{Dir 0.6} & \multicolumn{1}{c!{\vrule width \lightrulewidth}}{IID} & \multicolumn{1}{c}{Dirt 0.3} & \multicolumn{1}{c}{Dir 0.6} & \multicolumn{1}{c}{IID} \\    
        \midrule  
        FedAvg & 97.33 & 98.10 & 98.29     & 78.64 & 79.71 & 81.69 
        & 55.11 & 56.01   & 57.40 \\ 
        
        FedSAM  & 98.32 & 98.36 & \textbf{98.52}    & 80.56 & 81.55 & 83.55 
        & 56.52 & 57.39   & 57.79  \\    
        
        
        \midrule  
        
        D-PSGD  & 97.26 & 97.49 & 97.67     & 73.02 & 74.68 & 78.06 &
        56.17  & 56.61  & 57.39  \\    
        
        DFedAvg  & 97.78 & 97.74 & 98.07     & 77.56 & 78.59 & 79.42 &
        59.11 & 59.46  & 60.22 \\    
        
        DFedAvgM & 98.16 & 98.21 & 98.25     & 79.91 & 80.77 & 82.95 &
        54.72 & 55.93  & 56.24  \\    
        
        DFedSAM  & 98.34 & 98.21 & 98.31      & 80.10 & 80.82 & 82.44 &
        59.19 & 60.03  & 60.84  \\   
        
        DFedADMM & 98.07 & 98.07 & 98.09      & 80.50 & 80.29 & 80.05 & 
        58.83 & 58.89 & 60.80  \\   
        
        DFedADMM-SAM  & \textbf{98.44} & \textbf{98.47} & \textbf{98.44} 
        & \textbf{82.00} & \textbf{82.36} & \textbf{83.92} &
        \textbf{59.37} & \textbf{60.44} &  \textbf{61.68} \\ 
        \bottomrule  
    \end{tabular}  
    \captionsetup{position=bottom}
    \vspace{-0.2cm}
    \caption{\small Top 1 test accuracy (\%) on two datasets in both IID and non-IID settings.}  
    \label{ta:all_baselines}  
    \vspace{-0.2cm}
\end{table*}

\subsection{Topology-aware Performance}\label{topoaware}

\begin{table}[ht]
    \centering
    \renewcommand{\arraystretch}{1}
    \resizebox{0.9\linewidth}{!}{
        \begin{tabular}{ccccc} 
            \toprule
            \multicolumn{1}{c}{Algorithm} & \multicolumn{1}{c}{Ring} & \multicolumn{1}{c}{Grid} & \multicolumn{1}{c}{Exp} & \multicolumn{1}{c}{Full}  \\ 
            \midrule
            D-PSGD        & 93.31 & 94.17 & 93.48 & 95.02           \\
            DFedAvg       & 95.86 & 97.49 & 97.57 & 98.00           \\
            DFedAvgM      & 95.62 & 97.76 & 98.07 & 98.17           \\
            DFedSAM       & 96.86 & 97.98 & 98.01 & 98.13           \\
            DFedADMM      & 96.41 & 97.94 & 97.99 & 98.26           \\
            DFedADMM-SAM  & \textbf{97.08} & \textbf{98.27} & \textbf{98.29} & \textbf{98.50}           \\
            \bottomrule
        \end{tabular}
    }
    \captionsetup{position=bottom} 
    \vspace{-0.2cm}
    \caption{\small Top 1 test accuracy (\%) in various network topologies compared with decentralized algorithms on MNIST.}
    \label{ta:topo}
    \vspace{-0.2cm}
\end{table}

In this section, we examine the effects of different DFL communication topologies on various DFL methods using the MNIST dataset. Unlike in Section \ref{exper-evaluation}, the communication topology in this section is deterministic rather than random. Each client in the network only communicates with its predetermined neighbors, and the specific communication pattern is determined by the corresponding topology as illustrated in Figure \ref{fig:topo}.
In DFL, the degree of sparse connectivity $\psi$ follows the order: Ring $>$ Grid $>$ Exponential (Exp) $>$ Full-connected (Full) \cite{shi2023improving,zhu2022topology}.

From Figure \ref{fig:topo_aware} and Table \ref{ta:topo}, it can be observed that as the sparse connectivity $\psi$ decreases, the accuracy of our proposed algorithm on the test set increases. This is attributed to the fact that a well-designed communication topology allows clients to obtain better optimization starting points through communication, resulting in improved results. Furthermore, our algorithm consistently achieves higher test set accuracy compared to other DFL baselines across different communication topologies. This result showcases the effectiveness of our introduced dual constraints control in mitigating model inconsistency.

\begin{figure}[h]
    \centering
    \includegraphics[width=0.32\textwidth]{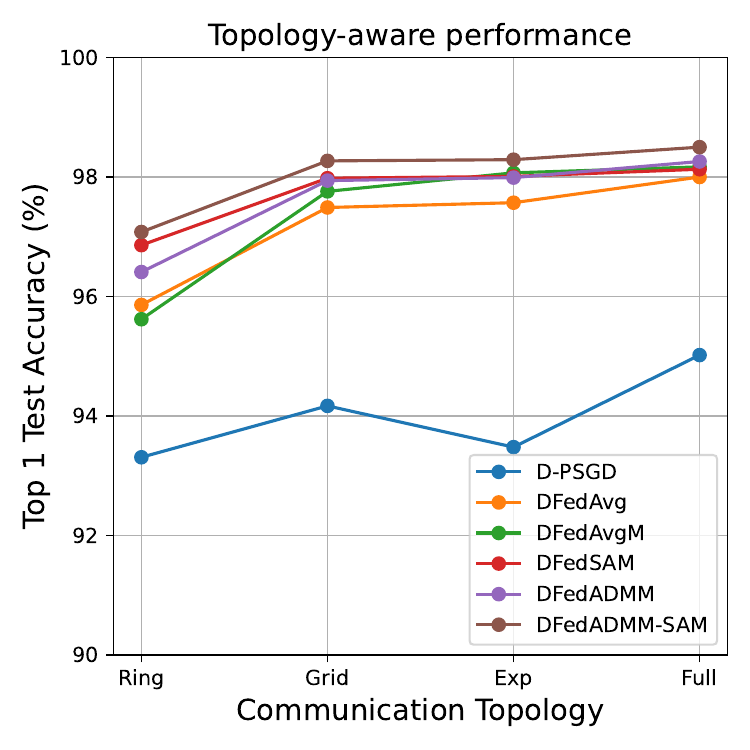}
     \vspace{-0.3cm}
 \caption{\small Topology-aware performance of DFL methods.}
    \label{eig}
      \vspace{-0.3cm}
\end{figure}

\subsection{ Ablation Study}
We conduct experiments to evaluate the impact of each component and hyper-parameter in DFedADMM-SAM, where $\rho$ was set to 0.1, except for the study of $\rho$ itself. All experiments are conducted with the "Random" topology, where each client communicates with only 10 randomly selected neighbors among 100 total clients in each round. The performance analysis on different topologies has been addressed in the Topology-aware Performance section.

\begin{figure}[ht]
\begin{center}
\subfigure{
    	\includegraphics[width=0.44\textwidth]{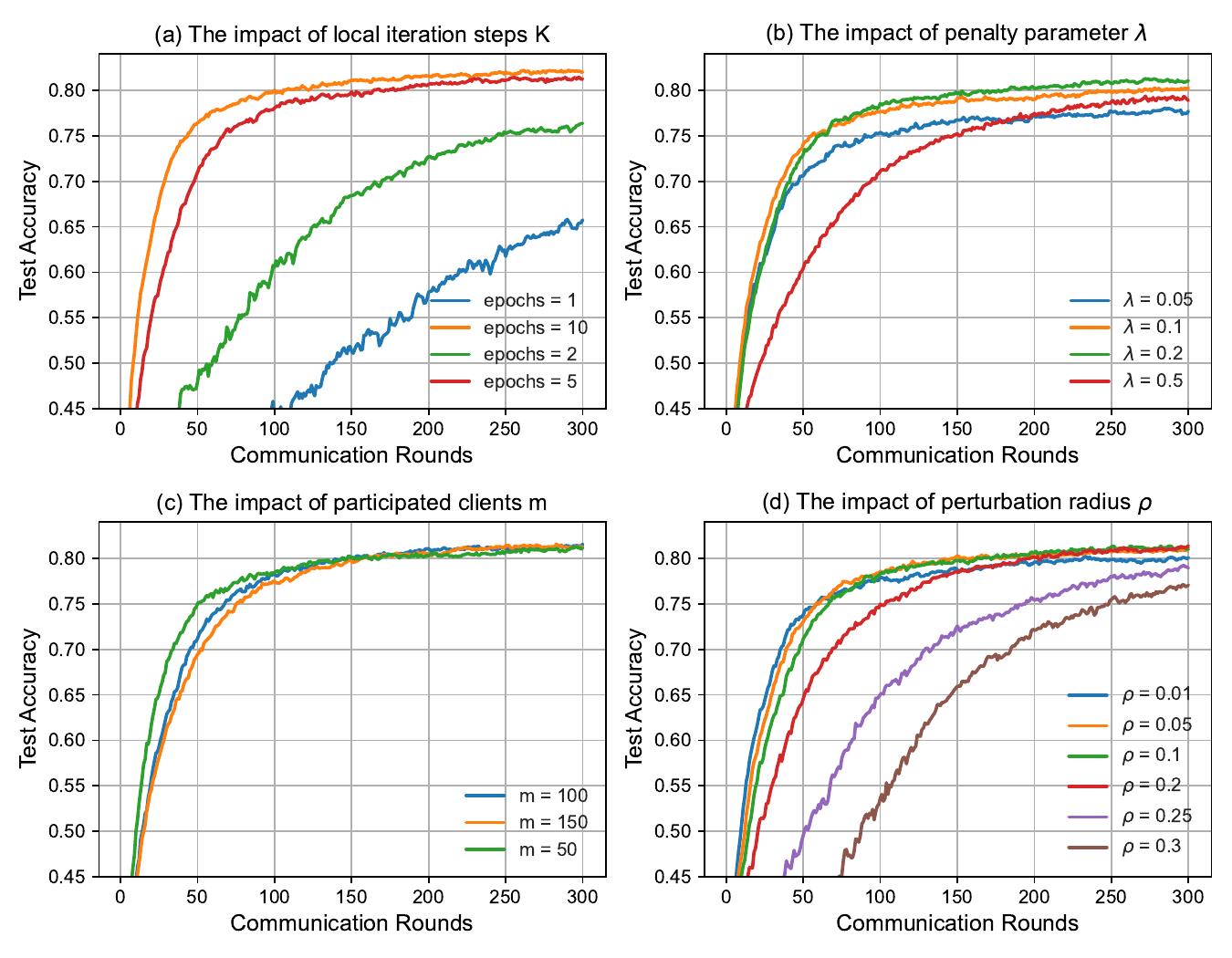}
    }
\end{center}
 \vspace{-0.4cm}
\caption{ \small Sensitivity of hyper-parameters: local iteration K, penalty parameter $\lambda$, number of participated clients m, perturbation radius $\rho$, respectively.}
\label{fig:hyper}
 \vspace{-0.3cm}
\end{figure}

\subsubsection{Number of participated clients $m$.}
In Figure \ref{fig:hyper} (c), we compare the performance across different numbers of client participation ($m=\{50, 100, 150\}$) while keeping the hyper-parameters consistent. The experimental results demonstrate the robustness of our algorithm with respect to the number of clients $m$. We observe that the convergence speed and generalization performance of the algorithm show no significant difference despite varying values of $m$. This is a crucial factor to consider when deploying the algorithm in practical scenarios, as in real-world federated learning settings, $m$ is often very large.

\subsubsection{Local iteration steps $K$.}
The previous work \cite{Sun2022Decentralized,shi2023improving} with theoretical guarantees has already indicated that larger values of $K$ can accelerate convergence. In order to explore the acceleration effect of larger local iteration steps $K$, we kept other hyper-parameters fixed and varied $K$ within the set $\{1, 2, 5, 10\}$. The results indicate that a larger value of $K=10$ exhibits faster convergence compared to $\{1, 2, 5\}$. This aligns with the results obtained in the convergence analysis section, which demonstrates that larger values of $K$ can tighten the upper bound.

\subsubsection{Perturbation radius $\rho$.} The perturbation radius $\rho$ also influences the performance as the accumulated perturbation increases with each communication round $T$. Generally, the choice of $\rho$ should not be too large, as it may obscure the true direction of gradients, leading to slow convergence or even divergence \cite{andriushchenko2022towards,zhu2023decentralized}.
To determine an appropriate value for DFedADMM-SAM, we conduct experiments using different perturbation radius from the set $\{0.01, 0.05, 0.1, 0.2, 0.25 ,0.3\}$, as depicted in Figure \ref{fig:hyper} (d). The results indicate that an appropriate value of $\rho$ can enhance the model's generalization ability, and within the given set, $\rho = 0.1$ is the optimal value. Furthermore, it is observed that as $\rho$ increases, the convergence speed gradually decreases, which aligns with our previous analysis.

\subsubsection{Penalty parameter \(\lambda\).} The penalty parameter $\lambda$ has an impact on the performance as the communication round $T$ increases. It represents a trade-off between the local optima and the consistency of the aggregated model. If $\lambda$ is too small, the client model parameters may get trapped near the aggregated model parameters, making it difficult to optimize. On the other hand, if $\lambda$ is too large, clients may move towards local optima, exacerbating local inconsistency. To select an appropriate value for the penalty parameter in our algorithms, we conduct several experiments using different values from the set $\{0.05, 0.1, 0.2, 0.5\}$, as shown in Figure \ref{fig:hyper} (b). We observe that we achieve better performance with $\lambda = 0.2$  compared to other values in the set.

\section{Conclusions And Future Work}
In this paper, we address two challenges in Decentralized Federated Learning (DFL): local inconsistency and local heterogeneous overfitting. We propose two DFL frameworks: DFedADMM and DFedADMM-SAM, to achieve better model consistency among clients.  DFedADMM addresses model inconsistency caused by heterogeneous data by introducing a dual constraint, and DFedADMM-SAM is an enhanced version of DFedADMM that incorporates gradient perturbation based on the Sharpness-Aware Minimization (SAM) optimizer to generate locally flat models. This helps alleviate local heterogeneous overfitting by seeking models with uniformly low loss values.
For theoretical findings, We explore the convergence rate in DFL by investigating the impacts of gradient perturbation in SAM, the local epoch K, the penalty parameter in ADMM, the network topology, and data homogeneity. Our theoretical findings provide a unified understanding of these factors on the convergence rate in DFL.
Furthermore, empirical results further validate the effectiveness of our proposed approaches. Moving forward, future work will focus on understanding the generalization of ADMM in the context of DFL.

	\bibliography{aaai24}

\begin{thebibliography}{40}
\providecommand{\natexlab}[1]{#1}

\bibitem[{Acar et~al.(2021)Acar, Zhao, Navarro, Mattina, Whatmough, and
  Saligrama}]{acar2021federated}
Acar, D. A.~E.; Zhao, Y.; Navarro, R.~M.; Mattina, M.; Whatmough, P.~N.; and
  Saligrama, V. 2021.
\newblock Federated learning based on dynamic regularization.
\newblock \emph{arXiv preprint arXiv:2111.04263}.

\bibitem[{Andriushchenko and Flammarion(2022)}]{andriushchenko2022towards}
Andriushchenko, M.; and Flammarion, N. 2022.
\newblock Towards understanding sharpness-aware minimization.
\newblock In \emph{International Conference on Machine Learning}, 639--668.
  PMLR.

\bibitem[{Beltr{\'a}n et~al.(2022)Beltr{\'a}n, P{\'e}rez, S{\'a}nchez, Bernal,
  Bovet, P{\'e}rez, P{\'e}rez, and Celdr{\'a}n}]{beltran2022decentralized}
Beltr{\'a}n, E. T.~M.; P{\'e}rez, M.~Q.; S{\'a}nchez, P. M.~S.; Bernal, S.~L.;
  Bovet, G.; P{\'e}rez, M.~G.; P{\'e}rez, G.~M.; and Celdr{\'a}n, A.~H. 2022.
\newblock Decentralized federated learning: Fundamentals, state-of-the-art,
  frameworks, trends, and challenges.
\newblock \emph{arXiv preprint arXiv:2211.08413}.

\bibitem[{Bertsekas(2014)}]{bertsekas2014constrained}
Bertsekas, D.~P. 2014.
\newblock \emph{Constrained optimization and Lagrange multiplier methods}.
\newblock Academic press.

\bibitem[{Boyd et~al.(2011)Boyd, Parikh, Chu, Peleato, Eckstein
  et~al.}]{boyd2011distributed}
Boyd, S.; Parikh, N.; Chu, E.; Peleato, B.; Eckstein, J.; et~al. 2011.
\newblock Distributed optimization and statistical learning via the alternating
  direction method of multipliers.
\newblock \emph{Foundations and Trends{\textregistered} in Machine learning},
  3(1): 1--122.

\bibitem[{Dai et~al.(2022)Dai, Shen, He, Tian, and Tao}]{Rong2022DisPFL}
Dai, R.; Shen, L.; He, F.; Tian, X.; and Tao, D. 2022.
\newblock DisPFL: Towards Communication-Efficient Personalized Federated
  Learning via Decentralized Sparse Training.
\newblock In \emph{International Conference on Machine Learning}, 4587--4604.
  PMLR.

\bibitem[{Du et~al.(2021)Du, Yan, Feng, Zhou, Zhen, Goh, and
  Tan}]{du2021efficient}
Du, J.; Yan, H.; Feng, J.; Zhou, J.~T.; Zhen, L.; Goh, R. S.~M.; and Tan, V.
  2021.
\newblock Efficient Sharpness-aware Minimization for Improved Training of
  Neural Networks.
\newblock In \emph{International Conference on Learning Representations}.

\bibitem[{Foret et~al.(2020)Foret, Kleiner, Mobahi, and
  Neyshabur}]{foret2020sharpness}
Foret, P.; Kleiner, A.; Mobahi, H.; and Neyshabur, B. 2020.
\newblock Sharpness-aware Minimization for Efficiently Improving
  Generalization.
\newblock In \emph{International Conference on Learning Representations}.

\bibitem[{Gabay and Mercier(1976)}]{gabay1976dual}
Gabay, D.; and Mercier, B. 1976.
\newblock A dual algorithm for the solution of nonlinear variational problems
  via finite element approximation.
\newblock \emph{Computers \& mathematics with applications}, 2(1): 17--40.

\bibitem[{Gabrielli, Pica, and Tolomei(2023)}]{gabrielli2023survey}
Gabrielli, E.; Pica, G.; and Tolomei, G. 2023.
\newblock A Survey on Decentralized Federated Learning.
\newblock \emph{arXiv preprint arXiv:2308.04604}.

\bibitem[{Gong, Li, and Freris(2022)}]{gong2022fedadmm}
Gong, Y.; Li, Y.; and Freris, N.~M. 2022.
\newblock FedADMM: A robust federated deep learning framework with adaptivity
  to system heterogeneity.
\newblock In \emph{2022 IEEE 38th International Conference on Data Engineering
  (ICDE)}, 2575--2587. IEEE.

\bibitem[{Hsu, Qi, and Brown(2019)}]{hsu2019measuring}
Hsu, T.-M.~H.; Qi, H.; and Brown, M. 2019.
\newblock Measuring the effects of non-identical data distribution for
  federated visual classification.
\newblock \emph{arXiv preprint arXiv:1909.06335}.

\bibitem[{Krizhevsky, Hinton et~al.(2009)}]{krizhevsky2009learning}
Krizhevsky, A.; Hinton, G.; et~al. 2009.
\newblock Learning multiple layers of features from tiny images.

\bibitem[{Kwon et~al.(2021)Kwon, Kim, Park, and Choi}]{kwon2021asam}
Kwon, J.; Kim, J.; Park, H.; and Choi, I.~K. 2021.
\newblock Asam: Adaptive sharpness-aware minimization for scale-invariant
  learning of deep neural networks.
\newblock In \emph{International Conference on Machine Learning}, 5905--5914.
  PMLR.

\bibitem[{Lalitha et~al.(2019)Lalitha, Kilinc, Javidi, and
  Koushanfar}]{lalitha2019peer}
Lalitha, A.; Kilinc, O.~C.; Javidi, T.; and Koushanfar, F. 2019.
\newblock Peer-to-peer federated learning on graphs.
\newblock \emph{arXiv preprint arXiv:1901.11173}.

\bibitem[{Lalitha et~al.(2018)Lalitha, Shekhar, Javidi, and
  Koushanfar}]{lalitha2018fully}
Lalitha, A.; Shekhar, S.; Javidi, T.; and Koushanfar, F. 2018.
\newblock Fully decentralized federated learning.
\newblock In \emph{Third workshop on Bayesian Deep Learning (NeurIPS)}.

\bibitem[{Li et~al.(2020)Li, Sahu, Talwalkar, and Smith}]{li2020federated}
Li, T.; Sahu, A.~K.; Talwalkar, A.; and Smith, V. 2020.
\newblock Federated learning: Challenges, methods, and future directions.
\newblock \emph{IEEE signal processing magazine}, 37(3): 50--60.

\bibitem[{Lian et~al.(2017)Lian, Zhang, Zhang, Hsieh, Zhang, and
  Liu}]{lian2017can}
Lian, X.; Zhang, C.; Zhang, H.; Hsieh, C.-J.; Zhang, W.; and Liu, J. 2017.
\newblock Can decentralized algorithms outperform centralized algorithms? a
  case study for decentralized parallel stochastic gradient descent.
\newblock In \emph{Advances in Neural Information Processing Systems},
  5330--5340.

\bibitem[{McMahan et~al.(2017)McMahan, Moore, Ramage, Hampson, and
  y~Arcas}]{mcmahan2017communication}
McMahan, B.; Moore, E.; Ramage, D.; Hampson, S.; and y~Arcas, B.~A. 2017.
\newblock Communication-efficient learning of deep networks from decentralized
  data.
\newblock In \emph{Artificial intelligence and statistics}, 1273--1282. PMLR.

\bibitem[{Nedic and Ozdaglar(2009)}]{nedic2009distributed}
Nedic, A.; and Ozdaglar, A. 2009.
\newblock Distributed subgradient methods for multi-agent optimization.
\newblock \emph{IEEE Transactions on Automatic Control}, 54(1): 48--61.

\bibitem[{Qu et~al.(2022)Qu, Li, Duan, Liu, Tang, and Lu}]{Qu2022Generalized}
Qu, Z.; Li, X.; Duan, R.; Liu, Y.; Tang, B.; and Lu, Z. 2022.
\newblock Generalized Federated Learning via Sharpness Aware Minimization.
\newblock In \emph{International Conference on Machine Learning, {ICML}},
  18250--18280.

\bibitem[{Shi et~al.(2023{\natexlab{a}})Shi, Liu, Sun, Lin, Shen, Wang, and
  Tao}]{shi2023towards}
Shi, Y.; Liu, Y.; Sun, Y.; Lin, Z.; Shen, L.; Wang, X.; and Tao, D.
  2023{\natexlab{a}}.
\newblock Towards More Suitable Personalization in Federated Learning via
  Decentralized Partial Model Training.
\newblock \emph{arXiv preprint arXiv:2305.15157}.

\bibitem[{Shi et~al.(2023{\natexlab{b}})Shi, Liu, Wei, Shen, Wang, and
  Tao}]{shi2023make}
Shi, Y.; Liu, Y.; Wei, K.; Shen, L.; Wang, X.; and Tao, D. 2023{\natexlab{b}}.
\newblock Make landscape flatter in differentially private federated learning.
\newblock In \emph{Proceedings of the IEEE/CVF Conference on Computer Vision
  and Pattern Recognition}, 24552--24562.

\bibitem[{Shi et~al.(2023{\natexlab{c}})Shi, Shen, Wei, Sun, Yuan, Wang, and
  Tao}]{shi2023improving}
Shi, Y.; Shen, L.; Wei, K.; Sun, Y.; Yuan, B.; Wang, X.; and Tao, D.
  2023{\natexlab{c}}.
\newblock Improving the model consistency of decentralized federated learning.
\newblock \emph{arXiv preprint arXiv:2302.04083}.

\bibitem[{Simonyan and Zisserman(2014)}]{simonyan2014very}
Simonyan, K.; and Zisserman, A. 2014.
\newblock Very deep convolutional networks for large-scale image recognition.
\newblock \emph{arXiv preprint arXiv:1409.1556}.

\bibitem[{Sun, Li, and Wang(2022)}]{Sun2022Decentralized}
Sun, T.; Li, D.; and Wang, B. 2022.
\newblock Decentralized Federated Averaging.
\newblock \emph{IEEE Transactions on Pattern Analysis and Machine
  Intelligence}.

\bibitem[{Sun et~al.(2023)Sun, Shen, Huang, Ding, and Tao}]{sun2023fedspeed}
Sun, Y.; Shen, L.; Huang, T.; Ding, L.; and Tao, D. 2023.
\newblock Fedspeed: Larger local interval, less communication round, and higher
  generalization accuracy.
\newblock \emph{arXiv preprint arXiv:2302.10429}.

\bibitem[{Tran~Dinh et~al.(2021)Tran~Dinh, Pham, Phan, and
  Nguyen}]{tran2021feddr}
Tran~Dinh, Q.; Pham, N.~H.; Phan, D.; and Nguyen, L. 2021.
\newblock FedDR--randomized Douglas-Rachford splitting algorithms for nonconvex
  federated composite optimization.
\newblock \emph{Advances in Neural Information Processing Systems}, 34:
  30326--30338.

\bibitem[{Wang, Marella, and Anderson(2022)}]{wang2022fedadmm}
Wang, H.; Marella, S.; and Anderson, J. 2022.
\newblock Fedadmm: A federated primal-dual algorithm allowing partial
  participation.
\newblock In \emph{2022 IEEE 61st Conference on Decision and Control (CDC)},
  287--294. IEEE.

\bibitem[{Wang et~al.(2021)Wang, Charles, Xu, Joshi, McMahan, Al-Shedivat,
  Andrew, Avestimehr, Daly, Data et~al.}]{wang2021field}
Wang, J.; Charles, Z.; Xu, Z.; Joshi, G.; McMahan, H.~B.; Al-Shedivat, M.;
  Andrew, G.; Avestimehr, S.; Daly, K.; Data, D.; et~al. 2021.
\newblock A field guide to federated optimization.
\newblock \emph{arXiv preprint arXiv:2107.06917}.

\bibitem[{Wang et~al.(2023)Wang, Xu, Wang, Chang, Quek, and
  Sun}]{wang2023beyond}
Wang, S.; Xu, Y.; Wang, Z.; Chang, T.-H.; Quek, T.~Q.; and Sun, D. 2023.
\newblock Beyond ADMM: A Unified Client-Variance-Reduced Adaptive Federated
  Learning Framework.
\newblock In \emph{Proceedings of the AAAI Conference on Artificial
  Intelligence}, volume~37, 10175--10183.

\bibitem[{Yang et~al.(2019)Yang, Liu, Cheng, Kang, Chen, and
  Yu}]{yang2019federated}
Yang, Q.; Liu, Y.; Cheng, Y.; Kang, Y.; Chen, T.; and Yu, H. 2019.
\newblock Federated Learning. Morgan \& Claypool Publishers.

\bibitem[{Yuan et~al.(2023)Yuan, Sun, Yu, and Wang}]{yuan2023decentralized}
Yuan, L.; Sun, L.; Yu, P.~S.; and Wang, Z. 2023.
\newblock Decentralized Federated Learning: A Survey and Perspective.
\newblock \emph{arXiv preprint arXiv:2306.01603}.

\bibitem[{Zhang and Hong(2021)}]{zhang2021connection}
Zhang, X.; and Hong, M. 2021.
\newblock On the Connection Between Fed-Dyn and FedPD.
\newblock \emph{FedDyn\_FedPD. pdf}.

\bibitem[{Zhang et~al.(2021)Zhang, Hong, Dhople, Yin, and Liu}]{fedpd}
Zhang, X.; Hong, M.; Dhople, S.; Yin, W.; and Liu, Y. 2021.
\newblock FedPD: A federated learning framework with adaptivity to non-iid
  data.
\newblock \emph{IEEE Transactions on Signal Processing}, 69: 6055--6070.

\bibitem[{Zhao, Zhang, and Hu(2022)}]{zhao2022penalizing}
Zhao, Y.; Zhang, H.; and Hu, X. 2022.
\newblock Penalizing gradient norm for efficiently improving generalization in
  deep learning.
\newblock In \emph{International Conference on Machine Learning}, 26982--26992.
  PMLR.

\bibitem[{Zhong et~al.(2022)Zhong, Ding, Shen, Mi, Liu, Du, and
  Tao}]{zhong2022improving}
Zhong, Q.; Ding, L.; Shen, L.; Mi, P.; Liu, J.; Du, B.; and Tao, D. 2022.
\newblock Improving sharpness-aware minimization with fisher mask for better
  generalization on language models.
\newblock \emph{arXiv preprint arXiv:2210.05497}.

\bibitem[{Zhou and Li(2023)}]{zhou2023federated}
Zhou, S.; and Li, G.~Y. 2023.
\newblock Federated learning via inexact ADMM.
\newblock \emph{IEEE Transactions on Pattern Analysis and Machine
  Intelligence}.

\bibitem[{Zhu et~al.(2023)Zhu, He, Chen, Song, and Tao}]{zhu2023decentralized}
Zhu, T.; He, F.; Chen, K.; Song, M.; and Tao, D. 2023.
\newblock Decentralized SGD and Average-direction SAM are Asymptotically
  Equivalent.
\newblock arXiv:2306.02913.

\bibitem[{Zhu et~al.(2022)Zhu, He, Zhang, Niu, Song, and Tao}]{zhu2022topology}
Zhu, T.; He, F.; Zhang, L.; Niu, Z.; Song, M.; and Tao, D. 2022.
\newblock Topology-aware generalization of decentralized sgd.
\newblock In \emph{International Conference on Machine Learning}, 27479--27503.
  PMLR.

\end{thebibliography}

	\clearpage
	\appendix
	\onecolumn
	\vspace{0.5in}
	\begin{center}
		\rule{6.875in}{0.7pt}\\ 
		{\Large\bf Appendix for `` DFedADMM: Dual Constraints Controlled Model Inconsistency \\ for Decentralized Federated Learning ''}
		\rule{6.875in}{0.7pt}
	\end{center}
	\section{ More Details on Algorithm Implementation}\label{implemental_details}
\subsection{Datasets and backbones.} 
MNIST is a relatively simple dataset for handwritten digit recognition. CIFAR-10 and CIFAR-100 \cite{krizhevsky2009learning} are labeled subsets of the larger 80 million images dataset. These three datasets share the same set of 60,000 input images. CIFAR-100 provides a more detailed labeling with 100 unique labels, while CIFAR-10 has 10 unique labels. The ResNet as the backbone is used for CIFAR-100, where the batch-norm layers are replaced by group-norm layers due to a detrimental effect of batch-norm. The backbone architecture used for MNIST is as follows: Cascading three fully connected layers, the architecture includes 200 neurons in the initial layer, followed by an additional 200 neurons in the subsequent layer, and concludes with 10 output neurons in the final layer.This architecture can be found in the work of \cite{Sun2022Decentralized}. The backbone architecture used for CIFAR-10 is as follows: The architecture of the network consists of two convolutional layers, one max pooling layer, and three fully connected layers. The first convolutional layer takes a 3-channel input and generates a 64-channel output using a kernel size of 5. Similarly, the second convolutional layer has 64 input channels, 64 output channels, and a kernel size of 5. Following each convolutional layer, spatial dimensions are reduced through max pooling with a kernel size of 2 and a stride of 2. The subsequent fully connected layers comprise 384 neurons in the first layer, 192 neurons in the second layer, and a final layer with an output size that matches the number of classes in the classification task.

\subsection{More details about baselines.} 
FedAvg is a classical FL method that utilizes weighted averaging to train a global model in parallel with a central server. FedSAM applies the Sharpness-Aware Minimization (SAM) method as the local optimizer to enhance the model's generalization performance. In the decentralized setting, D-PSGD is a well-known decentralized parallel SGD method that aims to reach a consensus model. It should be noted that in this work, we focus on decentralized FL, which involves multiple local iterates during training, as opposed to decentralized learning/training that focuses on one-step local training. For instance, D-PSGD \cite{lian2017can} is a decentralized training algorithm that uses one-step SGD to train local models in each communication round. DFedAvg is a decentralized version of FedAvg, where clients perform local training and share updated models through communication rounds. DFedAvgM extends on DFedAvg by incorporating SGD with momentum, where each client performs multiple local training steps before each communication round. Furthermore, DFedSAM enhances the generalization ability of the aggregated model by applying gradient perturbation techniques to generate locally flattened models.

\subsection{Hyperparameters.}
In our experiments, we set the total number of clients to 100 for both centralized and decentralized settings. In the decentralized setting, each client is connected to at most 10 neighbors. For the centralized setting, the sample ratio of clients is set to 0.1.
The local learning rate is initialized to 0.1 and decayed by a factor of 0.998 after each communication round in all experiments. For centralized methods, the global learning rate is set to 1.0. The batch size is fixed at 128 for all experiments. We run 500 global communication rounds for CIFAR-10/100 and 300 global communication rounds for MNIST.
We use the SGD optimizer with a weighted decay parameter of 0.0005 for all baselines except for DFedADMM and DFedADMM-SAM. In algorithms based on the SAM optimizer, such as FedSAM, DFedSAM, and DFedADMM-SAM, the hyper-parameter for gradient perturbation, denoted as $\rho$, is set to 0.1. In addition, in our algorithm, the penalty parameter $\lambda$ is set to 0.1 for all experiments. Following the approach in \cite{Sun2022Decentralized}, we use a momentum of 0.9 for the local optimization in DFedAvgM. For the number of local iterations $K$, the training epoch in D-PSGD is set to 1, while for all other methods, it is set to 5.

\subsection{Communication configurations.} 
In the decentralized methods, such as mentioned in \cite{Rong2022DisPFL}, the communication volume is generally higher compared to centralized methods. This is because each client in the network topology needs to transmit their local information to their neighbors. However, in the centralized setting, only a subset of clients is sampled to upload their parameter updates to a central server. To ensure a fair comparison, we adopt a dynamic and time-varying connection topology for the decentralized methods in \textbf{Section} \ref{exper-evaluation}. Specifically, we restrict each client to communicate with a maximum of 10 randomly selected neighbors, without replacement, from all the clients. Moreover, only 10 clients, who are neighbors to each other, are allowed to perform one gossip step to exchange their local information in the decentralized methods.
\newpage
\section{More experiments results}
\subsection{Convergence speed}
We demonstrate the needed communication rounds for each method to reach a target accuracy as follows. Results for MNIST are in Table \ref{MNIST-convergence}, while results for CIFAR-10 in Table \ref{CIFAR10-convergence} and CIFAR1-100 in Table \ref{CIFAR100-convergence}.

\begin{table*}[ht]
\centering
\small
\vskip 0.15in
\resizebox{15cm}{!}{\begin{tabular}{cccc|ccc|ccc}
\toprule
\multirow{2}{*}{Methods} & \multicolumn{3}{c|}{Dir 0.3} & \multicolumn{3}{c|}{Dir 0.6} & \multicolumn{3}{c}{IID} \\
\cmidrule{2-10}
 & Acc@97      & Acc@98      & Acc@98.2      & Acc@97.5      & Acc@98      & Acc@98.2      & Acc@97.5      & Acc@98      & Acc@98.2    \\ \midrule
FedAvg & 60 & \textgreater 300 & \textgreater 300 & 76 & 194 & \textgreater 300 & 46 & 108 & 135 \\

FedSAM & 41 & 135 & 201 & 55 & 121 & 177 & 33 & 68 & 98 \\

D-PSGD & 223 & \textgreater 300 & \textgreater 300 & \textgreater 300 & \textgreater 300 & \textgreater 300 & 227 & \textgreater 300 & \textgreater 300 \\

DFedAvg & 87 & \textgreater 300 & \textgreater 300 & 107 & \textgreater 300 & \textgreater 300 & 64 & 229 & \textgreater 300 \\

DFedAvgM & 19 & 104 & \textgreater 300 & 26 & 67 & 258 & 19 & 53 & 283 \\

DFedSAM & 51 & 140 & 218 & 67 & 163 & 294 & 53 & 105 & 200 \\

DFedADMM & 19 & 115 & \textgreater 300 & 22 & 45 & \textgreater 300 & 18 & 53 & \textgreater 300 \\

DFedADMM-SAM & \textbf{16} & \textbf{32} & \textbf{40} & \textbf{21} & \textbf{30} & \textbf{41} & \textbf{16} & \textbf{25} & \textbf{32} \\
\midrule
\end{tabular}}
\captionsetup{position=bottom} 
\caption{Averaged needed communication rounds for each method to a target accuracy on MNIST dataset.}
\label{MNIST-convergence}
\vskip -0.1in
\end{table*}

\begin{table*}[h]
\centering
\small
\resizebox{15cm}{!}{\begin{tabular}{cccc|ccc|ccc}
\toprule
\multirow{2}{*}{Methods} & \multicolumn{3}{c|}{Dir 0.3} & \multicolumn{3}{c|}{Dir 0.6} & \multicolumn{3}{c}{IID} \\
\cmidrule{2-10}
 & Acc@73      & Acc@75      & Acc@80      & Acc@75         & Acc@79         & Acc@81      & Acc@78         & Acc@80         & Acc@82    \\ \midrule
FedAvg & 113 & 154 & \textgreater 500 & 121 & 392 & \textgreater 500 & 143 & 243 & \textgreater 500 \\

FedSAM & 120 & 141 & 392 & 111 & 211 & 399 & 128 & 195 & 329 \\

D-PSGD & 500 & \textgreater 500 & \textgreater 500 & \textgreater 500 & \textgreater 500 & \textgreater 500 & 499 & \textgreater 500 & \textgreater 500 \\

DFedAvg & 124 & 177 & \textgreater 500 & 138 & \textgreater 500 & \textgreater 500 & 190 & \textgreater 500 & \textgreater 500 \\

DFedAvgM & 61 & 91 & \textgreater 500 & 61 & 217 & \textgreater 500 & 58 & 101 & 237 \\

DFedSAM & 145 & 182 & 473 & 146 & 281 & \textgreater 500 & \textgreater 237 & \textgreater 237 & \textgreater 237 \\

DFedADMM & \textbf{45} & \textbf{62} & 330 & \textbf{52} & 183 & \textgreater 500 & 60 & 236 & \textgreater 500 \\

DFedADMM-SAM & 63 & 72 & \textbf{167} & 53 & \textbf{95} & \textbf{165} & \textbf{50} & \textbf{67} & \textbf{125} \\
\midrule
\end{tabular}}
\captionsetup{position=bottom} 
\caption{Averaged needed communication rounds for each method to a target accuracy on CIFAR-10 dataset.}
\label{CIFAR10-convergence}
\end{table*}

\begin{table*}[h]
\centering
\small
\resizebox{15cm}{!}{\begin{tabular}{cccc|ccc|ccc}
\toprule
\multirow{2}{*}{Methods} & \multicolumn{3}{c|}{Dir 0.3} & \multicolumn{3}{c|}{Dir 0.6} & \multicolumn{3}{c}{IID} \\
\cmidrule{2-10}
 & Acc@50      & Acc@55      & Acc@58      & Acc@52         & Acc@55         & Acc@58      & Acc@55         & Acc@58         & Acc@60    \\ \midrule
FedAvg & 98 & 381 & \textgreater 500 & 105 & 257 & \textgreater 500 & 126 & \textgreater 500 & \textgreater 500 \\

FedSAM & 136 & 311 & \textgreater 500 & 147 & 261 & \textgreater 500 & \textgreater 500 & \textgreater 500 & \textgreater 500 \\

D-PSGD & 232 & 402 & \textgreater 500 & 261 & 386 & \textgreater 500 & 320 & \textgreater 500 & \textgreater 500 \\

DFedAvg & 57 & 111 & 268 & 61 & 95 & 212 & 66 & 135 & 391 \\

DFedAvgM & 59 & \textgreater 500 & \textgreater 500 & 47 & 111 & \textgreater 500 & 65 & \textgreater 500 & \textgreater 500 \\

DFedSAM & 91 & 153 & 287 & 92 & 135 & 233 & 86 & 141 & 318 \\

DFedADMM & \textbf{42} & \textbf{67} & 123 & \textbf{33} & \textbf{42} & \textbf{72} & \textbf{28} & \textbf{40} & \textbf{54} \\

DFedADMM-SAM & 44 & \textbf{67} & \textbf{108} & 40 & 54 & 76 & 31 & 45 & 70 \\
\midrule
\end{tabular}}
\captionsetup{position=bottom} 
\caption{Averaged needed communication rounds for each method to a target accuracy on CIFAR-100 dataset.}
\label{CIFAR100-convergence}
\end{table*}

From Tables \ref{MNIST-convergence}, \ref{CIFAR10-convergence}, and \ref{CIFAR100-convergence}, it can be observed that our proposed algorithms, DFedADMM and DFedADMM-SAM, achieve the minimum communication cost for a given target accuracy. Compared to other baseline methods, our algorithms are able to reach 90\% of the final convergence accuracy in fewer communication rounds. This result demonstrates the fast convergence speed of our algorithms.

\newpage

\subsection{Different topology}

\begin{figure*}[ht]
\begin{center}
\subfigure{
    	\includegraphics[width=0.7\textwidth]{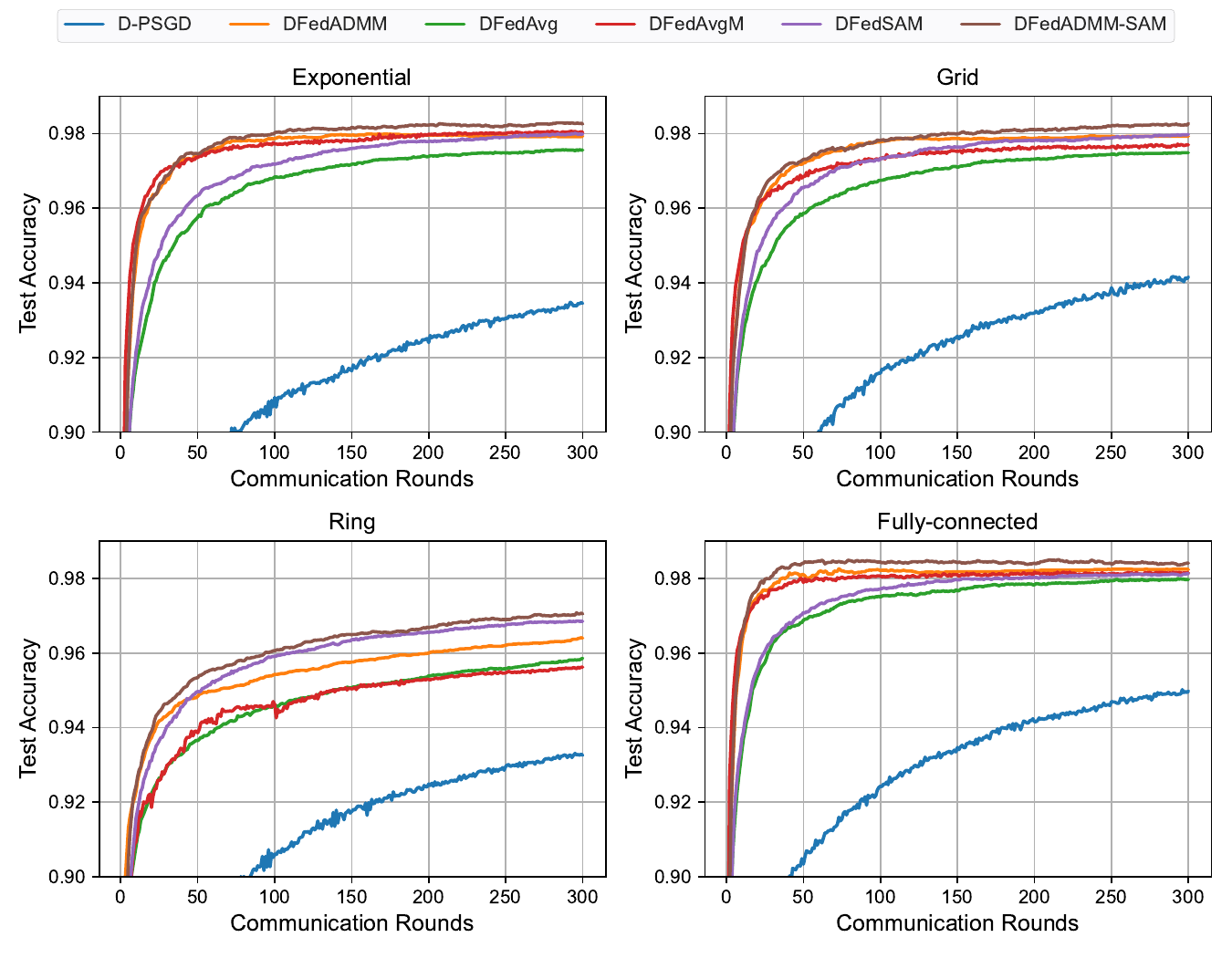}
    }
\end{center}
 \vspace{-0.4cm}
\caption{ \small Accuracy of different DFL algorithms with different decentralized topologies on the test dataset.}
\label{fig:topo_aware}
 \vspace{-0.3cm}
\end{figure*}

This Figure \ref{fig:topo_aware} complements Table \ref{ta:topo}, showing that as the connectivity of the communication topology improves, the proposed algorithm achieves higher accuracy on the test set. This observation aligns with the conclusion of Corollary \ref{coro_DFedADMM} in Section \ref{analysis}.

\clearpage

\subsection{Performance Evaluation}

\begin{figure*}[h]
\begin{center}
\subfigure[MNIST]{
    	\includegraphics[width=0.9\textwidth]{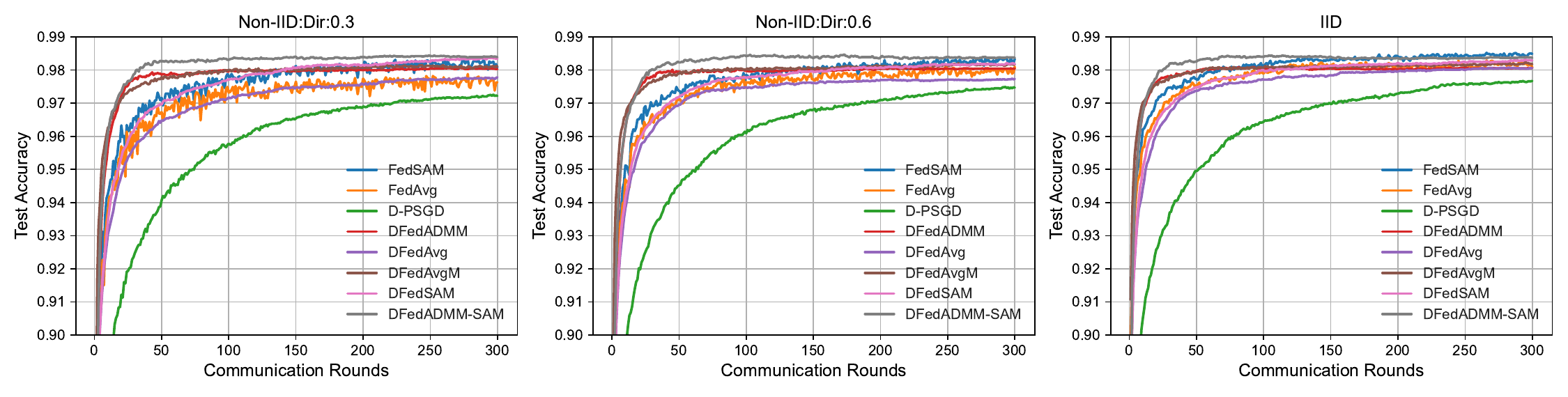}
        \label{mnist_acc}
    }
\subfigure[CIFAR-10]{
    	\includegraphics[width=0.9\textwidth]{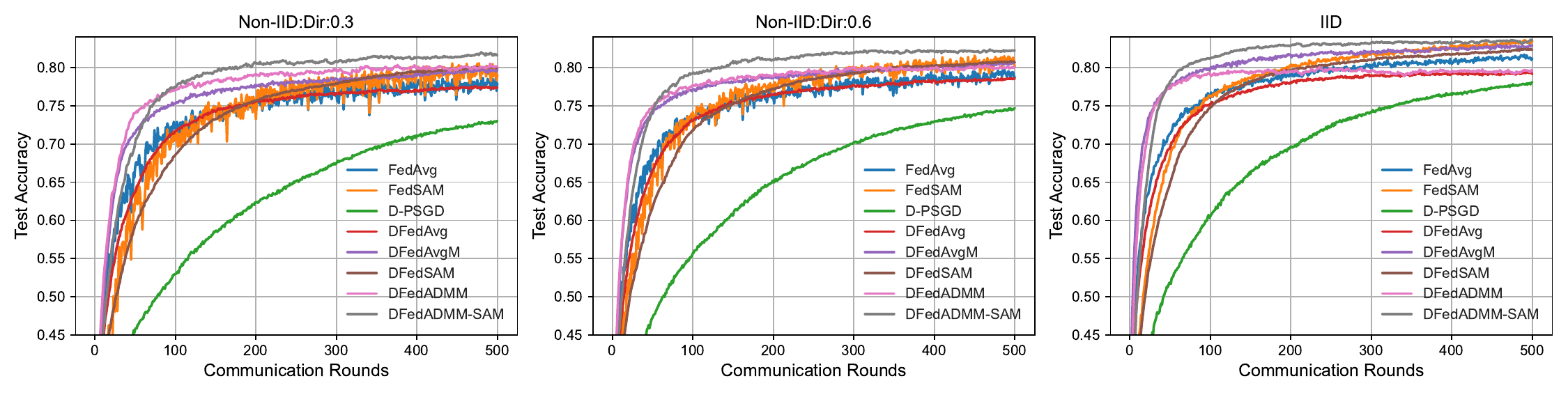}
        \label{cifar10_acc}
    }
\subfigure[CIFAR-100]{
    \includegraphics[width=0.9\textwidth]{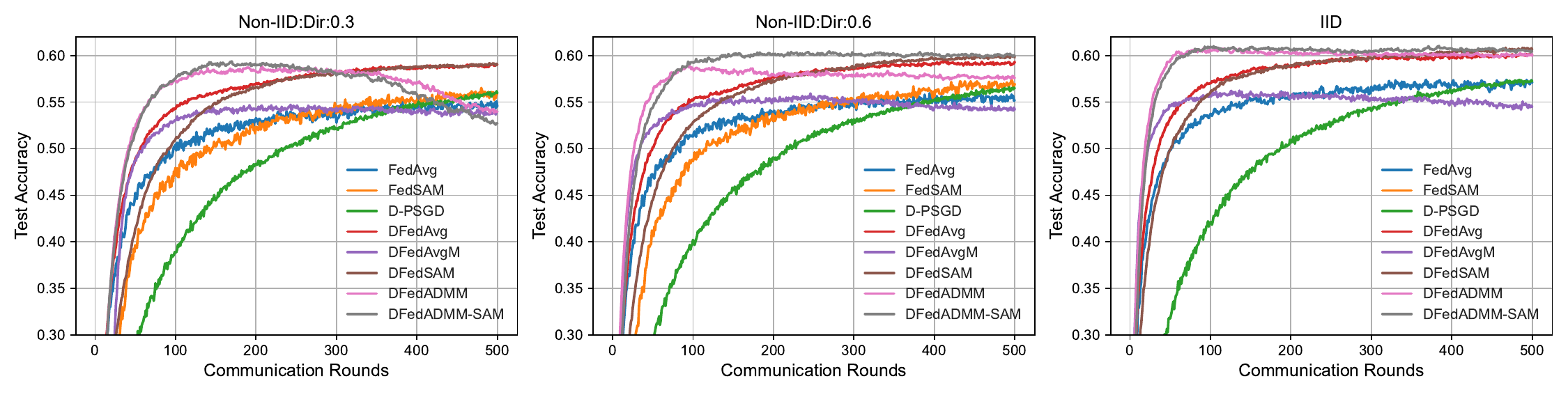}
    \label{cifar100_acc}
}
\end{center}
\caption{ \small Test accuracy of all baselines on (a) MNIST, (b) CIFAR-10, and (c) CIFAR-100 in both IID and non-IID settings.}
\label{fig:Compared_baselines}
\vspace{-0.3cm}
\end{figure*}
Similar to Table \ref{ta:all_baselines}, Figure \ref{fig:Compared_baselines} presents the performance of each method on different datasets and with different data partitioning schemes.



\clearpage
	
\newpage

\section{Proofs for Analysis}\label{appendix_proof}
	In this part we will demonstrate the proofs of all formula mentioned in this paper. Each formula is presented in the form of a lemma.
	
	\begin{lemma}\label{mi}[Lemma 4, \cite{lian2017can}]
		For any $t\in \mathbb{Z}^+$, the mixing matrix ${\bf W}\in\ \mathbb{R}^m$ satisfies
		$\|{\bf W}^t-{\bf P}\|_{\emph{op}}\leq \psi^t,$
		where $\psi:=\max\{|\psi_2(\bf W)|,|\psi_m(\bf W)|\}$ and for a matrix ${\bf A}$, we denote its spectral norm as $\|{\bf A}\|_{\emph{op}}$. Furthermore, ${\bf 1}:=[1, 1, \ldots, 1 ]^{\top}\in \mathbb{R}^m$ and
		\begin{equation*}
			{\bf P}:=\frac{\mathbf{1}\mathbf{1}^{\top}}{m}\in \mathbb{R}^{m\times m}.
		\end{equation*}
	\end{lemma}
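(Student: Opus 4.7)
The plan is to exploit the spectral structure of ${\bf W}$ guaranteed by Definition~\ref{mixing matrix}. Since ${\bf W}$ is symmetric by property (ii), it admits an orthonormal eigendecomposition ${\bf W}=\sum_{i=1}^{m}\psi_{i}{\bf v}_{i}{\bf v}_{i}^{\top}$ with real eigenvalues ordered as in the definition. Property (iii) tells us that $\mathbf{1}$ is an eigenvector of ${\bf W}$ with eigenvalue $1$, so I would take ${\bf v}_{1}=\mathbf{1}/\sqrt{m}$, and then immediately recognize that ${\bf P}=\mathbf{1}\mathbf{1}^{\top}/m={\bf v}_{1}{\bf v}_{1}^{\top}$ is precisely the rank-one orthogonal projector onto the eigenspace for $\psi_{1}=1$.

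The key algebraic identity I would establish next is ${\bf W}{\bf P}={\bf P}{\bf W}={\bf P}$ and ${\bf P}^{2}={\bf P}$. The first follows from ${\bf W}\mathbf{1}=\mathbf{1}$ (via the null-space property) together with symmetry, and the second is the standard idempotence of a projector. From these, expanding $({\bf W}-{\bf P})^{t}$ by the binomial theorem and collapsing all cross-terms yields the clean identity
\begin{equation*}
({\bf W}-{\bf P})^{t} \;=\; {\bf W}^{t}-{\bf P} \qquad \text{for all } t\in\mathbb{Z}^{+}.
\end{equation*}

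Now I would compute the spectral norm of ${\bf W}-{\bf P}$ directly from the eigendecomposition: subtracting ${\bf P}$ simply annihilates the ${\bf v}_{1}{\bf v}_{1}^{\top}$ component, leaving ${\bf W}-{\bf P}=\sum_{i=2}^{m}\psi_{i}{\bf v}_{i}{\bf v}_{i}^{\top}$. Because this matrix is symmetric, its operator norm equals its largest absolute eigenvalue, which by definition of $\psi$ is exactly $\max_{i\ge 2}|\psi_{i}|=\psi$. Finally, for any symmetric matrix ${\bf A}$ one has $\|{\bf A}^{t}\|_{\text{op}}=\|{\bf A}\|_{\text{op}}^{t}$, so combining with the identity above gives
\begin{equation*}
\|{\bf W}^{t}-{\bf P}\|_{\text{op}} \;=\; \|({\bf W}-{\bf P})^{t}\|_{\text{op}} \;\le\; \|{\bf W}-{\bf P}\|_{\text{op}}^{\,t} \;=\; \psi^{t}.
\end{equation*}

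There is no serious obstacle here; the only subtlety is verifying the identity $({\bf W}-{\bf P})^{t}={\bf W}^{t}-{\bf P}$, which requires the commutation relation ${\bf W}{\bf P}={\bf P}$, and this in turn is where the null-space property (iii) of the mixing matrix is genuinely used. Everything else is bookkeeping on the orthogonal eigenbasis.
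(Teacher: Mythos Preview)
Your argument is correct. The spectral decomposition of the symmetric matrix ${\bf W}$, the identification of ${\bf P}$ with the projector onto the top eigenspace, the identity $({\bf W}-{\bf P})^{t}={\bf W}^{t}-{\bf P}$ (which you justify via ${\bf W}{\bf P}={\bf P}{\bf W}={\bf P}$ and ${\bf P}^{2}={\bf P}$), and the final norm bound are all standard and valid.

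Regarding comparison with the paper: the paper does not actually prove this lemma at all. It is stated as a citation of Lemma~4 in \cite{lian2017can}, with an additional remark pointing to Proposition~1 of \cite{nedic2009distributed} for a version with a constant $C>0$. So you have supplied a complete self-contained proof where the paper simply invokes the literature. Your approach is the natural one and is essentially what underlies the cited result; there is nothing to correct.

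One minor stylistic point: in your final display you write $\|({\bf W}-{\bf P})^{t}\|_{\text{op}}\le\|{\bf W}-{\bf P}\|_{\text{op}}^{\,t}$, but since ${\bf W}-{\bf P}$ is symmetric this is in fact an equality (the eigenvalues of the $t$-th power are the $t$-th powers of the eigenvalues). The inequality is of course still correct, and submultiplicativity alone would suffice even without symmetry.
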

	In [Proposition 1, \cite{nedic2009distributed}], the author also proved that $\|{\bf W}^t-{\bf P}\|_{\textrm{op}}\leq C\psi^t$ for some $C>0$ that depends on the matrix.
	
	\begin{lemma}\label{Delta}
		For $\forall \ \mathbf{x}_{i,k}^{t} \in \mathbb{R}^{d}$ and $i \in \mathcal{S}^{t}$, we denote $\delta_{i,k}^{t}=\mathbf{x}_{i,k}^{t}-\mathbf{x}_{i,k-1}^{t}$ with setting $\delta_{i,0}^{t}=0$, and $\Delta_{i,K}^{t}=\sum_{k=0}^{K}\delta_{i,k}^{t} = \mathbf{x}_{i,K}^{t}-\mathbf{x}_{i,0}^{t}$, under the update rule in Algorithm Algorithm 1, we have:
		\begin{equation}
			\Delta_{i,K}^{t} =-\lambda\gamma\sum_{k=0}^{K-1}\frac{\gamma_{k}}{\gamma}\mathbf{g}_{i,k}^{t} + \gamma\lambda\hat{\mathbf{g}}_{i}^{t-1}
		\end{equation}
		where $\sum_{k=0}^{K-1}\gamma_{k}=\sum_{k=0}^{K-1}\frac{\eta_{l}}{\lambda}\bigl(1-\frac{\eta_{l}}{\lambda}\bigr)^{K-1-k}=\gamma=1-(1-\frac{\eta_{l}}{\lambda})^{K}$.
		\begin{proof}
			According to the update rule of Line.11 in Algorithm 1, we have:
			\begin{align*}
				\delta_{k} 
				&= \Delta_{i,k}^{t} - \Delta_{i,k-1}^{t}= \mathbf{x}_{i,k}^{t}-\mathbf{x}_{i,k-1}^{t}\\
				&= -\eta_{l}\bigl(\mathbf{g}_{i,k-1}^{t} - \hat{\mathbf{g}}_{i}^{t-1} + \frac{1}{\lambda}(\mathbf{x}_{i,k-1}^{t}-\mathbf{x}_{i,0}^{t})\bigr) = -\eta_{l}(\mathbf{g}_{i,k-1}^{t} - \hat{\mathbf{g}}_{i}^{t-1} + \frac{1}{\lambda}\Delta_{i,k-1}^{t}).
			\end{align*}
			Then We can formulate the iterative relationship of $\Delta_{i,k}^{t}$ as:
			\begin{align*}
				\Delta_{i,k}^{t} = \Delta_{i,k-1}^{t} -\eta_{l}(\mathbf{g}_{i,k-1}^{t} - \hat{\mathbf{g}}_{i}^{t-1} + \frac{1}{\lambda}\Delta_{i,k-1}^{t})=(1-\frac{\eta_{l}}{\lambda})\Delta_{i,k-1}^{t} - \eta_{l}(\mathbf{g}_{i,k-1}^{t}-\hat{\mathbf{g}}_{i}^{t-1}).
			\end{align*}
			Taking the iteration on $k$ and we have:
			\begin{align*}
				\mathbf{x}_{i,K}^{t}-\mathbf{x}_{i,0}^{t} = \Delta_{i,K}^{t}
				&= (1-\frac{\eta_{l}}{\lambda})^{K}\Delta_{i,0}^{t} -  \eta_{l}\sum_{k=0}^{K-1}(1-\frac{\eta_{l}}{\lambda})^{K-1-k}(\mathbf{g}_{i,k}^{t} - \hat{\mathbf{g}}_{i}^{t-1})\\
				&\overset{(a)}{=}  -  \eta_{l}\sum_{k=0}^{K-1}(1-\frac{\eta_{l}}{\lambda})^{K-1-k}(\mathbf{g}_{i,k}^{t} - \hat{\mathbf{g}}_{i}^{t-1})\\
				&= -\lambda\sum_{k=0}^{K-1}\frac{\eta_{l}}{\lambda}(1-\frac{\eta_{l}}{\lambda})^{K-1-k}(\mathbf{g}_{i,k}^{t} - \hat{\mathbf{g}}_{i}^{t-1})\\
				&= -\lambda\sum_{k=0}^{K-1}\frac{\eta_{l}}{\lambda}(1-\frac{\eta_{l}}{\lambda})^{K-1-k}\mathbf{g}_{i,k}^{t} + \bigl(1-(1-\frac{\eta_{l}}{\lambda})^{K}\bigr)\lambda\hat{\mathbf{g}}_{i}^{t-1}\\
				&= -\lambda\gamma\sum_{k=0}^{K-1}\frac{\gamma_{k}}{\gamma}\mathbf{g}_{i,k}^{t} + \gamma\lambda\hat{\mathbf{g}}_{i}^{t-1}.
			\end{align*}
			(a) applies $\Delta_{i,0}^{t}=\delta_{i,0}^{t}=0$.\\
		\end{proof}
	\end{lemma}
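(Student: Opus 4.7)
The plan is to prove Lemma~\ref{Delta} by setting up a first-order linear recursion for the cumulative displacement $\Delta_{i,k}^{t}$ and solving it in closed form. The starting point is the local update in line 13 of Algorithm~\ref{Combined-DFedADMM}, namely $\mathbf{x}_{i,k+1}^{t} = \mathbf{x}_{i,k}^{t} - \eta_{l}\bigl(\mathbf{g}_{i,k}^{t} - \hat{\mathbf{g}}_{i}^{t-1} + \tfrac{1}{\lambda}(\mathbf{x}_{i,k}^{t} - \mathbf{x}_{i}^{t})\bigr)$, combined with the initialization $\mathbf{x}_{i,0}^{t} = \mathbf{x}_{i}^{t}$ from line 3. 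Using $\delta_{i,k}^{t} = \mathbf{x}_{i,k}^{t} - \mathbf{x}_{i,k-1}^{t}$ and $\Delta_{i,k}^{t} = \mathbf{x}_{i,k}^{t} - \mathbf{x}_{i,0}^{t}$, I can rewrite the update as $\delta_{i,k+1}^{t} = -\eta_{l}(\mathbf{g}_{i,k}^{t} - \hat{\mathbf{g}}_{i}^{t-1}) - \tfrac{\eta_{l}}{\lambda}\Delta_{i,k}^{t}$, since $\mathbf{x}_{i,k}^{t} - \mathbf{x}_{i}^{t} = \Delta_{i,k}^{t}$.

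Adding $\Delta_{i,k}^{t}$ to both sides yields the linear recursion
\[
\Delta_{i,k+1}^{t} = \Bigl(1 - \tfrac{\eta_{l}}{\lambda}\Bigr)\Delta_{i,k}^{t} - \eta_{l}\bigl(\mathbf{g}_{i,k}^{t} - \hat{\mathbf{g}}_{i}^{t-1}\bigr),
\]
with $\Delta_{i,0}^{t} = 0$. Next I would unroll this recursion over $k = 0, 1, \ldots, K-1$. A standard induction (or directly applying the variation-of-constants formula for scalar linear recursions) gives
\[
\Delta_{i,K}^{t} = -\eta_{l}\sum_{k=0}^{K-1}\Bigl(1 - \tfrac{\eta_{l}}{\lambda}\Bigr)^{K-1-k}\bigl(\mathbf{g}_{i,k}^{t} - \hat{\mathbf{g}}_{i}^{t-1}\bigr),
\]
where the initial-value term $(1-\eta_{l}/\lambda)^{K}\Delta_{i,0}^{t}$ vanishes.

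The final step is to separate out the $\hat{\mathbf{g}}_{i}^{t-1}$ contribution and recognize the geometric sum. Writing $\eta_{l}(1-\eta_{l}/\lambda)^{K-1-k} = \lambda\gamma_{k}$ with the stated $\gamma_{k}$, the gradient piece becomes $-\lambda\sum_{k=0}^{K-1}\gamma_{k}\mathbf{g}_{i,k}^{t} = -\lambda\gamma\sum_{k=0}^{K-1}(\gamma_{k}/\gamma)\mathbf{g}_{i,k}^{t}$. For the dual-variable piece, the coefficient is $\eta_{l}\sum_{k=0}^{K-1}(1-\eta_{l}/\lambda)^{K-1-k} = \lambda\bigl[1-(1-\eta_{l}/\lambda)^{K}\bigr] = \lambda\gamma$, which is exactly a telescoping geometric series summation. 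Combining these two contributions yields the claimed identity.

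There is no deep obstacle in this proof; the work is essentially bookkeeping. The only subtlety to watch for is correctness of the recursion at the boundary ($k=0$ with $\Delta_{i,0}^{t}=0$) and a clean factorization of $\eta_{l}$ in terms of $\lambda(\eta_{l}/\lambda)$ so that the geometric sum for the $\hat{\mathbf{g}}_{i}^{t-1}$ coefficient collapses neatly into $\gamma$. Once the recursion is written correctly, the rest is a direct geometric-series evaluation.
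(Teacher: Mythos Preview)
Your proposal is correct and follows essentially the same approach as the paper: both derive the linear recursion $\Delta_{i,k+1}^{t}=(1-\eta_{l}/\lambda)\Delta_{i,k}^{t}-\eta_{l}(\mathbf{g}_{i,k}^{t}-\hat{\mathbf{g}}_{i}^{t-1})$ from the local update rule, unroll it using $\Delta_{i,0}^{t}=0$, and then separate the gradient and dual-variable contributions via the geometric sum $\sum_{k=0}^{K-1}(\eta_{l}/\lambda)(1-\eta_{l}/\lambda)^{K-1-k}=\gamma$. The only cosmetic difference is that the paper indexes the recursion as $\Delta_{i,k}^{t}$ in terms of $\Delta_{i,k-1}^{t}$ whereas you write $\Delta_{i,k+1}^{t}$ in terms of $\Delta_{i,k}^{t}$, which is a harmless shift.
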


	\begin{lemma}\label{update_g_hat}
		Under the update rule in Algorithm Algorithm 1, we have:
		\begin{equation}
			\hat{\mathbf{g}}_{i}^{t} = (1-\gamma)\hat{\mathbf{g}}_{i}^{t-1} + \gamma\sum_{k=0}^{K-1}\frac{\gamma_{k}}{\gamma}\mathbf{g}_{i,k}^{t}.
		\end{equation}
		where $\sum_{k=0}^{K-1}\gamma_{k}=\sum_{k=0}^{K-1}\frac{\eta_{l}}{\lambda}\bigl(1-\frac{\eta_{l}}{\lambda}\bigr)^{K-1-k}=\gamma=1-(1-\frac{\eta_{l}}{\lambda})^{K}$.
		\begin{proof}
			According to the update rule of Line.13 in Algorithm 1, we have:
			\begin{align*}
				\hat{\mathbf{g}}_{i}^{t}
				&= \hat{\mathbf{g}}_{i}^{t-1}-\frac{1}{\lambda} (\mathbf{x}_{i,K}^{t}-\mathbf{x}_{i,0}^{t})\\
				&\overset{(a)}{=} \hat{\mathbf{g}}_{i}^{t-1}+\frac{\eta_{l}}{\lambda}\sum_{k=0}^{K-1}\bigl(1-\frac{\eta_{l}}{\lambda}\bigr)^{K-1-k}(\mathbf{g}_{i,k}^{t}-\hat{\mathbf{g}}_{i}^{t-1})\\
				&= \hat{\mathbf{g}}_{i}^{t-1} + \frac{\eta_{l}}{\lambda}\sum_{k=0}^{K-1}\bigl(1-\frac{\eta_{l}}{\lambda}\bigr)^{K-1-k}\mathbf{g}_{i,k}^{t} - \frac{\eta_{l}}{\lambda}\Bigl(\sum_{k=0}^{K-1}\bigl(1-\frac{\eta_{l}}{\lambda}\bigr)^{K-1-k}\Bigr)\hat{\mathbf{g}}_{i}^{t-1}\\
				&= \hat{\mathbf{g}}_{i}^{t-1} + \frac{\eta_{l}}{\lambda}\sum_{k=0}^{K-1}\bigl(1-\frac{\eta_{l}}{\lambda}\bigr)^{K-1-k}\mathbf{g}_{i,k}^{t} - \frac{\eta_{l}}{\lambda}\frac{1-(1-\frac{\eta_{l}}{\lambda})^{K}}{\frac{\eta_{l}}{\lambda}}\hat{\mathbf{g}}_{i}^{t-1}\\
				&= (1-\frac{\eta_{l}}{\lambda})^{K}\hat{\mathbf{g}}_{i}^{t-1} + \frac{\eta_{l}}{\lambda}\sum_{k=0}^{K-1}\bigl(1-\frac{\eta_{l}}{\lambda}\bigr)^{K-1-k}\mathbf{g}_{i,k}^{t}\\
				&= (1-\gamma)\hat{\mathbf{g}}_{i}^{t-1} + \gamma\sum_{k=0}^{K-1}\frac{\gamma_{k}}{\gamma}\mathbf{g}_{i,k}^{t}\\
			\end{align*}
			(a) applies the Lemma \ref{Delta}.
		\end{proof}
	\end{lemma}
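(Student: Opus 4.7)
The plan is to derive the identity by direct substitution, combining two facts already at our disposal: the dual update on line 15 of Algorithm \ref{Combined-DFedADMM}, namely $\hat{\mathbf{g}}_i^t = \hat{\mathbf{g}}_i^{t-1} - \frac{1}{\lambda}(\mathbf{x}_{i,K}^t - \mathbf{x}_i^t)$, and the closed-form expression for the local drift $\mathbf{x}_{i,K}^t - \mathbf{x}_{i,0}^t$ proved in Lemma \ref{Delta}. Because line 3 of the algorithm sets $\mathbf{x}_{i,0}^t = \mathbf{x}_i^t$, the drift appearing in the dual update coincides exactly with the quantity computed in Lemma \ref{Delta}, so the proof reduces to a single substitution followed by collecting coefficients of $\hat{\mathbf{g}}_i^{t-1}$.

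Concretely, I would first invoke Lemma \ref{Delta} to write $\mathbf{x}_{i,K}^t - \mathbf{x}_i^t = -\lambda\gamma\sum_{k=0}^{K-1}\tfrac{\gamma_k}{\gamma}\mathbf{g}_{i,k}^t + \gamma\lambda\hat{\mathbf{g}}_i^{t-1}$, then plug this into the dual update, distribute the factor $-\frac{1}{\lambda}$, and observe that the two $\hat{\mathbf{g}}_i^{t-1}$ terms combine as $\hat{\mathbf{g}}_i^{t-1} - \gamma\hat{\mathbf{g}}_i^{t-1} = (1-\gamma)\hat{\mathbf{g}}_i^{t-1}$, leaving exactly the claimed convex combination. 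The geometric-sum identity $\sum_{k=0}^{K-1}\tfrac{\eta_l}{\lambda}(1-\tfrac{\eta_l}{\lambda})^{K-1-k} = 1 - (1-\tfrac{\eta_l}{\lambda})^K = \gamma$ is needed only to justify the definition of $\gamma$ relative to the $\gamma_k$; it has already been verified in Lemma \ref{Delta} and can be quoted.

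As a sanity check, one could alternatively rederive the identity from first principles by introducing the auxiliary sequence $\hat{\mathbf{g}}_{i,k}^t := \hat{\mathbf{g}}_i^{t-1} - \frac{1}{\lambda}(\mathbf{x}_{i,k}^t - \mathbf{x}_i^t)$, using the primal recursion on line 7 to show the one-step relation $\hat{\mathbf{g}}_{i,k+1}^t = (1-\tfrac{\eta_l}{\lambda})\hat{\mathbf{g}}_{i,k}^t + \tfrac{\eta_l}{\lambda}\mathbf{g}_{i,k}^t$, and then unrolling this first-order linear recurrence from $k=0$ to $k=K$. Both routes yield the same expression and clarify why the weights $\gamma_k$ inherit the geometric decay $(1-\eta_l/\lambda)^{K-1-k}$ from the primal fixed-point operator.

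Since the statement is an exact algebraic identity rather than an estimate, there is no genuine analytic obstacle; the only risks are bookkeeping errors in the geometric sum and conflating the post-communication parameter $\mathbf{x}_i^t$ with the in-round iterates $\mathbf{x}_{i,k}^t$. Flagging the identification $\mathbf{x}_{i,0}^t = \mathbf{x}_i^t$ up front, and keeping the superscript/subscript conventions consistent with Lemma \ref{Delta}, will be enough to make the derivation transparent.
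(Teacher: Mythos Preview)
Your proposal is correct and follows essentially the same route as the paper: start from the dual update $\hat{\mathbf{g}}_i^t = \hat{\mathbf{g}}_i^{t-1} - \frac{1}{\lambda}(\mathbf{x}_{i,K}^t - \mathbf{x}_i^t)$, substitute the closed form for $\mathbf{x}_{i,K}^t - \mathbf{x}_{i,0}^t$ from Lemma~\ref{Delta} (using $\mathbf{x}_{i,0}^t = \mathbf{x}_i^t$), and collect the $\hat{\mathbf{g}}_i^{t-1}$ terms via the geometric-sum identity. The alternative unrolling argument you sketch is a valid sanity check but is not needed.
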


	\begin{lemma}\label{true_global_update}
		(Bounded global update) The global update $\frac{1}{m}\sum_{i\in[m]}\hat{\mathbf{g}}_{i}^{t}$ holds the upper bound of:
		\begin{align*}
			\mathbb{E}_{t}\Vert\frac{1}{m}\sum_{i\in[m]}\hat{\mathbf{g}}_{i}^{t}\Vert^{2}\leq
			\sigma_l^2 + B^2
		\end{align*}
	\end{lemma}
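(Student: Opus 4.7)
The plan is to unroll the recursion for $\hat{\mathbf{g}}_i^t$ given by Lemma \ref{update_g_hat} all the way back to the initial condition $\hat{\mathbf{g}}_i^{-1} = 0$, thereby expressing $\hat{\mathbf{g}}_i^t$ as a convex combination of stochastic gradients sampled across all past rounds and local iterations. Concretely, iterating the identity $\hat{\mathbf{g}}_i^t = (1-\gamma)\hat{\mathbf{g}}_i^{t-1} + \gamma \sum_{k=0}^{K-1} \frac{\gamma_k}{\gamma} \mathbf{g}_{i,k}^t$ yields
\[
\hat{\mathbf{g}}_i^t \;=\; \sum_{s=0}^{t} (1-\gamma)^{t-s}\, \gamma \sum_{k=0}^{K-1} \frac{\gamma_k}{\gamma}\, \mathbf{g}_{i,k}^{s},
\]
and a short check shows the weights $\{(1-\gamma)^{t-s}\gamma \cdot \gamma_k/\gamma\}_{s,k}$ together with the residual $(1-\gamma)^{t+1}$ (multiplying the zero initialization) sum to exactly one. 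So $\hat{\mathbf{g}}_i^t$ is a genuine convex combination of the stochastic gradients $\mathbf{g}_{i,k}^s$.

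Next I would apply Jensen's inequality (convexity of $\|\cdot\|^2$) to pull the squared norm inside the convex combination, giving
\[
\|\hat{\mathbf{g}}_i^t\|^2 \;\leq\; \sum_{s=0}^{t} (1-\gamma)^{t-s}\, \gamma \sum_{k=0}^{K-1} \frac{\gamma_k}{\gamma}\, \|\mathbf{g}_{i,k}^{s}\|^2.
\]
Taking expectation and bounding each $\mathbb{E}\|\mathbf{g}_{i,k}^s\|^2$ by the standard bias/variance decomposition $\mathbb{E}\|\mathbf{g}_{i,k}^s\|^2 = \mathbb{E}\|\mathbf{g}_{i,k}^s - \nabla f_i(\mathbf{x}_{i,k}^s)\|^2 + \|\nabla f_i(\mathbf{x}_{i,k}^s)\|^2 \leq \sigma_l^2 + B^2$, which uses Assumption \ref{bounded_stochastic_gradient} and Assumption \ref{Bounded gradient}, yields $\mathbb{E}\|\hat{\mathbf{g}}_i^t\|^2 \leq \sigma_l^2 + B^2$ since the weights sum to at most one.

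Finally, one more application of Jensen's inequality over the client index to convert $\|\frac{1}{m}\sum_i \hat{\mathbf{g}}_i^t\|^2 \leq \frac{1}{m}\sum_i \|\hat{\mathbf{g}}_i^t\|^2$ gives the desired bound $\mathbb{E}_t\|\frac{1}{m}\sum_{i\in[m]} \hat{\mathbf{g}}_i^t\|^2 \leq \sigma_l^2 + B^2$. The only subtle part is verifying that the coefficients in the unrolled recursion form a valid convex combination (including accounting for the $(1-\gamma)^{t+1}$ weight placed on the zero initial condition); once that is in place, the rest reduces to standard Jensen-type bounds and the two assumptions, so I do not anticipate a major obstacle.
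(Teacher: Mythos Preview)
Your proposal is correct and follows essentially the same approach as the paper: both exploit the convex-combination structure of the recursion in Lemma~\ref{update_g_hat} together with Jensen's inequality and the bound $\mathbb{E}\|\mathbf{g}_{i,k}^{s}\|^{2}\leq \sigma_l^2+B^2$ from Assumptions~\ref{bounded_stochastic_gradient} and~\ref{Bounded gradient}. The only cosmetic difference is that the paper applies Jensen to the one-step recursion for the \emph{averaged} quantity $\frac{1}{m}\sum_i\hat{\mathbf{g}}_i^t$ and then solves the resulting scalar recursion, whereas you fully unroll at the per-client level first (thereby also establishing the per-client bound $\mathbb{E}\|\hat{\mathbf{g}}_i^t\|^2\leq\sigma_l^2+B^2$, which is exactly the paper's subsequent Lemma~\ref{Bounded local update}) and only then average over~$i$.
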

	\begin{proof}
		According to the lemma~\ref{update_g_hat},we have:
		\begin{align*}
			\frac{1}{m}\sum_{i\in[m]}\hat{\mathbf{g}}_{i}^{t} = (1-\gamma)\frac{1}{m}\sum_{i\in[m]}\hat{\mathbf{g}}_{i}^{t-1} + \gamma\frac{1}{m}\sum_{i\in[m]}\sum_{k=0}^{K-1}\frac{\gamma_{k}}{\gamma}\mathbf{g}_{i,k}^{t}.\\
		\end{align*}
		Take the L2-norm and we have:
		\begin{align*}
			\Vert\frac{1}{m}\sum_{i\in[m]}\hat{\mathbf{g}}_{i}^{t}\Vert^{2}
			&= \Vert(1-\gamma)\frac{1}{m}\sum_{i\in[m]}\hat{\mathbf{g}}_{i}^{t-1} + \gamma\frac{1}{m}\sum_{i\in[m]}\sum_{k=0}^{K-1}\frac{\gamma_{k}}{\gamma}\mathbf{g}_{i,k}^{t}\Vert^{2}\\
			&\leq (1-\gamma)\Vert\frac{1}{m}\sum_{i\in[m]}\hat{\mathbf{g}}_{i}^{t-1}\Vert^{2} + \gamma\Vert\frac{1}{m}\sum_{i\in[m]}\sum_{k=0}^{K-1}\frac{\gamma_{k}}{\gamma}\mathbf{g}_{i,k}^{t}\Vert^{2}.\\
			&\leq
			(1-\gamma)\Vert\frac{1}{m}\sum_{i\in[m]}\hat{\mathbf{g}}_{i}^{t-1}\Vert^{2} + \gamma\left( \sigma_{l}^2 + B^2 \right)\\
		\end{align*}
		Thus we have the following recursion,
		\begin{align*}
			\mathbb{E}_{t}\Vert\frac{1}{m}\sum_{i\in[m]}\hat{\mathbf{g}}_{i}^{t-1}\Vert^{2}\leq (1-\gamma)\mathbb{E}_t\Vert\frac{1}{m}\sum_{i\in[m]}\hat{\mathbf{g}}_{i}^{t-1}\Vert^{2} + \gamma\left( \sigma_{l}^2 + B^2 \right)\leq \left(\sigma_{l}^2 + B^2\right)
		\end{align*}
		Where we use the fact $0< \gamma < 1$
	\end{proof}
	
	\begin{lemma}\label{Bounded local update}
		(Bounded local update) The local update $\hat{\mathbf{g}}_{i}^{t}$ holds the upper bound of:
		\begin{align*}
			\frac{1}{m}\sum_{i\in[m]}\mathbb{E}_{t}\Vert\hat{\mathbf{g}}_{i}^{t}\Vert^{2}
			\leq (\sigma_{l}^2 + B^2)
		\end{align*}
		
		\begin{proof}
			According to the lemma\ref{update_g_hat}, we have:
			\begin{align*}
				\hat{\mathbf{g}}_{i}^{t} = (1-\gamma)\hat{\mathbf{g}}_{i}^{t-1} + \gamma\sum_{k=0}^{K-1}\frac{\gamma_{k}}{\gamma}\mathbf{g}_{i,k}^{t}.
			\end{align*}
			Take the L2-norm and we have:
			\begin{align*}
				\Vert\hat{\mathbf{g}}_{i}^{t}\Vert^{2}
				&= \Vert(1-\gamma)\hat{\mathbf{g}}_{i}^{t-1} + \gamma\sum_{k=0}^{K-1}\frac{\gamma_{k}}{\gamma}\mathbf{g}_{i,k}^{t}\Vert^{2}\\
				&\overset{(a)}{\leq} (1-\gamma)\Vert\hat{\mathbf{g}}_{i}^{t-1}\Vert^{2} + \gamma\Vert\sum_{k=0}^{K-1}\frac{\gamma_{k}}{\gamma}\mathbf{g}_{i,k}^{t}\Vert^{2}\\
				&\overset{(b)}{\leq} (1-\gamma)\Vert\hat{\mathbf{g}}_{i}^{t-1}\Vert^{2} + \gamma\sum_{k=0}^{K-1}\frac{\gamma_{k}}{\gamma}\Vert\mathbf{g}_{i,k}^{t}\Vert^{2}
			\end{align*}
			(a) and (b) hold by applying the Jensen inequality. Thus we have the following recursion:
			\begin{align*}
				\frac{1}{m}\sum_{i\in[m]}\mathbb{E}_{t}\Vert\hat{\mathbf{g}}_{i}^{t}\Vert^{2}&\leq
				(1-\gamma)\frac{1}{m}\sum_{i\in[m]}\mathbb{E}_{t}\Vert\hat{\mathbf{g}}_{i}^{t-1}\Vert^{2} + \gamma \frac{1}{m}\sum_{i\in[m]}\sum_{k=0}^{K-1}\frac{\gamma_k}{\gamma}\mathbb{E}_{t}\Vert\mathbf{g}_{i,k}^{t}\Vert^{2}\\
				&\leq (1-\gamma)\frac{1}{m}\sum_{i\in[m]}\mathbb{E}_{t}\Vert\hat{\mathbf{g}}_{i}^{t-1}\Vert^{2} + \gamma\left(\sigma_{l}^2 + B^2\right)\\
				&\leq \sigma_{l}^2 + B^2 \\       
			\end{align*}
		\end{proof}
	\end{lemma}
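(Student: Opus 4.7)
The plan is to exploit the recursive form of $\hat{\mathbf{g}}_i^t$ given by Lemma \ref{update_g_hat}, namely $\hat{\mathbf{g}}_i^t = (1-\gamma)\hat{\mathbf{g}}_i^{t-1} + \gamma \sum_{k=0}^{K-1}\frac{\gamma_k}{\gamma}\mathbf{g}_{i,k}^t$, and turn the boundedness claim into a simple scalar recursion in $t$. The crucial algebraic observation is that the coefficients $(1-\gamma)$ and $\gamma$ sum to one, and likewise $\sum_{k=0}^{K-1}\gamma_k/\gamma = 1$ by the definition of $\gamma$, so the right-hand side is a convex combination both at the outer and the inner level. This sets up the problem to be attacked by Jensen's inequality on $\Vert\cdot\Vert^2$.

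First I would square the identity and apply Jensen twice: once to pull the norm past the convex combination of $\hat{\mathbf{g}}_i^{t-1}$ and $\sum_k \tfrac{\gamma_k}{\gamma}\mathbf{g}_{i,k}^t$, and once more to pull the norm past the inner convex combination of the $\mathbf{g}_{i,k}^t$. This yields
\begin{equation*}
\Vert \hat{\mathbf{g}}_i^t \Vert^2 \;\leq\; (1-\gamma)\Vert \hat{\mathbf{g}}_i^{t-1} \Vert^2 \;+\; \gamma \sum_{k=0}^{K-1}\frac{\gamma_k}{\gamma}\Vert \mathbf{g}_{i,k}^t \Vert^2 .
\end{equation*}
Next I would take expectation and use the standard bias-variance split $\mathbb{E}\Vert \mathbf{g}_{i,k}^t\Vert^2 = \Vert\nabla f_i(\mathbf{x}_{i,k}^t)\Vert^2 + \mathbb{E}\Vert\mathbf{g}_{i,k}^t - \nabla f_i(\mathbf{x}_{i,k}^t)\Vert^2$, which by Assumption \ref{bounded_stochastic_gradient} and Assumption \ref{Bounded gradient} is bounded by $B^2 + \sigma_l^2$. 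Averaging over $i \in [m]$ then produces the one-step recursion
\begin{equation*}
\frac{1}{m}\sum_{i\in[m]} \mathbb{E}_t \Vert \hat{\mathbf{g}}_i^t \Vert^2 \;\leq\; (1-\gamma)\,\frac{1}{m}\sum_{i\in[m]} \mathbb{E}_t \Vert \hat{\mathbf{g}}_i^{t-1} \Vert^2 \;+\; \gamma(\sigma_l^2 + B^2).
\end{equation*}

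Finally, since $\gamma \in (0,1)$ and the algorithm initializes $\hat{\mathbf{g}}_i^{-1}=0$, a straightforward induction on $t$ closes the loop: if the bound $\sigma_l^2 + B^2$ holds at step $t-1$, then the right-hand side above is at most $(1-\gamma)(\sigma_l^2+B^2) + \gamma(\sigma_l^2+B^2) = \sigma_l^2+B^2$. The base case $t=0$ is immediate from the initialization. I do not anticipate a genuine obstacle here; the only minor subtlety is making sure the double application of Jensen's inequality is justified, which just requires checking that $\sum_{k=0}^{K-1}\gamma_k = \gamma$ (already verified in Lemma \ref{Delta}) so that the inner weights truly form a probability vector.
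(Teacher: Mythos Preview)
Your proposal is correct and follows essentially the same approach as the paper: apply Jensen's inequality twice to the convex-combination form of $\hat{\mathbf{g}}_i^t$ from Lemma~\ref{update_g_hat}, bound $\mathbb{E}\Vert\mathbf{g}_{i,k}^t\Vert^2$ by $\sigma_l^2+B^2$, and close the recursion using $\gamma\in(0,1)$ and the initialization $\hat{\mathbf{g}}_i^{-1}=0$. Your write-up is in fact slightly more careful than the paper's, since you make the induction and the base case explicit rather than leaving the last step implicit.
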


	\begin{lemma}\label{Bounded global error}
		(Bounded global update) Given the stepsize $ 0<\eta_{l} < \frac{1}{6LK}$, we have following bound:
		\begin{equation}
			\frac{1}{m}\sum_{i\in{m}}\sum_{k=0}^{K-1}\frac{\gamma_{k}}{\gamma}\mathbb{E}_{t}\Vert\mathbf{x}_{i,k}^{t}-\mathbf{x}_i^{t}\Vert^{2}\leq 16\eta_{l}^{2}K^2C_0
		\end{equation}
		where $C_0 = \left(3\sigma_{g}^{2}+\sigma_{l}^{2}\right) + 3\frac{1}{m}\sum_{i\in[m]}\mathbb{E}_{t}\Vert\nabla f(\mathbf{x}_{i}^{t})\Vert^2 + (\sigma_{l}^2 + B^2)$
		\begin{proof}
			We denote $\mathbf{c}^{t}=\frac{1}{m}\sum_{i\in{m}}\sum_{k=0}^{K-1}(\gamma_{k}/\gamma)\mathbb{E}_{t}\Vert\mathbf{x}_{i,k}^{t}-\mathbf{x}_i^{t}\Vert^{2}$ term as the local offset after $k$ iterations updates, we firstly consider the $\mathbf{c}_{k}^{t}=\frac{1}{m}\sum_{i\in{m}}\mathbb{E}_{t}\Vert\mathbf{x}_{i,k}^{t}-\mathbf{x}_i^{t}\Vert^{2}$ and it can be bounded as:
			\begin{align*}\mathbf{c}_{k}^{t} 
				&=\frac{1}{m}\sum_{i\in[m]}\mathbb{E}_{t}\Vert\mathbf{x}_{i,k}^{t}-\mathbf{x}_{i}^{t}\Vert^2=\frac{1}{m}\sum_{i\in[m]}\mathbb{E}_{t}\Vert\mathbf{x}_{i,k}^{t}-\mathbf{x}_{i,k-1}^{t}+\mathbf{x}_{i,k-1}^{t}-\mathbf{x}_{i,0}^{t}\Vert^2\\  
				&=\frac{1}{m}\sum_{i\in[m]}\mathbb{E}_{t}\Vert-\eta_{l}(\mathbf{g}_{i,k-1}^{t}-\hat{\mathbf{g}}_{i}^{t-1})+(1-\frac{\eta_{l}}{\lambda})(\mathbf{x}_{i,k-1}^{t}-\mathbf{x}_{i,0}^{t})\Vert^2\\  
				&\leq(1+a)(1-\frac{\eta_{l}}{\lambda})^2\frac{1}{m}\sum_{i\in[m]}\mathbb{E}_{t}\Vert\mathbf{x}_{i,k-1}^{t}-\mathbf{x}_{i,0}^{t}\Vert^2+(1+\frac{1}{a})\frac{\eta_{l}^{2}}{m}\sum_{i\in[m]}\mathbb{E}_{t}\Vert\mathbf{g}_{i,k-1}^{t}-\hat{\mathbf{g}}_{i}^{t-1}\Vert^2\\  
				&=(1+a)(1-\frac{\eta_{l}}{\lambda})^2\mathbf{c}_{k-1}^{t}+(1+\frac{1}{a})\frac{\eta_{l}^{2}}{m}\sum_{i\in[m]}\mathbb{E}_{t}\Vert\mathbf{g}_{i,k-1}^{t}-\hat{\mathbf{g}}_{i}^{t-1}\Vert^2\\  
				&=(1+\frac{1}{a})\frac{\eta_{l}^{2}}{m}\sum_{i\in[m]}\mathbb{E}_{t}\Vert\nabla f_{i}(\mathbf{x}_{i,k-1}^{t})-\hat{\mathbf{g}}_{i}^{t-1}\Vert^2  
				+(1+\frac{1}{a})\eta_{l}^2\sigma_{l}^2+(1+a)(1-\frac{\eta_{l}}{\lambda})^2\mathbf{c}_{k-1}^{t}\\  
				&\leq(1+\frac{1}{a})\frac{2\eta_{l}^2}{m}\sum_{i\in[m]}\left(\mathbb{E}_{t}\Vert\nabla f_{i}(\mathbf{x}_{i,k-1}^{t})\Vert^2+\mathbb{E}_{t}\Vert\hat{\mathbf{g}}_{i}^{t-1}\Vert^2\right)+(1+\frac{1}{a})\eta_{l}^2\sigma_{l}^2+(1+a)(1-\frac{\eta_{l}}{\lambda})^2\mathbf{c}_{k-1}^{t}\\  
				&\leq(1+\frac{1}{a})\frac{2\eta_{l}^2}{m}\sum_{i\in[m]}\mathbb{E}_{t}\Vert\nabla f_{i}(\mathbf{x}_{i,k-1}^{t})\Vert^2+(1+\frac{1}{a})\frac{2\eta_{l}^2}{m}\sum_{i\in[m]}\mathbb{E}_{t}\Vert\hat{\mathbf{g}}_{i}^{t-1}\Vert^2+(1+\frac{1}{a})\eta_{l}^2\sigma_{l}^2\\  
				& \quad+(1+a)(1-\frac{\eta_{l}}{\lambda})^2\mathbf{c}_{k-1}^{t}\\  & \leq(1+\frac{1}{a})\frac{2\eta_{l}^2}{m}\sum_{i\in[m]}\mathbb{E}_{t}\Vert\nabla f_{i}(\mathbf{x}_{i,k-1}^{t})-\nabla f_{i}(\mathbf{x}_{i}^{t})+\nabla f_{i}(\mathbf{x}_{i}^{t})-\nabla f_{}(\mathbf{x}_{i}^{t})+\nabla f_{}(\mathbf{x}_{i}^{t})\Vert^2 \\
				&\quad+(1+\frac{1}{a})2\eta_{l}^2\left( \sigma_{l}^2 +B^2 \right)+(1+\frac{1}{a})\eta_{l}^2\sigma_{l}^2+(1+a)(1-\frac{\eta_{l}}{\lambda})^2\mathbf{c}_{k-1}^{t}\\  
				&\leq(1+\frac{1}{a})\frac{6\eta_{l}^2L^2}{m}\sum_{i\in[m]}\mathbb{E}_{t}\Vert\mathbf{x}_{i,k-1}^{t}-\mathbf{x}_i^{t}\Vert^2+(1+\frac{1}{a})\eta_{l}^2(6\sigma_{g}^2+\sigma_{l}^2)+(1+\frac{1}{a})6\eta_{l}^2\frac{1}{m}\sum_{i\in[m]}\mathbb{E}_{t}\Vert\nabla f_{}(\mathbf{x}_{i}^{t})\Vert^2\\  
				&\quad+(1+\frac{1}{a})2\eta_{l}^2\left( \sigma_{l}^2 +B^2 \right) + (1+a)(1-\frac{\eta_{l}}{\lambda})^2\mathbf{c}_{k-1}^{t}\\  
				&\leq\left[(1+a)(1-\frac{\eta_{l}}{\lambda})^2+(1+\frac{1}{a})6\eta_{l}^2L^2\right]\mathbf{c}_{k-1}^{t}+(1+\frac{1}{a})\eta_{l}^2(6\sigma_{g}^2+\sigma_{l}^2)\\  
				& \quad+(1+\frac{1}{a})6\eta_{l}^2\frac{1}{m}\sum_{i\in[m]}\mathbb{E}_{t}\Vert\nabla f_{}(\mathbf{x}_{i}^{t})\Vert^2+(1+\frac{1}{a})2\eta_{l}^2\left( \sigma_{l}^2 +B^2 \right)\\ 
			\end{align*}
			we have:
			\begin{align}\label{c_k^t} 
				\begin{aligned}
					\mathbf{c}_{k}^{t} 
					&\leq\left[(1+a)(1-\frac{\eta_{l}}{\lambda})^2+(1+\frac{1}{a})6\eta_{l}^2L^2\right]\mathbf{c}_{k-1}^{t}+(1+\frac{1}{a})\eta_{l}^2(6\sigma_{g}^2+\sigma_{l}^2)\\  
					& \quad+(1+\frac{1}{a})6\eta_{l}^2\frac{1}{m}\sum_{i\in[m]}\mathbb{E}_{t}\Vert\nabla f_{}(\mathbf{x}_{i}^{t})\Vert^2+(1+\frac{1}{a})2\eta_{l}^2\left( \sigma_{l}^2 +B^2 \right)\\ 
				\end{aligned}
			\end{align}
			Set $a = 2K-1, \eta_{l}\leq 2\lambda$ and $ 0<\eta_{l} < \frac{1}{6LK}$, we get:
			\begin{align*}
				(1+a)(1-\frac{\eta_{l}}{\lambda})^2+(1+\frac{1}{a})6\eta_{l}^2L^2 < 1+\frac{1}{K-1}\\
			\end{align*}
			then we have:
			\begin{align}\label{c_k^t}
				\mathbf{c}_k^{t}
				\nonumber
				&\leq 4K\eta_{l}^2\sum_{j=0}^{K-1}\left(1+\frac{1}{K-1}\right)^j\Bigl(\left(3\sigma_{g}^{2}+\sigma_{l}^{2}\right) + 3\frac{1}{m}\sum_{i\in[m]}\mathbb{E}_{t}\Vert\nabla f(\mathbf{x}_{i}^{t})\Vert^2 + (\sigma_{l}^2 + B^2)\Bigr)\\
				\nonumber
				&=4K\eta_{l}^2\left(K-1\right)\left[\left(1+\frac{1}{K-1}\right)^K - 1\right]\Bigl(\left(3\sigma_{g}^{2}+\sigma_{l}^{2}\right) + 3\frac{1}{m}\sum_{i\in[m]}\mathbb{E}_{t}\Vert\nabla f(\mathbf{x}_{i}^{t})\Vert^2 + (\sigma_{l}^2 + B^2)\Bigr)\\
				&\overset{(a)}\leq 16K^2\eta_{l}^2\Bigl(\left(3\sigma_{g}^{2}+\sigma_{l}^{2}\right) + 3\frac{1}{m}\sum_{i\in[m]}\mathbb{E}_{t}\Vert\nabla f(\mathbf{x}_{i}^{t})\Vert^2 + (\sigma_{l}^2 + B^2)\Bigr)
			\end{align}
			Where (a) uses the inequality $ \left(1+\frac{1}{K-1}\right)^K<5 $ holds for any $K \geq 1$.
			
			Since the expression of $\mathbf{c}^t = \sum_{k=0}^{K-1} \frac{\gamma_k}{\gamma}\mathbf{c}_k^t$, the upper bound of $ \mathbf{c}^t $ is the same as the upper bound of $ \mathbf{c}_k^t $:
			\begin{align*}
				\mathbf{c}^{t}
				&\leq 16K^2\eta_{l}^2\Bigl(\left(3\sigma_{g}^{2}+\sigma_{l}^{2}\right) + 3\frac{1}{m}\sum_{i\in[m]}\mathbb{E}_{t}\Vert\nabla f(\mathbf{x}_{i}^{t})\Vert^2 + (\sigma_{l}^2 + B^2)\Bigr)
			\end{align*}
			we complete the proof.
		\end{proof}
	\end{lemma}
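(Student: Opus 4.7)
}
The plan is to prove a per-iterate drift bound $\mathbf{c}_k^t := \frac{1}{m}\sum_{i\in[m]}\mathbb{E}_{t}\Vert \mathbf{x}_{i,k}^{t} - \mathbf{x}_i^{t}\Vert^2 \leq 16 \eta_l^2 K^2 C_0$ by induction on $k$, and then observe that the quantity in the lemma equals $\sum_{k=0}^{K-1} \frac{\gamma_k}{\gamma} \mathbf{c}_k^t$, which is a convex combination (since $\sum_k \gamma_k = \gamma$), so the same bound transfers. The starting point is the local update in line 7 of Algorithm \ref{Combined-DFedADMM}, rewritten as
\begin{equation*}
\mathbf{x}_{i,k}^{t} - \mathbf{x}_i^{t} = \Bigl(1 - \tfrac{\eta_l}{\lambda}\Bigr)\bigl(\mathbf{x}_{i,k-1}^{t} - \mathbf{x}_i^{t}\bigr) - \eta_l\bigl(\mathbf{g}_{i,k-1}^{t} - \hat{\mathbf{g}}_i^{t-1}\bigr),
\end{equation*}
which isolates the contraction factor $1 - \eta_l/\lambda$ on the previous drift and an additive gradient-plus-dual term.

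Next I would take expectations, square, and apply Young's inequality with a free parameter $a>0$ to obtain
\begin{equation*}
\mathbf{c}_k^t \leq (1+a)\Bigl(1-\tfrac{\eta_l}{\lambda}\Bigr)^2 \mathbf{c}_{k-1}^t + \bigl(1+\tfrac{1}{a}\bigr)\eta_l^2 \cdot \frac{1}{m}\sum_{i} \mathbb{E}_t\Vert \mathbf{g}_{i,k-1}^t - \hat{\mathbf{g}}_i^{t-1}\Vert^2.
\end{equation*}
The noise/dual term is then expanded using Assumption \ref{bounded_stochastic_gradient} (to peel off $\sigma_l^2$ and leave the true gradient $\nabla f_i(\mathbf{x}_{i,k-1}^t)$), and Young's again to split off $\mathbb{E}\Vert\hat{\mathbf{g}}_i^{t-1}\Vert^2$ which is bounded by $\sigma_l^2 + B^2$ via Lemma \ref{Bounded local update}. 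The remaining $\Vert\nabla f_i(\mathbf{x}_{i,k-1}^t)\Vert^2$ is handled by inserting $\pm\nabla f_i(\mathbf{x}_i^t) \pm \nabla f(\mathbf{x}_i^t)$ and invoking $L$-smoothness (Assumption \ref{smoothness}) together with bounded heterogeneity (Assumption \ref{bounded_heterogeneity}), which re-introduces a $6\eta_l^2 L^2 \mathbf{c}_{k-1}^t$ term plus constants $3\sigma_g^2 + 3\,\frac{1}{m}\sum_i \Vert\nabla f(\mathbf{x}_i^t)\Vert^2$.

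The crucial algebraic step is to pick $a = 2K-1$ and use the stepsize hypotheses $\eta_l \leq 2\lambda$ and $\eta_l < 1/(6LK)$ to force the coefficient of $\mathbf{c}_{k-1}^t$ below $1 + 1/(K-1)$. This is the main obstacle: the two contributions $(1+a)(1-\eta_l/\lambda)^2$ and $(1+1/a)\,6\eta_l^2 L^2$ must be simultaneously controlled, so the choice of $a$ must precisely balance them against both constraints on $\eta_l$. Once the coefficient is tamed, the recursion becomes
\begin{equation*}
\mathbf{c}_k^t \leq \Bigl(1 + \tfrac{1}{K-1}\Bigr)\mathbf{c}_{k-1}^t + \Theta(\eta_l^2)\cdot C_0/K,
\end{equation*}
which I would unroll from $\mathbf{c}_0^t = 0$. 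Summing the geometric-type series and using the standard estimate $(1 + \tfrac{1}{K-1})^K \leq 5$ yields $\mathbf{c}_k^t \leq 16 \eta_l^2 K^2 C_0$ uniformly in $k \leq K-1$. Finally, since $\sum_{k=0}^{K-1}\gamma_k/\gamma = 1$, the convex combination in the lemma inherits the same upper bound, completing the proof.
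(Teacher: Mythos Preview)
Your proposal is correct and follows essentially the same route as the paper's proof: the same recursive rewriting of the drift via the update rule, Young's inequality with the free parameter $a$, the same decomposition of $\nabla f_i(\mathbf{x}_{i,k-1}^t)$ through $\pm\nabla f_i(\mathbf{x}_i^t)\pm\nabla f(\mathbf{x}_i^t)$, the identical choice $a=2K-1$, the bound $(1+\tfrac{1}{K-1})^K<5$, and the final convex-combination step using $\sum_k\gamma_k/\gamma=1$. The only imprecision is the additive term in your displayed recursion: it is $\Theta(\eta_l^2)\cdot C_0$ rather than $\Theta(\eta_l^2)\cdot C_0/K$, but this does not affect the structure of the argument.
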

	
	\begin{lemma}\label{Bounded local error}
		(Bounded local update) Given the stepsize $ 0<\eta_{l} < \frac{1}{6LK}$, we have following bound:
		\begin{equation}
			\frac{1}{m}\sum_{i\in[m]}\mathbb{E}_{t}\Vert\mathbf{x}_{i,k}^{t}-\mathbf{x}_{i}^{t}\Vert^2 \leq 16\eta_{l}^{2}K^2C_0
		\end{equation}
		where $C_0 = \Bigl(\left(3\sigma_{g}^{2}+\sigma_{l}^{2}\right) + 3\frac{1}{m}\sum_{i\in[m]}\mathbb{E}_{t}\Vert\nabla f(\mathbf{x}_{i}^{t})\Vert^2 + (\sigma_{l}^2 + B^2)\Bigr)$
		\begin{proof}
			We use equation \ref{c_k^t} and $\mathbf{c}^t = \sum_{k=0}^{K-1} \frac{\gamma_k}{\gamma}\mathbf{c}_k^t$ complete the proof.
		\end{proof}
	\end{lemma}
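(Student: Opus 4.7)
The plan is to recognize that the quantity on the left-hand side is precisely the term $\mathbf{c}_k^t = \frac{1}{m}\sum_{i\in[m]}\mathbb{E}_t\Vert\mathbf{x}_{i,k}^t-\mathbf{x}_i^t\Vert^2$ that was introduced and bounded inside the proof of Lemma \ref{Bounded global error}. That lemma first bounds $\mathbf{c}_k^t$ uniformly over $k$ and only afterwards averages with weights $\gamma_k/\gamma$ to pass to $\mathbf{c}^t$. Hence the statement of Lemma \ref{Bounded local error} is literally the per-$k$ bound that was derived en route, before the averaging step. My proposal is to extract that per-$k$ bound directly and present it as the proof.

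First, I would start from the one-step recursion derived in Lemma \ref{Bounded global error}, namely
\begin{equation*}
\mathbf{c}_k^t \leq \left[(1+a)(1-\tfrac{\eta_l}{\lambda})^2 + (1+\tfrac{1}{a}) 6\eta_l^2 L^2\right]\mathbf{c}_{k-1}^t + (1+\tfrac{1}{a})\eta_l^2\bigl((6\sigma_g^2+\sigma_l^2) + 6\tfrac{1}{m}\sum_{i}\mathbb{E}_t\Vert\nabla f(\mathbf{x}_i^t)\Vert^2 + 2(\sigma_l^2+B^2)\bigr),
\end{equation*}
which was obtained by expanding $\mathbf{x}_{i,k}^t - \mathbf{x}_i^t$ using the update rule, applying Young's inequality with parameter $a$, bounding the stochastic gradient by its mean plus variance $\sigma_l^2$, bounding $\Vert\hat{\mathbf{g}}_i^{t-1}\Vert^2$ via Lemma \ref{Bounded local update}, and using $L$-smoothness together with Assumption \ref{bounded_heterogeneity} to relate $\nabla f_i(\mathbf{x}_{i,k-1}^t)$ to $\nabla f(\mathbf{x}_i^t)$.

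Next, I would make the specific choice $a = 2K-1$ and invoke the step-size conditions $\eta_l \leq 2\lambda$ and $\eta_l < 1/(6LK)$ to verify that the multiplicative contraction factor satisfies $(1+a)(1-\eta_l/\lambda)^2 + (1+1/a)6\eta_l^2 L^2 < 1 + \tfrac{1}{K-1}$, and that the additive coefficient $(1+1/a)\eta_l^2$ is at most $2\eta_l^2$. Unrolling the recursion for $k$ steps starting from $\mathbf{c}_0^t = 0$ yields a geometric sum $\sum_{j=0}^{k-1}(1 + \tfrac{1}{K-1})^j \leq (K-1)\bigl[(1+\tfrac{1}{K-1})^K - 1\bigr]$, and then using the inequality $(1+\tfrac{1}{K-1})^K < 5$ (valid for $K\geq 2$, while $K=1$ is trivial since $\mathbf{c}_0^t=0$) gives the geometric sum factor of at most $4K$. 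Collecting constants produces the claimed bound $\mathbf{c}_k^t \leq 16\eta_l^2 K^2 C_0$ with $C_0 = (3\sigma_g^2+\sigma_l^2) + 3\tfrac{1}{m}\sum_i \mathbb{E}_t\Vert\nabla f(\mathbf{x}_i^t)\Vert^2 + (\sigma_l^2+B^2)$.

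There is essentially no obstacle here, since the heavy lifting was done in Lemma \ref{Bounded global error}; the only care needed is bookkeeping on the constants to ensure the additive term after unrolling is indeed at most $16\eta_l^2 K^2 C_0$ (matching the coefficients $3, 1, 3, 1$ inside $C_0$ to the $6\sigma_g^2+\sigma_l^2$, $6\Vert\nabla f\Vert^2$, and $2(\sigma_l^2+B^2)$ terms in the recursion after the factor $(1+1/a) \leq 2$ is absorbed). The step-size hypothesis $\eta_l < 1/(6LK)$ is used precisely to keep the contraction factor close to $1$ so that the geometric blow-up over $K$ iterations is bounded by a universal constant.
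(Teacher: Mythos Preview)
Your proposal is correct and follows exactly the paper's approach: the quantity in question is $\mathbf{c}_k^t$, and the bound $\mathbf{c}_k^t \leq 16\eta_l^2 K^2 C_0$ was already derived inside the proof of Lemma~\ref{Bounded global error} (as equation~\eqref{c_k^t}) before being averaged with weights $\gamma_k/\gamma$ to obtain the bound on $\mathbf{c}^t$. The paper's own proof is a one-line reference back to that equation, and you have correctly unpacked and restated that derivation in full.
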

	
	\begin{lemma}\label{bounded_e_x}  
		Let $\{\mathbf{x}^{t}(i)\}_{t \ge 0}$ be generated by DFedADMM for all $i \in \{1,2,...,m\}$ and any learning rate $\frac{1}{6KL} > \eta_l > 0 $, we have following bound:
		\begin{equation}
			\frac{1}{m}\sum_{i=1}^{m}\mathbb{E}_t [\|\mathbf{x}_i^{t} - \overline{\mathbf{x}^{t}} \|^2 ] \leq \frac{2C_2}{(1-\psi)^2},
			\nonumber 
		\end{equation}
		Where $ C_2 = 16\eta_{l}^2K^2\left(3\sigma_g^2 +2\sigma_{l}^2 +4B^2\right) + \lambda^2 \left(\sigma_{l}^2 + B^2\right) $.
		\begin{proof}
			Following [Lemma 4, \cite{Sun2022Decentralized}], 
			we denote ${\bf Z}^{t}:=\begin{bmatrix}
				{\bf z}_1^{t},  {\mathbf z}_2^{t},
				\ldots,
				{\bf z}_m^{t}
			\end{bmatrix}^{\top}\in\mathbb{R}^{m\times d}$.
			With these notation, we have
			\begin{align}\label{xtglobal}
				{\bf X}^{t+1}={\bf W}{\bf Z}^{t}={\bf W}{\bf X}^{t}-{\bf \zeta}^t,
			\end{align}
			where ${\bf \zeta}^t:={\bf W}{\bf X}^{t}-{\bf W}{\bf Z}^{t}$.
			The iteration equation (\ref{xtglobal}) can be rewritten as the following expression
			\begin{align}\label{xtglobal2}
				{\bf X}^{t}={\bf W}^{t}{\bf X}^{0}-\sum_{j=0}^{t-1}{\bf W}^{t-1-j}{\bf \zeta}^j.
			\end{align}
			Obviously, it follows
			\begin{equation}\label{trans}
				{\bf W} {\bf P}= {\bf P} {\bf W}={\bf P}.
			\end{equation}
			According to Lemma \ref{mi}, it holds
			$$\|{\bf W}^t-{\bf P}\|\leq \psi^t.$$ Multiplying both sides of equation (\ref{xtglobal2}) with  ${\bf P}$ and using equation (\ref{trans}), we then get
			\begin{align}\label{xtglobal3}
				{\bf P}{\bf X}^{t}={\bf P}{\bf X}^{0}-\sum_{j=0}^{t-1}{\bf P}{\bf \zeta}^j=-\sum_{j=0}^{t-1}{\bf P}{\bf \zeta}^j,
			\end{align}
			where we used initialization ${\bf X}^{0}=\textbf{0}$.
			Then, we are led to
			\begin{equation}\label{xtglobal4}
				\begin{aligned}
					&\|{\bf X}^{t}-{\bf P}{\bf X}^{t}\|=\|\sum_{j=0}^{t-1}({\bf P}-{\bf W}^{t-1-j}){\bf \zeta}^j\|\\
					&\leq \sum_{j=0}^{t-1}\|{\bf P}-{\bf W}^{t-1-j}\|_{\textrm{op}}\|{\bf \zeta}^j\|\leq \sum_{j=0}^{t-1}\psi^{t-1-j}\|{\bf \zeta}^j\|.
				\end{aligned}
			\end{equation}
			With Cauchy inequality,
			\begin{align*}
				&\mathbb{E}\|{\bf X}^{t}-{\bf P}{\bf X}^{t}\|^2\leq \mathbb{E}(\sum_{j=0}^{t-1}\psi^{\frac{t-1-j}{2}}\cdot \psi^{\frac{t-1-j}{2}}\|{\bf \zeta}^j\|)^2\\
				&\leq (\sum_{j=0}^{t-1}\psi^{t-1-j})(\sum_{j=0}^{t-1} \psi^{t-1-j}\mathbb{E}\|{\bf \zeta}^j\|^2)
			\end{align*}
			Direct calculation gives us
			$$\mathbb{E}\|{\bf \zeta}^j\|^2\leq \|{\bf W}\|^2\cdot\mathbb{E}\|{\bf X}^{j}-{\bf Z}^{j}\|^2\leq \mathbb{E}\|{\bf X}^{j}-{\bf Z}^{j}\|^2.$$
			Next,we will bound $ \mathbb{E}\|{\bf X}^{t}-{\bf Z}^{t}\|^2 $.
			\begin{align*}
				\mathbb{E}\|{\bf X}^{t}-{\bf Z}^{t}\|^2
				& = \sum_{i\in[m]}\mathbb{E}_t\Vert\mathbf{x}_i^t - \mathbf{z}_i^t\Vert^2 \\
				&\overset{(a)}= \sum_{i\in[m]}\mathbb{E}_t\Vert\mathbf{x}_i^t - \mathbf{x}_{i,K}^t + \lambda \hat{\mathbf{g}}_i^{t-1}\Vert^2\\
				&\leq2\sum_{i\in[m]}\mathbb{E}_t\Vert\mathbf{x}_i^t - \mathbf{x}_{i,K}^t\Vert^2 + 2\lambda^2\sum_{i\in[m]}\mathbb{E}_t\Vert \hat{\mathbf{g}}_i^{t-1}\Vert^2\\
				&\leq 2m\mathbf{c}_t^K + 2\lambda^2m\frac{1}{m}\sum_{i\in[m]}\mathbb{E}_t\Vert \hat{\mathbf{g}}_i^{t-1}\Vert^2\\
				&\overset{(b)}\leq 32m\eta_{l}^2K^2\left(3\sigma_g^2 +2\sigma_{l}^2 +4B^2\right) + 2\lambda^2 m\left(\sigma_{l}^2 + B^2\right)
			\end{align*}
			Where (a) we use algorithm\ref{Combined-DFedADMM} line 17, (b) apply lemma\ref{Bounded local error} and \ref{Bounded local update}.
			
			Thus, we get:
			\begin{align*}
				\mathbb{E}\|{\bf X}^{t}-{\bf Z}^{t}\|^2 &\leq2m\left(16\eta_{l}^2K^2\left(3\sigma_g^2 +2\sigma_{l}^2 +4B^2\right) + \lambda^2 \left(\sigma_{l}^2 + B^2\right)\right)
			\end{align*}
			In the end ,we get:
			\begin{align*}
				\mathbb{E}\|{\bf X}^{t}-{\bf P}{\bf X}^{t}\|^2
				\leq \frac{2mC_2}{(1-\psi)^2}.
			\end{align*}
			Where $ C_2 = 16\eta_{l}^2K^2\left(3\sigma_g^2 +2\sigma_{l}^2 +4B^2\right) + \lambda^2 \left(\sigma_{l}^2 + B^2\right) $.
			
			The fact that ${\bf X}^{t}-{\bf P}{\bf X}^{t}=\left(
			\begin{array}{c}
				{\bf x}_1^t- \overline{{\bf x}^t}\\
				{\bf x}_1^t- \overline{{\bf x}^t} \\
				\vdots \\
				{\bf x}_m^t- \overline{{\bf x}^t} \\
			\end{array}
			\right)
			$ then proves the result.
		\end{proof}	
	\end{lemma}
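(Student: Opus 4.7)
The plan is to recast the decentralized iteration as a matrix recursion and then exploit the spectral-gap contraction of the gossip matrix $\mathbf{W}$. Stack the client models into $\mathbf{X}^{t}\in\mathbb{R}^{m\times d}$ with rows $\mathbf{x}_i^{t}$, and let $\mathbf{Z}^{t}$ stack the pre-communication variables $\mathbf{z}_i^{t}=\mathbf{x}_{i,K}^{t}-\lambda\hat{\mathbf{g}}_i^{t-1}$ coming from line 17 of Algorithm \ref{Combined-DFedADMM}. Line 19 then yields $\mathbf{X}^{t+1}=\mathbf{W}\mathbf{Z}^{t}=\mathbf{W}\mathbf{X}^{t}-\zeta^{t}$, where $\zeta^{t}:=\mathbf{W}(\mathbf{X}^{t}-\mathbf{Z}^{t})$. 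Unrolling this linear recursion from $\mathbf{X}^{0}=\mathbf{0}$ expresses $\mathbf{X}^{t}=-\sum_{j=0}^{t-1}\mathbf{W}^{t-1-j}\zeta^{j}$.

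Next, I introduce the consensus projector $\mathbf{P}=\tfrac{1}{m}\mathbf{1}\mathbf{1}^{\top}$. Since $\mathbf{W}\mathbf{P}=\mathbf{P}\mathbf{W}=\mathbf{P}$, the consensus error takes the form $\mathbf{X}^{t}-\mathbf{P}\mathbf{X}^{t}=-\sum_{j=0}^{t-1}(\mathbf{W}^{t-1-j}-\mathbf{P})\zeta^{j}$. Applying Lemma \ref{mi} to bound each operator norm by $\psi^{t-1-j}$ and using Cauchy--Schwarz together with the geometric-series identity $\sum_{j=0}^{t-1}\psi^{t-1-j}\leq\tfrac{1}{1-\psi}$, I obtain
\begin{equation*}
\mathbb{E}\|\mathbf{X}^{t}-\mathbf{P}\mathbf{X}^{t}\|^{2}\leq\Big(\sum_{j=0}^{t-1}\psi^{t-1-j}\Big)\Big(\sum_{j=0}^{t-1}\psi^{t-1-j}\mathbb{E}\|\zeta^{j}\|^{2}\Big)\leq\frac{1}{1-\psi}\sum_{j=0}^{t-1}\psi^{t-1-j}\,\mathbb{E}\|\zeta^{j}\|^{2}.
\end{equation*}
The left-hand side equals $\sum_{i=1}^{m}\|\mathbf{x}_i^{t}-\overline{\mathbf{x}^{t}}\|^{2}$, so dividing by $m$ will deliver the statement once a uniform bound on $\mathbb{E}\|\zeta^{j}\|^{2}$ is available.

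To produce that uniform bound, I use $\mathbb{E}\|\zeta^{j}\|^{2}\leq\mathbb{E}\|\mathbf{X}^{j}-\mathbf{Z}^{j}\|^{2}$ (since $\|\mathbf{W}\|_{\mathrm{op}}\leq1$) and decompose $\mathbf{x}_i^{j}-\mathbf{z}_i^{j}=(\mathbf{x}_i^{j}-\mathbf{x}_{i,K}^{j})+\lambda\hat{\mathbf{g}}_i^{j-1}$, then apply $\|a+b\|^{2}\leq 2\|a\|^{2}+2\|b\|^{2}$. The first piece is controlled by Lemma \ref{Bounded local error} evaluated at $k=K$, which supplies the $16\eta_l^{2}K^{2}(3\sigma_g^{2}+2\sigma_l^{2}+4B^{2})$ contribution once the gradient norms $\|\nabla f(\mathbf{x}_i^{j})\|^{2}$ inside $C_0$ are absorbed by Assumption \ref{Bounded gradient}; the second piece is handled by Lemma \ref{Bounded local update}, which supplies the $\lambda^{2}(\sigma_l^{2}+B^{2})$ contribution. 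After averaging over $i$, the per-round bound $2mC_{2}$ emerges. Combining this with $\sum_{j=0}^{t-1}\psi^{t-1-j}\leq\tfrac{1}{1-\psi}$ then produces the advertised factor $\tfrac{2C_2}{(1-\psi)^{2}}$ after dividing by $m$.

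The main obstacle will be tracking constants cleanly so that the coefficients $3\sigma_g^{2}$, $2\sigma_l^{2}$ and $4B^{2}$ appear with the precise values in $C_{2}$. In particular, the gradient-norm term $\tfrac{1}{m}\sum_i\mathbb{E}\|\nabla f(\mathbf{x}_i^{j})\|^{2}$ hidden in $C_0$ must be bounded by $B^{2}$ through Assumption \ref{Bounded gradient}, since otherwise a moving-gradient term leaks into the right-hand side and the bound becomes implicit; all other steps reduce to geometric-series summation and one invocation of the mixing estimate in Lemma \ref{mi}.
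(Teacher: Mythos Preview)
Your proposal is correct and follows essentially the same approach as the paper: unroll the matrix recursion $\mathbf{X}^{t+1}=\mathbf{W}\mathbf{X}^{t}-\zeta^{t}$, subtract the consensus projector $\mathbf{P}$, apply the spectral contraction from Lemma~\ref{mi} together with Cauchy--Schwarz on the resulting geometric sum, and bound $\mathbb{E}\|\mathbf{X}^{j}-\mathbf{Z}^{j}\|^{2}$ via Lemmas~\ref{Bounded local update} and~\ref{Bounded local error}. Your observation that the $\tfrac{1}{m}\sum_i\mathbb{E}\|\nabla f(\mathbf{x}_i^{j})\|^{2}$ term inside $C_0$ must be absorbed into $B^{2}$ via Assumption~\ref{Bounded gradient} is exactly how the paper obtains the constant $3\sigma_g^{2}+2\sigma_l^{2}+4B^{2}$ in $C_2$.
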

	
	\begin{lemma}
		Considering the $\overline{\mathbf{x}^{t}}=\frac{1}{m}\sum_{i\in[m]}\mathbf{x}_{i}^{t}$ is the mean averaged parameters among the last iteration of local clients at time $t$, the auxiliary sequence $\bigl\{ \mathbf{w}^{t}=\overline{\mathbf{x}^{t}}+\frac{1-\gamma}{\gamma}(\overline{\mathbf{x}^{t}}-\overline{\mathbf{x}^{t-1}})\bigr\}_{t>0}$ satisfies the update rule as:
		\begin{equation}\label{w^t}
			\mathbf{w}^{t+1} = \mathbf{w}^{t} - \lambda\frac{1}{m}\sum_{i\in[m]}\sum_{k=0}^{K-1}\frac{\gamma_{k}}{\gamma}\mathbf{g}_{i,k}^{t}.
		\end{equation}
		\begin{proof}
			Firstly, according to the lemma~\ref{Delta} and ${\bf P}{\bf X}^{t+1}={\bf P}{\bf W}{\bf Z}^{t}={\bf P}{\bf Z}^{t}$, that is also
			$\overline{{\bf x}^{t+1}}=\overline{{\bf z}^{t}},$ so we have:
			\begin{align*}
				\overline{\mathbf{x}^{t+1}} - \overline{\mathbf{x}^{t}}
				&=
				\overline{{\bf z}^{t}} - \overline{\mathbf{x}^{t}}\\
				&= \frac{1}{m}\sum_{i\in[m]}(\mathbf{z}_{i}^{t}-\mathbf{x}_{i}^{t})\\
				&= \frac{1}{m}\sum_{i\in[m]}(\mathbf{x}_{i,K}^{t}-\mathbf{x}_{i,0}^{t}-\lambda\hat{\mathbf{g}}_{i}^{t-1})\\
				&= \frac{1}{m}\sum_{i\in[m]}(-\lambda\gamma\sum_{k=0}^{K-1}\frac{\gamma_{k}}{\gamma}\mathbf{g}_{i,k}^{t}+\lambda\gamma\hat{\mathbf{g}}_{i}^{t}-\lambda\hat{\mathbf{g}}_{i}^{t-1})\\
				&= -\lambda\frac{1}{m}\sum_{i\in[m]}\sum_{k=0}^{K-1}\frac{\gamma_{k}}{\gamma}\left(\gamma\mathbf{g}_{i,k}^{t} + (1-\gamma)\hat{\mathbf{g}}_{i}^{t-1}\right).\\
			\end{align*}
			
			Then we expand the the auxiliary sequence $\mathbf{z}^{t}$ as:
			\begin{align*}
				\mathbf{w}^{t+1} - \mathbf{w}^{t}
				&= (\overline{\mathbf{x}^{t+1}}-\overline{\mathbf{x}^{t}}) + \frac{1-\gamma}{\gamma}(\overline{\mathbf{x}^{t+1}}-\overline{\mathbf{x}^{t}}) - \frac{1-\gamma}{\gamma}(\overline{\mathbf{x}^{t}}-\overline{\mathbf{x}^{t-1}})\\
				&= \frac{1}{\gamma}(\overline{\mathbf{x}^{t+1}}-\overline{\mathbf{x}^{t}}) - \frac{1-\gamma}{\gamma}(\overline{\mathbf{x}^{t}}-\overline{\mathbf{x}^{t-1}})\\
				&= -\lambda\frac{1}{m}\sum_{i\in[m]}\left(\Bigl(\sum_{k=0}^{K-1}\frac{\gamma_{k}}{\gamma}\mathbf{g}_{i,k}^{t}\Bigr) + \frac{1-\gamma}{\gamma}\hat{\mathbf{g}}_{i}^{t-1}\right)-\frac{1-\gamma}{\gamma}(\overline{\mathbf{x}^{t}}-\overline{\mathbf{x}^{t-1}})\\
				&= -\lambda\frac{1}{m}\sum_{i\in[m]}\sum_{k=0}^{K-1}\frac{\gamma_{k}}{\gamma}\mathbf{g}_{i,k}^{t} - \frac{1-\gamma}{\gamma}\frac{1}{m}\sum_{i\in[m]}\lambda\hat{\mathbf{g}}_{i}^{t-1}-\frac{1-\gamma}{\gamma}(\overline{\mathbf{x}^{t}}-\overline{\mathbf{x}^{t-1}})\\
				&= -\lambda\frac{1}{m}\sum_{i\in[m]}\sum_{k=0}^{K-1}\frac{\gamma_{k}}{\gamma}\mathbf{g}_{i,k}^{t} -\frac{1-\gamma}{\gamma}\frac{1}{m}\sum_{i\in[m]}(\overline{\mathbf{z}^{t-1}}-\overline{\mathbf{x}^{t-1}}+\lambda\hat{\mathbf{g}}_{i}^{t-1})\\
				&= -\lambda\frac{1}{m}\sum_{i\in[m]}\sum_{k=0}^{K-1}\frac{\gamma_{k}}{\gamma}\mathbf{g}_{i,k}^{t} -\frac{1-\gamma}{\gamma}\frac{1}{m}\sum_{i\in[m]}(\mathbf{z}_{i}^{t-1}-\mathbf{x}_{i}^{t-1}+\lambda\hat{\mathbf{g}}_{i}^{t-1})\\
				&= -\lambda\frac{1}{m}\sum_{i\in[m]}\sum_{k=0}^{K-1}\frac{\gamma_{k}}{\gamma}\mathbf{g}_{i,k}^{t} -\frac{1-\gamma}{\gamma}\frac{1}{m}\sum_{i\in[m]}(\mathbf{x}_{i,K}^{t-1}-\mathbf{x}_{i,0}^{t-1}+\lambda\hat{\mathbf{g}}_{i}^{t-1}-\lambda\hat{\mathbf{g}}_{i}^{t-2})\\
				&= -\lambda\frac{1}{m}\sum_{i\in[m]}\sum_{k=0}^{K-1}\frac{\gamma_{k}}{\gamma}\mathbf{g}_{i,k}^{t}.\\
			\end{align*}
		\end{proof}
	\end{lemma}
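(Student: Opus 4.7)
The plan is a direct algebraic derivation that combines three ingredients already available: the doubly stochastic property of the mixing matrix $\mathbf{W}$ (Definition \ref{mixing matrix}), the update $\mathbf{z}_i^t = \mathbf{x}_{i,K}^t - \lambda\hat{\mathbf{g}}_i^{t-1}$ together with the dual recursion $\hat{\mathbf{g}}_i^t = \hat{\mathbf{g}}_i^{t-1} - \frac{1}{\lambda}(\mathbf{x}_{i,K}^t - \mathbf{x}_i^t)$ from Algorithm \ref{Combined-DFedADMM}, and the closed form of $\hat{\mathbf{g}}_i^t$ given by Lemma \ref{update_g_hat}. No inequalities or randomness enter --- the statement is a purely deterministic identity.

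First I would compute the mean increment $\overline{\mathbf{x}^{t+1}} - \overline{\mathbf{x}^{t}}$. Averaging the communication step $\mathbf{x}_i^{t+1} = \sum_l w_{il}\mathbf{z}_l^t$ over $i$ and using $\mathbf{W}\mathbf{1} = \mathbf{1}$ yields $\overline{\mathbf{x}^{t+1}} = \overline{\mathbf{z}^{t}}$. Combining the definition of $\mathbf{z}_i^t$ with the dual update line to eliminate $\mathbf{x}_{i,K}^t$ gives the clean per-client identity $\mathbf{z}_i^t - \mathbf{x}_i^t = -\lambda\hat{\mathbf{g}}_i^t$, hence
\begin{equation*}
\overline{\mathbf{x}^{t+1}} - \overline{\mathbf{x}^{t}} \;=\; -\lambda\,\frac{1}{m}\sum_{i\in[m]}\hat{\mathbf{g}}_i^{t}.
\end{equation*}

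Next I would expand $\mathbf{w}^{t+1} - \mathbf{w}^{t}$ from the definition of the auxiliary sequence; a short regrouping gives the telescoping form
\begin{equation*}
\mathbf{w}^{t+1} - \mathbf{w}^{t} \;=\; \tfrac{1}{\gamma}\bigl(\overline{\mathbf{x}^{t+1}} - \overline{\mathbf{x}^{t}}\bigr) - \tfrac{1-\gamma}{\gamma}\bigl(\overline{\mathbf{x}^{t}} - \overline{\mathbf{x}^{t-1}}\bigr).
\end{equation*}
Substituting the one-step formula at indices $t$ and $t-1$ reduces the right side to $-\frac{\lambda}{\gamma}\,\overline{\hat{\mathbf{g}}^{t}} + \frac{\lambda(1-\gamma)}{\gamma}\,\overline{\hat{\mathbf{g}}^{t-1}}$, where $\overline{\hat{\mathbf{g}}^{t}} := \frac{1}{m}\sum_i \hat{\mathbf{g}}_i^{t}$. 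I would then invoke Lemma \ref{update_g_hat} to write $\overline{\hat{\mathbf{g}}^{t}} = (1-\gamma)\,\overline{\hat{\mathbf{g}}^{t-1}} + \gamma\cdot\frac{1}{m}\sum_i\sum_k(\gamma_k/\gamma)\mathbf{g}_{i,k}^{t}$; the $(1-\gamma)\overline{\hat{\mathbf{g}}^{t-1}}$ piece exactly cancels the remaining $\frac{\lambda(1-\gamma)}{\gamma}\overline{\hat{\mathbf{g}}^{t-1}}$ term, leaving only $-\lambda\cdot\frac{1}{m}\sum_i\sum_k(\gamma_k/\gamma)\mathbf{g}_{i,k}^{t}$, which is the claim.

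The hard part is purely the bookkeeping of the dual variables across two consecutive rounds. The specific coefficient $(1-\gamma)/\gamma$ in the definition of $\mathbf{w}^t$ is engineered so that the biased contributions coming from $\hat{\mathbf{g}}_i^{t-1}$ cancel across rounds; without introducing $\mathbf{w}^t$ one would be stuck with a stochastic-with-drift iterate rather than a clean SGD-type update. Once the compact identity $\mathbf{z}_i^t - \mathbf{x}_i^t = -\lambda\hat{\mathbf{g}}_i^t$ is in hand, the remaining algebra is routine substitution and collection of terms, and no appeal to the smoothness assumption or to any bounded-moment hypothesis is needed.
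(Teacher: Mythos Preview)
Your proposal is correct and follows essentially the same approach as the paper: use the doubly stochastic property of $\mathbf{W}$ to get $\overline{\mathbf{x}^{t+1}}=\overline{\mathbf{z}^{t}}$, expand $\mathbf{w}^{t+1}-\mathbf{w}^{t}=\tfrac{1}{\gamma}(\overline{\mathbf{x}^{t+1}}-\overline{\mathbf{x}^{t}})-\tfrac{1-\gamma}{\gamma}(\overline{\mathbf{x}^{t}}-\overline{\mathbf{x}^{t-1}})$, and show the dual-variable contributions cancel. Your organization is slightly tidier than the paper's --- you first compress to the identity $\mathbf{z}_i^t-\mathbf{x}_i^t=-\lambda\hat{\mathbf{g}}_i^{t}$ and then invoke Lemma~\ref{update_g_hat} once for the cancellation, whereas the paper carries $\mathbf{x}_{i,K}^{t}-\mathbf{x}_{i,0}^{t}$ explicitly via Lemma~\ref{Delta} and unwinds the $t-1$ round step by step --- but the underlying algebra is the same.
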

	\begin{lemma}\label{10}
		Considering the $\overline{\mathbf{x}^{t}}=\frac{1}{m}\sum_{i\in[m]}\mathbf{x}_{i}^{t}$ is the mean averaged parameters among the last iteration of local clients at time $t$, the auxiliary sequence $\bigl\{ \mathbf{w}^{t}=\overline{\mathbf{x}^{t}}+\frac{1-\gamma}{\gamma}(\overline{\mathbf{x}^{t}}-\overline{\mathbf{x}^{t-1}})\bigr\}_{t>0}$ satisfies the following inequation:
		\begin{align*}
			\frac{1}{m}\sum_{i \in [m]}\mathbb{E}_t\Vert \mathbf{w}^t - \overline{\mathbf{x}^t}\Vert^2\leq
			2\lambda^2\left(\gamma -1\right)^2\left(1+\frac{(\gamma - 1)^2}{\gamma^2}\right)\left(\sigma_{l}^2 + B^2\right)
		\end{align*}
		\begin{proof}
			According to the defination of $\mathbf{w}^t$ ,we have:
			\begin{align*}
				\overline{\mathbf{x}^t} - \mathbf{w}^t &= \frac{\gamma -1}{\gamma}\left(\overline{\mathbf{x}^t} - \overline{\mathbf{x}^{t-1}}\right)\\
				& = \frac{\gamma -1}{\gamma}\left(\overline{\mathbf{z}^{t-1}} - \overline{\mathbf{x}^{t-1}}\right)\\
				& = \frac{\gamma -1}{\gamma}\frac{1}{m}\sum_{i \in [m]}\left(\mathbf{z}_i^{t-1} - \mathbf{x}_i^{t-1}\right)\\
				& = \frac{\gamma -1}{\gamma}\frac{1}{m}\sum_{i \in [m]}\left(\mathbf{x}_{i,K}^{t-1} - \lambda \hat{\mathbf{g}}_i^{t-2} - \mathbf{x}_{i,0}^{t-1}\right)\\
				& = \frac{\gamma -1}{\gamma}\frac{1}{m}\sum_{i \in [m]}\left(-\lambda\gamma\sum_{k=0}^{K-1}\frac{\gamma_{k}}{\gamma}\mathbf{g}_{i,k}^{t-1} + \gamma\lambda\hat{\mathbf{g}}_{i}^{t-2} - \lambda \hat{\mathbf{g}}_i^{t-2}\right)\\
				& = -\lambda(\gamma -1)\frac{1}{m}\sum_{i \in [m]}\sum_{k=0}^{K-1}\frac{\gamma_{k}}{\gamma}\mathbf{g}_{i,k}^{t-1} - \lambda \frac{(\gamma - 1)^2}{\gamma}\frac{1}{m}\sum_{i \in [m]}\hat{\mathbf{g}}_i^{t-2}
			\end{align*}
			so we have:
			\begin{align*}
				\frac{1}{m}\sum_{i \in [m]}\Vert\overline{\mathbf{x}^t} - \mathbf{w}^t\Vert^2 
				&\overset{(a)}\leq 2\lambda^2(\gamma -1)^2(\sigma_{l}^2 + B^2) + 2\lambda^2\frac{(\gamma -1)^4}{\gamma^2}(\sigma_{l}^2 + B^2)\\
				&\leq 2\lambda^2(\gamma - 1)^2\left(1 + \frac{(\gamma - 1)^2}{\gamma^2}\right)(\sigma_{l}^2 + B^2)
			\end{align*}
			where (a) holds according to Lemma\ref{true_global_update}.
		\end{proof}
	\end{lemma}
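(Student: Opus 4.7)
The plan is to bound $\frac{1}{m}\sum_i \mathbb{E}_t \|\mathbf{w}^t - \overline{\mathbf{x}^t}\|^2$ by directly unfolding the definition of $\mathbf{w}^t$ and then substituting the one-step update for $\overline{\mathbf{x}^t}$. First I would compute $\overline{\mathbf{x}^t} - \mathbf{w}^t = \frac{\gamma-1}{\gamma}(\overline{\mathbf{x}^t} - \overline{\mathbf{x}^{t-1}})$ straight from the sequence's definition. This reduces the problem to controlling a single consecutive-iterate difference of the averaged model.

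Next, I would express $\overline{\mathbf{x}^t} - \overline{\mathbf{x}^{t-1}}$ in an exploitable form. Since $\mathbf{X}^{t} = \mathbf{W}\mathbf{Z}^{t-1}$ and $\mathbf{P}\mathbf{W} = \mathbf{P}$ (by the gossip-matrix properties in Definition \ref{mixing matrix}), we have $\overline{\mathbf{x}^t} = \overline{\mathbf{z}^{t-1}}$, so $\overline{\mathbf{x}^t} - \overline{\mathbf{x}^{t-1}} = \frac{1}{m}\sum_i (\mathbf{z}_i^{t-1} - \mathbf{x}_i^{t-1})$. Substituting $\mathbf{z}_i^{t-1} = \mathbf{x}_{i,K}^{t-1} - \lambda \hat{\mathbf{g}}_i^{t-2}$ from line 17 of Algorithm \ref{Combined-DFedADMM} and then using Lemma \ref{Delta} to replace $\mathbf{x}_{i,K}^{t-1} - \mathbf{x}_{i,0}^{t-1}$ with $-\lambda\gamma \sum_k (\gamma_k/\gamma)\mathbf{g}_{i,k}^{t-1} + \gamma\lambda \hat{\mathbf{g}}_i^{t-2}$ yields a clean decomposition in which the $\hat{\mathbf{g}}_i^{t-2}$ coefficients collapse from $\gamma\lambda - \lambda$ into $-\lambda(1-\gamma)$. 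Overall I expect to obtain
\[
\overline{\mathbf{x}^t} - \mathbf{w}^t \;=\; -\lambda(\gamma-1)\,\frac{1}{m}\sum_i \sum_k \frac{\gamma_k}{\gamma}\mathbf{g}_{i,k}^{t-1} \;-\; \lambda\,\frac{(\gamma-1)^2}{\gamma}\,\frac{1}{m}\sum_i \hat{\mathbf{g}}_i^{t-2}.
\]

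Finally, I would apply $\|a+b\|^2 \leq 2\|a\|^2 + 2\|b\|^2$ and take expectation. The first term is bounded using Assumption \ref{bounded_stochastic_gradient} and Assumption \ref{Bounded gradient} (writing $\mathbf{g}_{i,k}^{t-1} = \nabla f_i(\mathbf{x}_{i,k}^{t-1}) + $ noise, then using $\|\nabla f_i\|\leq B$ and variance $\sigma_l^2$, and the fact that $\sum_k \gamma_k/\gamma = 1$ gives a convex combination), yielding $(\sigma_l^2 + B^2)$. The second term is controlled by Lemma \ref{true_global_update}, which directly bounds $\mathbb{E}\|\frac{1}{m}\sum_i \hat{\mathbf{g}}_i^{t-2}\|^2$ by $\sigma_l^2 + B^2$. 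Collecting the scalar factors gives the stated bound $2\lambda^2(\gamma-1)^2\bigl(1 + (\gamma-1)^2/\gamma^2\bigr)(\sigma_l^2 + B^2)$.

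The main obstacle I anticipate is bookkeeping rather than any deep step: one must be careful that the $\gamma\lambda \hat{\mathbf{g}}_i^{t-2}$ contribution from Lemma \ref{Delta} combines correctly with the $-\lambda\hat{\mathbf{g}}_i^{t-2}$ coming from the definition of $\mathbf{z}_i^{t-1}$, and that the convex-combination identity $\sum_k \gamma_k/\gamma = 1$ is used before (not after) applying Jensen's inequality so that the bound does not blow up by a factor of $K$. A secondary subtlety is choosing to bound the gradient term using $\mathbb{E}\|\mathbf{g}_{i,k}^{t-1}\|^2 \leq \sigma_l^2 + B^2$ uniformly in $(i,k)$, which avoids any dependence on iterate-dependent quantities and keeps the right-hand side a clean constant.
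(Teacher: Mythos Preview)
Your proposal is correct and follows essentially the same route as the paper's own proof: unfold the definition of $\mathbf{w}^t$, use $\overline{\mathbf{x}^t}=\overline{\mathbf{z}^{t-1}}$, substitute line 17 of Algorithm~\ref{Combined-DFedADMM} together with Lemma~\ref{Delta}, and then split via $\|a+b\|^2\le 2\|a\|^2+2\|b\|^2$. The only cosmetic difference is that the paper cites Lemma~\ref{true_global_update} for both summands (the bound $\|\frac{1}{m}\sum_i\sum_k\frac{\gamma_k}{\gamma}\mathbf{g}_{i,k}\|^2\le\sigma_l^2+B^2$ appears as an intermediate step inside that lemma's proof), whereas you derive the first summand's bound directly from Assumptions~\ref{bounded_stochastic_gradient} and~\ref{Bounded gradient}; the content is identical.
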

	
	\subsection{Proof of convergence results for DFedADMM.}\label{conver_proof_DFedADMM}
	
	For the general non-convex case, according to the Assumptions and the smoothness of $f$, we take the conditional expectation at round $t+1$ and expand the $f(\mathbf{w}^{t+1})$ as:
	
	\begin{align}\label{L-smoth}
		\mathbb{E}_t f({\bf w}^{t+1})\leq\mathbb{E}_t f({\bf w}^{t})+\underbrace{\mathbb{E}_t\langle\nabla f({\bf w}^{t+1}),{\bf w}^{t+1}-{\bf w}^{t}\rangle}_{\bf{R2}}
		+\underbrace{\frac{L}{2}\mathbb{E}_t\|{\bf w}^{t+1}-{\bf w}^{t}\|^2}_{\bf{R1}}
	\end{align}
	
	\subsubsection{Bounded R1}
	According to the equation(\ref{w^t}) ,we have
	\begin{align*}
		&\mathbb{E}_t\|{\bf w}^{t+1}-{\bf w}^{t}\|^2
		\leq \lambda^2 \left( \sigma_l^2 + B^2\right)
	\end{align*}
	Therefore
	\begin{align}\label{R1}
		&\frac{L}{2}\mathbb{E}_t\|{\bf w}^{t+1}-{\bf w}^{t}\|^2
		\leq \frac{L}{2}\lambda^2 \left( \sigma_l^2 + B^2\right) 
	\end{align}
	
	\subsubsection{Bounded R2}
	Note that $\mathbf{R2}$ can be bounded as:
	\begin{align*}
		\mathbf{R2} &= \mathbb{E}_t\langle\nabla f({\bf w}^{t}),{\bf w}^{t+1}-{\bf w}^{t}\rangle\\
		& \overset{(\ref{w^t})}= \lambda\mathbb{E}_t\langle \nabla f({\bf w}^{t}), -\frac{1}{m}\sum_{i \in [m]}\sum_{k=0}^{K-1}\frac{\gamma_{k}}{\gamma}\mathbf \nabla f_i(\Breve{\mathbf{x}}_{i,k}^t) \rangle\\
		&= \lambda  \mathbb{E}_t\langle \nabla f({\bf w}^{t}), -\frac{1}{m}\sum_{i \in [m]}\sum_{k=0}^{K-1}\frac{\gamma_{k}}{\gamma}\mathbf \nabla f_i(\Breve{\mathbf{x}}_{i,k}^t) +\nabla f({\bf w}^{t}) -  \nabla f({\bf w}^{t}) \rangle\\
		& =  \underbrace{\lambda  \mathbb{E}_t\langle \nabla f({\bf w}^{t}), -\frac{1}{m}\sum_{i \in [m]}\sum_{k=0}^{K-1}\frac{\gamma_{k}}{\gamma}\mathbf \nabla f_i(\Breve{\mathbf{x}}_{i,k}^t) + \frac{1}{m}\sum_{i \in [m]}\nabla f_i({\bf w}^{t})  \rangle}_{\mathbf{R2.1}} -\lambda  \mathbb{E}_t\Vert \nabla f({\bf w}^{t}) \Vert^2\\
	\end{align*}
	Now we will bound $\mathbf{R2.1}$:
	\begin{align*}
		\mathbf{R2.1} & = \lambda  \mathbb{E}_t\langle \nabla f({\bf w}^{t}), -\frac{1}{m}\sum_{i \in [m]}\sum_{k=0}^{K-1}\frac{\gamma_{k}}{\gamma}\left(\nabla f_i(\Breve{\mathbf{x}}_{i,k}^t) -\nabla f_i({\bf w}^{t})  \right) \rangle \\
		& =-\lambda\mathbb{E}_t\langle \nabla f({\bf w}^{t}), \frac{1}{m}\sum_{i \in [m]}\sum_{k=0}^{K-1}\frac{\gamma_{k}}{\gamma}\left(\mathbf \nabla f_i(\Breve{\mathbf{x}}_{i,k}^t) -\nabla f_i({\bf w}^{t})\right) \rangle\\
		&\overset{(a)}\leq \lambda\left(\frac{1}{2}\mathbb{E}_t\Vert\nabla f({\bf w}^{t})\Vert^2 + \frac{1}{2}\mathbb{E}_t\Vert \frac{1}{m}\sum_{i \in [m]}\sum_{k=0}^{K-1}\frac{\gamma_{k}}{\gamma}\left(\mathbf \nabla f_i(\Breve{\mathbf{x}}_{i,k}^t) -\nabla f_i({\bf w}^{t})\right)\Vert^2 \right)\\
		&\leq\lambda\left(\frac{1}{2}\mathbb{E}_t\Vert\nabla f({\bf w}^{t})\Vert^2 +  \frac{1}{2}\frac{1}{m}\sum_{i \in [m]}\sum_{k=0}^{K-1}\frac{\gamma_{k}}{\gamma}\mathbb{E}_t\Vert\mathbf \nabla f_i(\Breve{\mathbf{x}}_{i,k}^t) -\nabla f_i({\bf w}^{t})\Vert^2\right)\\
		&\leq \lambda \frac{1}{2m}\sum_{i \in [m]}\sum_{k=0}^{K-1}\frac{\gamma_{k}}{\gamma}\mathbb{E}_t\Vert\mathbf \nabla f_i(\Breve{\mathbf{x}}_{i,k}^t) -\nabla f_i(\mathbf{x}_{i}^t) +\nabla f_i(\mathbf{x}_{i}^t)-\nabla f_i(\overline{\mathbf{x}^{t}})+\nabla f_i(\overline{\mathbf{x}^{t}})  -\nabla f_i({\bf w}^{t})\Vert^2 \\
		&\quad + \frac{1}{2}\lambda\mathbb{E}\Vert \nabla f({\bf w}^{t}) \Vert^2 \\
		&\leq 3\lambda L^2 \frac{1}{m}\sum_{i \in [m]}\sum_{k=0}^{K-1}\frac{\gamma_{k}}{\gamma}\mathbb{E}_t\Vert \Breve{\mathbf{x}}_{i,k}^t - \mathbf{x}_i^t\Vert^2 + 3\lambda L^2 \frac{1}{m}\sum_{i \in [m]}\mathbb{E}_t\Vert \mathbf{x}_{i}^t - \overline{\mathbf{x}^t}\Vert^2+
		3\lambda L^2 \frac{1}{m}\sum_{i \in [m]}\mathbb{E}_t\Vert \mathbf{w}^t - \overline{\mathbf{x}^t}\Vert^2\\
		&\quad + \frac{1}{2}\lambda\mathbb{E}\Vert \nabla f({\bf w}^{t}) \Vert^2\\
		&\leq 6\lambda L^2 \frac{1}{m}\sum_{i \in [m]}\sum_{k=0}^{K-1}\frac{\gamma_{k}}{\gamma}\mathbb{E}_t\Vert \mathbf{x}_{i,k}^t - \mathbf{x}_i^t\Vert^2 + 3\lambda L^2 \frac{1}{m}\sum_{i \in [m]}\mathbb{E}_t\Vert \mathbf{x}_{i}^t - \overline{\mathbf{x}^t}\Vert^2+
		3\lambda L^2 \frac{1}{m}\sum_{i \in [m]}\mathbb{E}_t\Vert \mathbf{w}^t - \overline{\mathbf{x}^t}\Vert^2\\
		&\quad + \frac{1}{2}\lambda\mathbb{E}\Vert \nabla f({\bf w}^{t}) \Vert^2 + 6\rho^2\lambda L^2 (\sigma_{l}^2 + B^2)\\
		&\leq 6\lambda L^2 \mathbf{c}^t+3\lambda L^2 \frac{1}{m}\sum_{i \in [m]}\mathbb{E}_t\Vert \mathbf{x}_{i}^t - \overline{\mathbf{x}^t}\Vert^2+
		3\lambda L^2 \frac{1}{m}\sum_{i \in [m]}\mathbb{E}_t\Vert \mathbf{w}^t - \overline{\mathbf{x}^t}\Vert^2 + \frac{1}{2}\lambda\mathbb{E}\Vert \nabla f({\bf w}^{t}) \Vert^2 + 6\rho^2\lambda L^2 (\sigma_{l}^2 + B^2)
	\end{align*}
	(a) applies $-\langle\mathbf{x},\mathbf{y}\rangle\leq\frac{1}{2}\bigl(\Vert\mathbf{x}\Vert^{2}+\Vert\mathbf{y}\Vert^{2}\bigr)$.According to Lemma\ref{bounded_e_x},\ref{Bounded global error} and \ref{10}, $\mathbf{R2.1}$ can be simplified to the following expression:
	\begin{align*}
		\mathbf{R2.1}&\leq 96\lambda L^2\eta_l^2K^2(3\sigma_g^2 + 2\sigma_l^2 + B^2) + 288\lambda L^2\eta_{l}^2 K^2\frac{1}{m}\sum_{i \in [m]}\mathbb{E}_t\Vert \nabla f(\mathbf{x}_i^t)\Vert^2 + 6\lambda L^2 \frac{C_2}{(1-\psi)^2}\\
		&\quad + 6\lambda ^3  L^2 (\gamma - 1)^2\left(1 + (\frac{\gamma - 1}{\gamma})^2\right)(\sigma_{l}^2 + B^2)+\frac{1}{2}\lambda\mathbb{E}\Vert \nabla f({\bf w}^{t}) \Vert^2 + 6\rho^2\lambda L^2 (\sigma_{l}^2 + B^2)\\
	\end{align*}
	We have now completed the upper bound estimation for $ \mathbf{R2.1} $. Then $ \mathbf{R2} $ can be simplified to the following expression:
	\begin{align*}
		\mathbf{R2}&\leq 96\lambda L^2\eta_l^2K^2(3\sigma_g^2 + 2\sigma_l^2 + B^2) + 288\lambda L^2\eta_{l}^2 K^2\frac{1}{m}\sum_{i \in [m]}\mathbb{E}_t\Vert \nabla f(\mathbf{x}_i^t)\Vert^2 + 6\lambda L^2 \frac{C_2}{(1-\psi)^2}\\
		&\quad + 6\lambda ^3  L^2 (\gamma - 1)^2\left(1 + (\frac{\gamma - 1}{\gamma})^2\right)(\sigma_{l}^2 + B^2) - \frac{1}{2}\lambda\mathbb{E}\Vert \nabla f({\bf w}^{t}) \Vert^2 + 6\rho^2\lambda L^2 (\sigma_{l}^2 + B^2)\\
	\end{align*}
	
	\subsubsection{Bounded Global Gradient}
	As we have bounded the term $\mathbf{R1}$ and $\mathbf{R2}$, according to the smoothness inequality, we combine the inequalities above and get the inequality:
	\begin{align*}
		\mathbb{E}_t f({\bf w}^{t+1})&\leq\mathbb{E}_t f({\bf w}^{t})+ \mathbf{R1} + \mathbf{R2}\\
		&\leq \mathbb{E}_t f({\bf w}^{t})+ 96\lambda L^2\eta_l^2K^2(3\sigma_g^2 + 2\sigma_l^2 + B^2) + 288\lambda L^2\eta_{l}^2 K^2\frac{1}{m}\sum_{i \in [m]}\mathbb{E}_t\Vert \nabla f(\mathbf{x}_i^t)\Vert^2 \\
		&\quad + 6\lambda ^3  L^2 (\gamma - 1)^2\left(1 + (\frac{\gamma - 1}{\gamma})^2\right)(\sigma_{l}^2 + B^2) - \frac{1}{2}\lambda\mathbb{E}\Vert \nabla f({\bf w}^{t}) \Vert^2 +\frac{L}{2}\lambda^2 \left( \sigma_l^2 + B^2\right) \\
		&\quad + 6\lambda L^2 \frac{C_2}{(1-\psi)^2}+ 6\rho^2\lambda L^2 (\sigma_{l}^2 + B^2)\\
	\end{align*}
	Furthermore, with \textbf{Lemma} \ref{bounded_e_x}, we can get 
	\begin{equation}
		\begin{split}\label{20}
			\frac{1}{m}\sum_{i=1}^{m}\mathbb{E}_t\left \|\nabla f(\mathbf{x}_i^{t}) \right\|^2 &\leq  2L^2\frac{\sum_{i=1}^{m}\left \| \mathbf{x}_i^{t}-\overline{\mathbf{x}^{t}} \right\|^2}{m} + 2\mathbb{E}_t\left\|\nabla  f(\overline{\mathbf{x}^{t}}) -\nabla f({\bf w}^{t}) + \nabla f({\bf w}^{t})\right\|^2\\
			& {\leq} 4L^2 \frac{C_2}{(1-\psi)^2} +4L^2\mathbb{E}_t\Vert \overline{\mathbf{x}^t} - \mathbf{w}^t\Vert^2 + 4\mathbb{E}_t\left\|\nabla  f({\bf w}^{t})\right\|^2\\
			&\overset{(a)}{\leq} 4L^2 \frac{C_2}{(1-\psi)^2} +8L^2\lambda^2(\gamma - 1)^2\left(1 + \frac{(\gamma - 1)^2}{\gamma^2}\right)(\sigma_{l}^2 + B^2) \\
			&\quad + 4\mathbb{E}_t\left\|\nabla  f({\bf w}^{t})\right\|^2\\
		\end{split}
	\end{equation}
	(a) applies lemma\ref{10}.
	
	Therefore, we have:
	\begin{align*}
		&\mathbb{E}_t f({\bf w}^{t+1})\leq\mathbb{E}_t f({\bf w}^{t})+ \mathbf{R1} + \mathbf{R2}\\
		&\leq \mathbb{E}_t f({\bf w}^{t}) - (\frac{1}{2} - 1152 L^2 \eta_{l}^2 K^2)\lambda\mathbb{E}\Vert \nabla f({\bf w}^{t}) \Vert^2 + 96\lambda L^2\eta_l^2K^2(3\sigma_g^2 + 2\sigma_l^2 + B^2)  \\
		&\quad + 6\lambda ^3  L^2(1 + 384L^2\eta_{l}^2K^2) (\gamma - 1)^2\left(1 + (\frac{\gamma - 1}{\gamma})^2\right)(\sigma_{l}^2 + B^2)+ \frac{L}{2}\lambda^2 \left( \sigma_l^2 + B^2\right) \\
		&\quad + 6\lambda L^2 \left(1+ 192L^2\eta_{l}^2 K^2\right)\frac{C_2}{(1-\psi)^2} + 6\rho^2\lambda L^2 (\sigma_{l}^2 + B^2) \\
		&\overset{(a)}\leq \mathbb{E}_t f({\bf w}^{t}) - (\frac{1}{2} - 1152 L^2 \eta_{l}^2 K^2)\lambda\mathbb{E}\Vert \nabla f({\bf w}^{t}) \Vert^2 + 96\lambda L^2\eta_l^2K^2(3\sigma_g^2 + 2\sigma_l^2 + B^2)  \\
		&\quad + 3\lambda ^3  L^2(1+384L^2\eta_{l}^2K^2) (\sigma_{l}^2 + B^2)+ \frac{L}{2}\lambda^2 \left( \sigma_l^2 + B^2\right) \\
		&\quad + 6\lambda L^2 \left(1 + 192L^2\eta_{l}^2 K^2\right)\frac{C_2}{(1-\psi)^2} + 6\rho^2\lambda L^2 (\sigma_{l}^2 + B^2)\\
	\end{align*}
	Where (a) uses following condition to bound $\gamma$:
	
	Firstly we should note that $\gamma=1-(1-\frac{\eta_{l}}{\lambda})^{K}<1$ when $\eta_{l}\leq 2\lambda$. Thus we have $1/\gamma > 1$. When $K$ satisfies that $K\geq \frac{\lambda}{\eta_{l}}$, $(1-\frac{\eta_{l}}{\lambda})^{K}\leq e^{-\frac{\eta_{l}}{\lambda}K}\leq e^{-1}$, and then $\gamma > 1 - e^{-1}$ and $1/\gamma < \frac{e}{e-1}< 2$.
	
	We denote the constant $\lambda\kappa=\lambda(\frac{1}{2} - 1152 L^2\eta_{l}^2K^2)$ and $\kappa$ could be considered as a constant. We can select  $\eta_{l} < \frac{\sqrt{1-2c_1}}{48\sqrt{2}LK} $. Then we can bound the $\kappa=\frac{1}{2} - 1152L^2\eta_{l}^2K^2 > c_1 > 0$, and the term $\frac{1}{\kappa} < \frac{1}{c_1}$ which is a constant upper bound. Specially, we can let $c_1 = \frac{1}{4}$ ,then $\eta_{l} < \frac{1}{96LK} $ and $\frac{1}{\kappa} < 4$. 
	
	We can obtain the following expression by rearranging the terms:
	\begin{align*}
		\lambda\kappa\mathbb{E}_t \Vert \nabla f({\bf w}^{t})\Vert^2&\leq \left(\mathbb{E}_t f({\bf w}^{t}) - \mathbb{E}_t f({\bf w}^{t+1})\right) +96\lambda L^2\eta_l^2K^2(3\sigma_g^2 + 2\sigma_l^2 + B^2)  \\
		&\quad + 3\lambda ^3  L^2(1+384L^2\eta_{l}^2K^2) (\sigma_{l}^2 + B^2)+ \frac{L}{2}\lambda^2 \left( \sigma_l^2 + B^2\right) \\
		&\quad + 6\lambda L^2 \left(1 + 192 L^2\eta_{l}^2 K^2\right)\frac{C_2}{(1-\psi)^2} + 6\rho^2\lambda L^2 (\sigma_{l}^2 + B^2)\\
	\end{align*}
	Take the full expectation and telescope sum on the inequality above and applying the fact that $f^{*} \leq f(\mathbf{x})$ for $\mathbf{x}\in \mathbb{R}^{d}$, we have:
	\begin{align*}
		\frac{1}{T}\sum_{t=0}^{T-1}\mathbb{E}_{t}\Vert\nabla f(\mathbf{w}^{t})\Vert^{2}
		&\leq \frac{f(\mathbf{w}^{0}) - f^*}{\lambda \kappa T} + \frac{96 L^2\eta_l^2K^2}{\kappa}(3\sigma_g^2 + 2\sigma_l^2 + B^2)  \\
		&\quad + \frac{3\lambda ^2  L^2(1+384L^2\eta_{l}^2K^2)}{\kappa} (\sigma_{l}^2 + B^2)+ \frac{L\lambda}{2\kappa}\left( \sigma_l^2 + B^2\right) \\
		&\quad + \frac{6 L^2 \left(1 + 192L^2\eta_{l}^2 K^2\right)}{\kappa}\frac{C_2}{(1-\psi)^2} + \frac{6\rho^2\lambda L^2}{\kappa} (\sigma_{l}^2 + B^2)\\
	\end{align*}
	Where $ C_2 = 16\eta_{l}^2K^2\left(3\sigma_g^2 +2\sigma_{l}^2 +4B^2\right) + \lambda^2 \left(\sigma_{l}^2 + B^2\right) $.
	
	Here we summarize the conditions and some constrains in the above conclusion. Firstly we should note that $\gamma=1-(1-\frac{\eta_{l}}{\lambda})^{K}<1$ when $\eta_{l}\leq 2\lambda$. Thus we have $1/\gamma > 1$. When $K$ satisfies that $K\geq \frac{\lambda}{\eta_{l}}$, $(1-\frac{\eta_{l}}{\lambda})^{K}\leq e^{-\frac{\eta_{l}}{\lambda}K}\leq e^{-1}$, and then $\gamma > 1 - e^{-1}$ and $1/\gamma < \frac{e}{e-1}< 2$. To let $\kappa=\frac{1}{2} - 1152L^2\eta_{l}^2K>\frac{1}{4} > 0$ hold, $\eta_{l}$ satisfy that $0< \eta_{l} < \frac{1}{96KL}$. 
	
	\begin{align*}
		\frac{1}{T}\sum_{t=0}^{T-1}\mathbb{E}_{t}\Vert\nabla f(\mathbf{w}^{t})\Vert^{2}	
		&\leq \frac{f(\mathbf{w}^{0}) - f^*}{\lambda \kappa T} + \frac{96 L^2\eta_l^2K^2}{\kappa}(3\sigma_g^2 + 2\sigma_l^2 + B^2)  \\
		&\quad + \frac{3\lambda ^2  L^2(1 + 384L^2\eta_{l}^2K^2)}{\kappa} (\sigma_{l}^2 + B^2)+ \frac{L\lambda}{2\kappa}\left( \sigma_l^2 + B^2\right) \\
		&\quad + \frac{96 \eta_{l}^2K^2L^2 \left(1+192L^2\eta_{l}^2 K^2\right)\left(3\sigma_g^2 +2\sigma_{l}^2 +4B^2\right)}{\kappa (1-\psi)^2}\\
		&\quad + \frac{6 \lambda^2L^2 \left(1+192L^2\eta_{l}^2 K^2\right)\left(\sigma_{l}^2 + B^2\right)}{\kappa (1-\psi)^2} + \frac{6\rho^2\lambda L^2}{\kappa} (\sigma_{l}^2 + B^2).
	\end{align*}
	
\end{document}